\documentclass{article}
\usepackage[left=1 in,right=1 in,top=1 in,bottom=1 in]{geometry}
\usepackage[pdftex]{graphicx}

\usepackage{amsfonts}
\usepackage{latexsym}
\usepackage{tabularx}
\usepackage{fancyhdr}
\usepackage{verbatim}
\usepackage{multirow}
\usepackage{framed}
\usepackage{algorithmic}
\usepackage{algorithm}
\usepackage{amsmath,amsthm,amssymb}
\usepackage[pdftex,colorlinks,citecolor=blue,linkcolor=red]{hyperref}
\usepackage{titling} % for \setlength
\usepackage{caption}
\usepackage{graphicx, subfigure}
\usepackage{enumitem}
\usepackage[table]{xcolor}
\setlength{\droptitle}{-5em}
\setlength{\jot}{7pt}  % line spacing between align equations
\usepackage{url}

\usepackage[]{authblk}

\definecolor{lightgray}{RGB}{215,215,215}

\allowdisplaybreaks[4] % allowing align equations break over two pages. 
% [1] page breaks are allowed, but avoided if possible
% ..
% [4] page breaks are allowed, more relaxed

%  numbering text as equation

%\usepackage{natbib}
\hypersetup{pdfauthor={Y.W. Park and D. Klabjan}} \hypersetup{pdftitle= Optimization via Clustering in Machine Learning}

\newcommand{\vertiii}[1]{{\left\vert\kern-0.25ex\left\vert\kern-0.25ex\left\vert #1 
    \right\vert\kern-0.25ex\right\vert\kern-0.25ex\right\vert}}

\theoremstyle{definition}

\newtheorem{lemma}{Lemma}

\newtheorem{proposition}{Proposition}
\newtheorem{corollary}{Corollary}
\theoremstyle{definition} 
\theoremstyle{definition}

\makeatletter
\newcommand{\rmnum}[1]{\romannumeral #1}
\newcommand{\Rmnum}[1]{\expandafter\@slowromancap\romannumeral #1@}
\makeatother

\title{An Aggregate and Iterative Disaggregate Algorithm with Proven Optimality in Machine Learning} 

\author[1]{Young Woong Park \thanks{Major part of the work done at Northwestern University} \thanks{ywpark@smu.edu}}
\author[2]{Diego Klabjan \thanks{d-klabjan@northwestern.edu}}
\affil[1]{Cox School of Business, Southern Methodist University, Dallas, TX, USA}
\affil[2]{Department of Industrial Engineering and Management Sciences, Northwestern University, Evanston, IL, USA}

\date{Feb 25, 2016}

\begin{document}

\maketitle

\begin{abstract}
We propose a clustering-based iterative algorithm to solve certain optimization problems in machine learning, where we start the algorithm by aggregating the original data, solving the problem on aggregated data, and then in subsequent steps gradually disaggregate the aggregated data. We apply the algorithm to common machine learning problems such as the least absolute deviation regression problem, support vector machines, and semi-supervised support vector machines. We derive model-specific data aggregation and disaggregation procedures. We also show optimality, convergence, and the optimality gap of the approximated solution in each iteration. A computational study is provided.
\end{abstract}

\section{Introduction}

In this paper, we propose a clustering-based iterative algorithm to solve certain optimization problems in machine learning when data size is large and thus it becomes impractical to use out-of-the-box algorithms. We rely on the principle of data aggregation and then subsequent disaggregations. While it is standard practice to aggregate the data and then calibrate the machine learning algorithm on aggregated data, we embed this into an iterative framework where initial aggregations are gradually disaggregated to the extent that even an optimal solution is obtainable.

Early studies in data aggregation consider transportation problems \cite{Balas:65,Evans:83}, where either demand or supply nodes are aggregated. Zipkin \cite{ZipkinPHD77} studied data aggregation for linear programming (LP) and derived error bounds of the approximate solution. There are also studies on data aggregation for 0-1 integer programming \cite{Chvatal-Hammer:77, Hallefjord-Storoy:90}. The reader is referred to Rogers \textit{et al} \cite{Rogers-etal:91} and Litvinchev and Tsurkov \cite{Litvinchev-Tsurkov-book} for comprehensive literature reviews for aggregation techniques applied for optimization problems. 

For support vector machines (SVM), there exist several works using the concept of clustering or data aggregation. Evgeniou and Pontil \cite{Evgeniou-Pontil:02} proposed a clustering algorithm that creates large size clusters for entries surrounded by the same class and small size clusters for entries in the mixed-class area. The clustering algorithm is used to preprocess the data and the clustered data is used to solve the problem. The algorithm tends to create large size clusters for entries far from the decision boundary and small size clusters for the other case. Wang \textit{et al} \cite{Wang-etal:2014} developed screening rules for SVM to discard non-support vectors that do not affect the classifier. Nath \textit{et al} \cite{Nath2006} and Doppa \textit{et al} \cite{Doppa2010} proposed a second order cone programming (SOCP) formulation for SVM based on chance constraints and clusters. The key idea of the SOCP formulations is to reduce the number of constraints (from the number of the entries to number of clusters) by defining chance constraints for clusters.

After obtaining an approximate solution by solving the optimization problem with aggregated data, a natural attempt is to use less-coarsely aggregated data, in order to obtain a finer approximation. In fact, we can do this iteratively: modify the aggregated data in each iteration based on the information at hand. This framework, which iteratively passes information between the original problem and the aggregated problem \cite{Rogers-etal:91}, is known as \textit{Iterative Aggregation Disaggregation} (IAD). The IAD framework has been applied for several optimization problems such as LP \cite{Mendelssohn:80, Shetty-Taylor:87, Vakhutinsky-etal:87} and network design \cite{Barmann:13}. In machine learning, Yu \textit{et al} \cite{Yu-etal:03,Yu-etal:05} used hierarchical micro clustering and a clustering feature tree to obtain an approximate solution for support vector machines.

In this paper, we propose a general optimization algorithm based on clustering and data aggregation, and apply it to three common machine learning problems: least absolute deviation regression (LAD), SVM, and semi-supervised support vector machines (S$^{3}$VM). The algorithm fits the IAD framework, but has additional properties shown for the selected problems in this paper. The ability to report the optimality gap and monotonic convergence to global optimum are features of our algorithm for LAD and SVM, while our algorithm guarantees optimality for S$^{3}$VM without monotonic convergence. Our work for SVM is distinguished from the work of Yu \textit{et al} \cite{Yu-etal:03,Yu-etal:05}, as we iteratively solve weighted SVM and guarantee optimality, whereas they iteratively solve the standard unweighted SVM and thus find only an approximate solution. On the other hand, it is distinguished from Evgeniou and Pontil \cite{Evgeniou-Pontil:02}, as our algorithm is iterative and guarantees global optimum, whereas they used clustering to preprocess data and obtain an approximate optimum. Nath \textit{et al} \cite{Nath2006} and Doppa \textit{et al} \cite{Doppa2010} are different because we use the typical SVM formulation within an iterative framework, whereas they propose an SOCP formulation based on chance constraints.

Our data disaggregation and cluster partitioning procedure is based on the optimality condition derived in this paper: relative location of the observations to the hyperplane (for LAD, SVM, S$^{3}$VM) and labels of the observations (for SVM, S$^{3}$VM). For example, in the SVM case, if the separating hyperplane divides a cluster, the cluster is split. The condition for S$^{3}$VM is even more involved since a single cluster can be split into four clusters. In the computational experiment, we show that our algorithm outperforms the current state-of-the-art algorithms when the data size is large. The implementation of our algorithms is based on in-memory processing, however the algorithms work also when data does not fit entirely in memory and has to be read from disk in batches. The algorithms never require the entire data set to be processed at once. Our contributions are summarized as follows.
\begin{enumerate}[noitemsep]
\item We propose a clustering-based iterative algorithm to solve certain optimization problems, where an optimality condition is derived for each problem. The proposed algorithmic framework can be applied to other problems with certain structural properties (even outside of machine learning). The algorithm is most beneficial when the time complexity of the original optimization problem is high.
\item We present model specific disaggregation and cluster partitioning procedures based on the optimality condition, which is one of the keys for achieving optimality.
\item For the selected machine learning problems, i.e., LAD and SVM, we show that the algorithm monotonically converges to a global optimum, while providing the optimality gap in each iteration. For S$^{3}$VM, we provide the optimality condition. 
\end{enumerate}

We present the algorithmic framework in Section 2 and apply it to LAD, SVM, and S$^{3}$VM in Section 3. A computational study is provided in Section 4, followed by a discussion on the characteristic of the algorithm and how to develop the algorithm for other problems in Section 5.

\section{Algorithm: Aggregate and Iterative Disaggregate (AID)}
\label{section_dab}

We start by defining a few terms. A \textit{data matrix} consists of \textit{entries} (rows) and \textit{attributes} (columns). A machine learning optimization problem needs to be solved over the data matrix. When the entries of the original data are partitioned into several sub-groups, we call the sub-groups \textit{clusters} and we require every entry of the original data to belong to exactly one cluster. Based on the clusters, an \textit{aggregated entry} is created for each cluster to represent the entries in the cluster. This aggregated entry (usually the centroid) represents one cluster, and all aggregated entries are considered in the same attribute space as the entries of the original data. The notion of the \textit{aggregated data} refers to the collection of the aggregated entries. The \textit{aggregated problem} is a similar optimization problem to the original optimization problem, based on the aggregated data instead of the original data. \textit{Declustering} is the procedure of partitioning a cluster into two or more sub-clusters.

We consider optimization problems of the type
\begin{equation}
\label{opt_problem_for_dab}
\mbox{
\begin{tabular}{lll}
$\displaystyle \min_{x, y}$ & $\displaystyle \sum_{i=1}^n f_i(x_i) + f(y)$\\
$s.t.$ & $g^1_i(x_i,y) \geq 0,$ & for every $i = 1,\cdots,n,$\\
 	 & $g^2_i(x_i) \geq 0,$ & for every $i = 1,\cdots,n,$\\
	& $g(y) \geq 0,$
\end{tabular}
}
\end{equation}
where $n$ is the number of entries of $x$, $x_i$ is $i^{\mbox{\scriptsize{th}}}$ entry of $x$, and arbitrary functions $f_i, g^1_i,$ and $g^2_i$ are defined for every $i=1,\cdots,n$. One of the common features of such problems is that the data associated with $x$ is aggregated in practice and an approximate solution can be easily obtained. Well-known problems such as LAD, SVM, and facility location fall into this category. The focus of our work is to design a computationally tractable algorithm that actually yields an optimal solution in a finite number of iterations. 

Our algorithm needs four components tailored to a particular optimization problem or a machine learning model.
\begin{enumerate}[noitemsep]
\item A \textit{definition of the aggregated data} is needed to create aggregated entries.
\item \textit{Clustering and declustering procedures (and criteria)} are needed to cluster the entries of the original data and to decluster the existing clusters.
\item An \textit{aggregated problem} (usually weighted version of the problem with the aggregated data) should be defined.
\item An \textit{optimality condition} is needed to determine whether the current solution to the aggregated problem is optimal for the original problem. 
\end{enumerate}

The overall algorithm is initialized by defining clusters of the original entries and creating aggregated data. In each iteration, the algorithm solves the aggregated problem. If the obtained solution to the aggregated problem satisfies the optimality condition, then the algorithm terminates with an optimal solution to the original problem. Otherwise, the selected clusters are declustered based on the declustering criteria and new aggregated data is created. The algorithm continues until the optimality condition is satisfied. We refer to this algorithm, which is summarized in Algorithm \ref{algo_aid}, as \textit{Aggregate and Iterative Disaggregate} (AID). Observe that the algorithm is finite as we must stop when each cluster is an entry of the original data. In the computational experiment section, we show that in practice the algorithm terminates much earlier.

\begin{algorithm}[ht]
\caption{AID (Aggregate and Iterative Disaggregate)}        
\label{algo_aid}                           
\begin{algorithmic}[1]    
\STATE Create clusters and aggregated data
\STATE \textbf{Do}
\STATE \qquad Solve aggregated problem
\STATE \qquad Check optimality condition
\STATE \qquad \textbf{if} optimality condition is violated \textbf{then} decluster the clusters and redefine aggregated data
\STATE \textbf{While} optimality condition is not satisfied
\end{algorithmic}
\end{algorithm}

In Figure \ref{fig:ex_decluster}, we illustrate the concept of the algorithm. In Figure \ref{fig:ex_decluster_1}, small circles represent the entries of the original data. They are partitioned into three clusters (large dotted circles), where the crosses represent the aggregated data (three aggregated entries). We solve the aggregated problem with the three aggregated entries in Figure \ref{fig:ex_decluster_1}. Suppose that the aggregated solution does not satisfy the optimality condition and that the declustering criteria decide to partition all three clusters. In Figure \ref{fig:ex_decluster_2}, each cluster in Figure \ref{fig:ex_decluster_1} is split into two sub-clusters. Suppose that the optimality condition is satisfied after several iterations. Then, we terminate the algorithm with guaranteed optimality. Figure \ref{fig:ex_decluster_3} represents possible final clusters after several iterations from Figure \ref{fig:ex_decluster_2}. Observe that some of the clusters in Figure \ref{fig:ex_decluster_2} remain the same in Figure \ref{fig:ex_decluster_3}, due to the fact that we selectively decluster.

\begin{figure}[ht]
     \begin{center}
        \subfigure[Initial clusters]{%
           \includegraphics[scale=0.3]{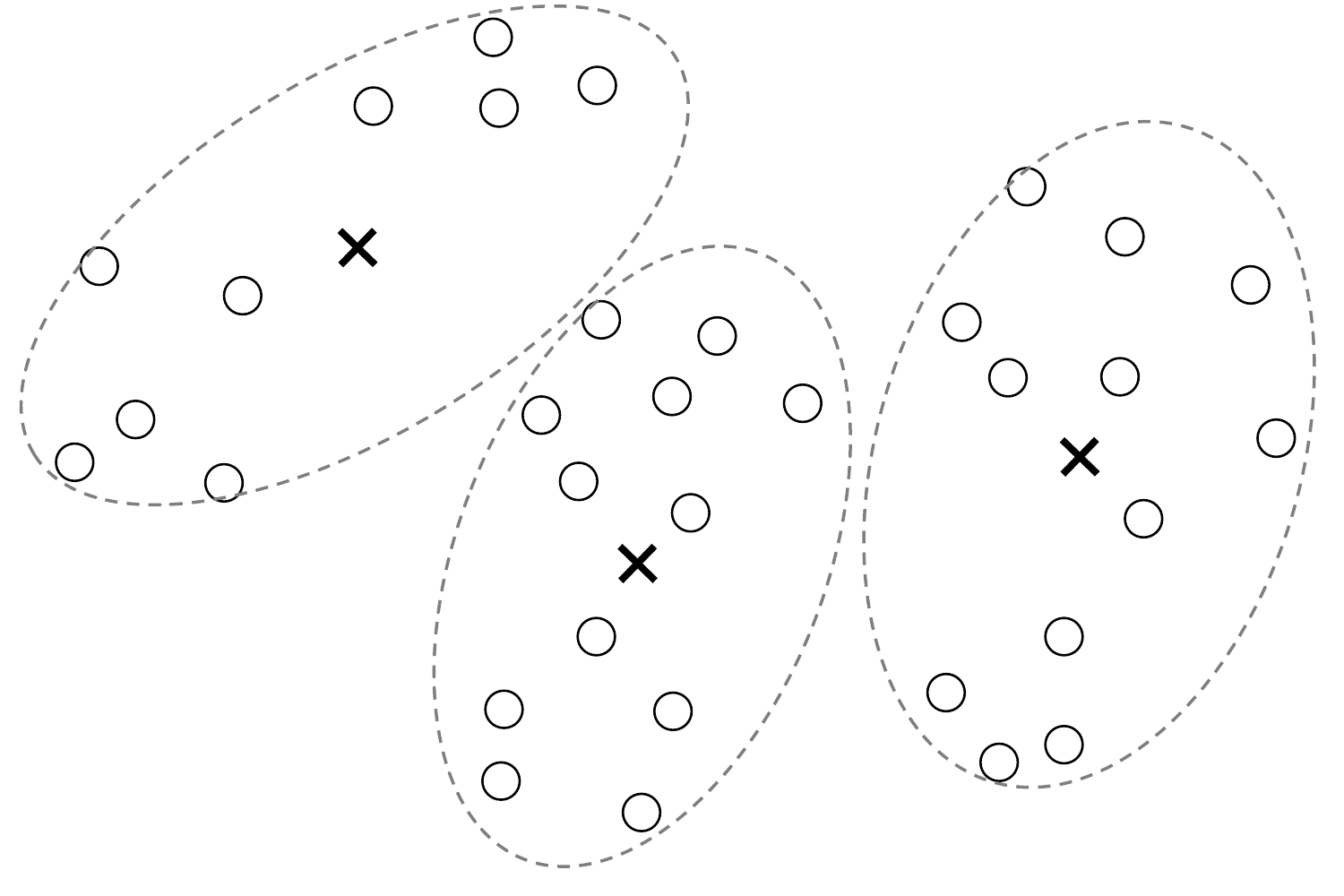} \label{fig:ex_decluster_1}
        }\qquad
        \subfigure[Declustered]{%
           \includegraphics[scale=0.3]{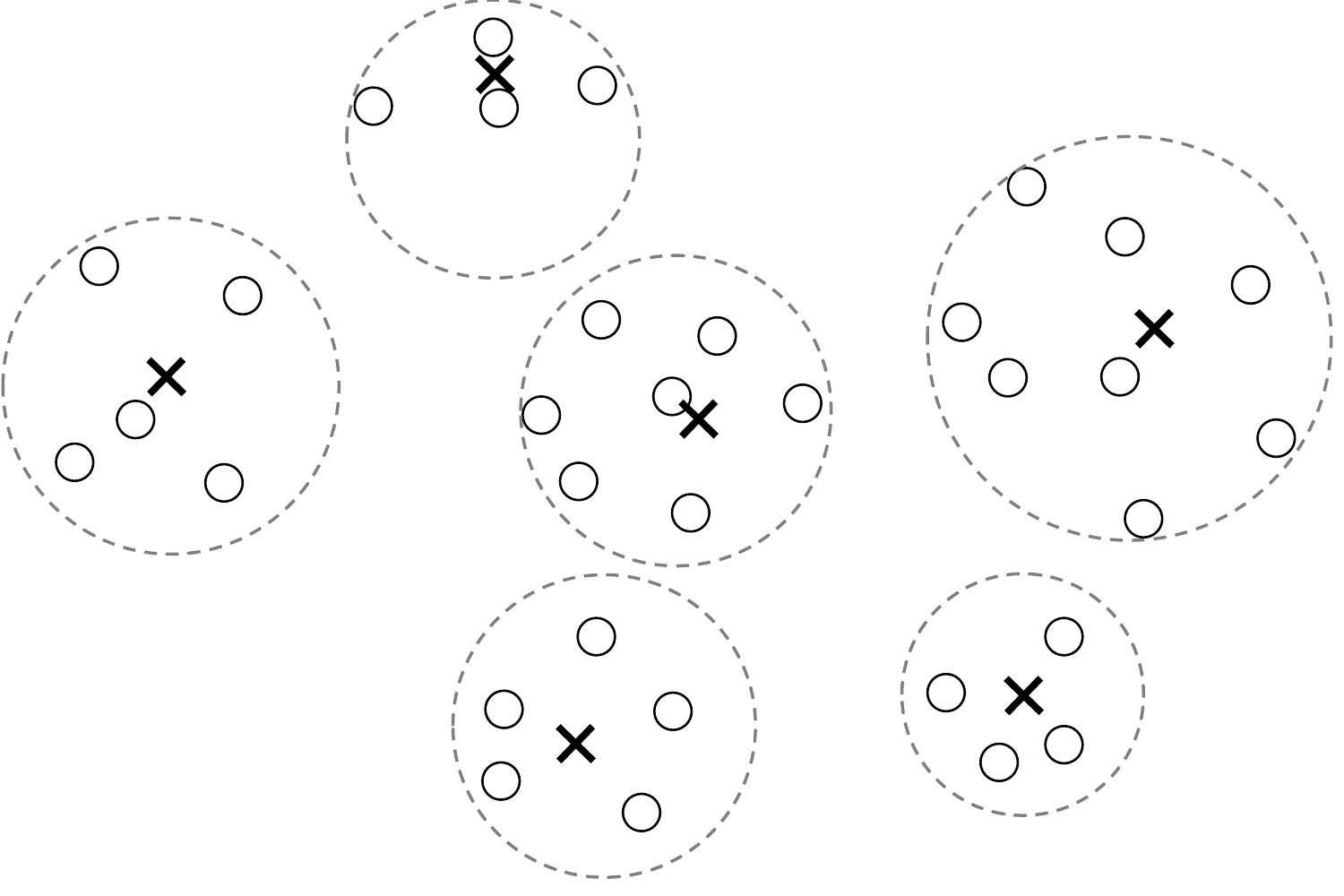} \label{fig:ex_decluster_2}
        }\qquad
        \subfigure[Final clusters]{%
           \includegraphics[scale=0.3]{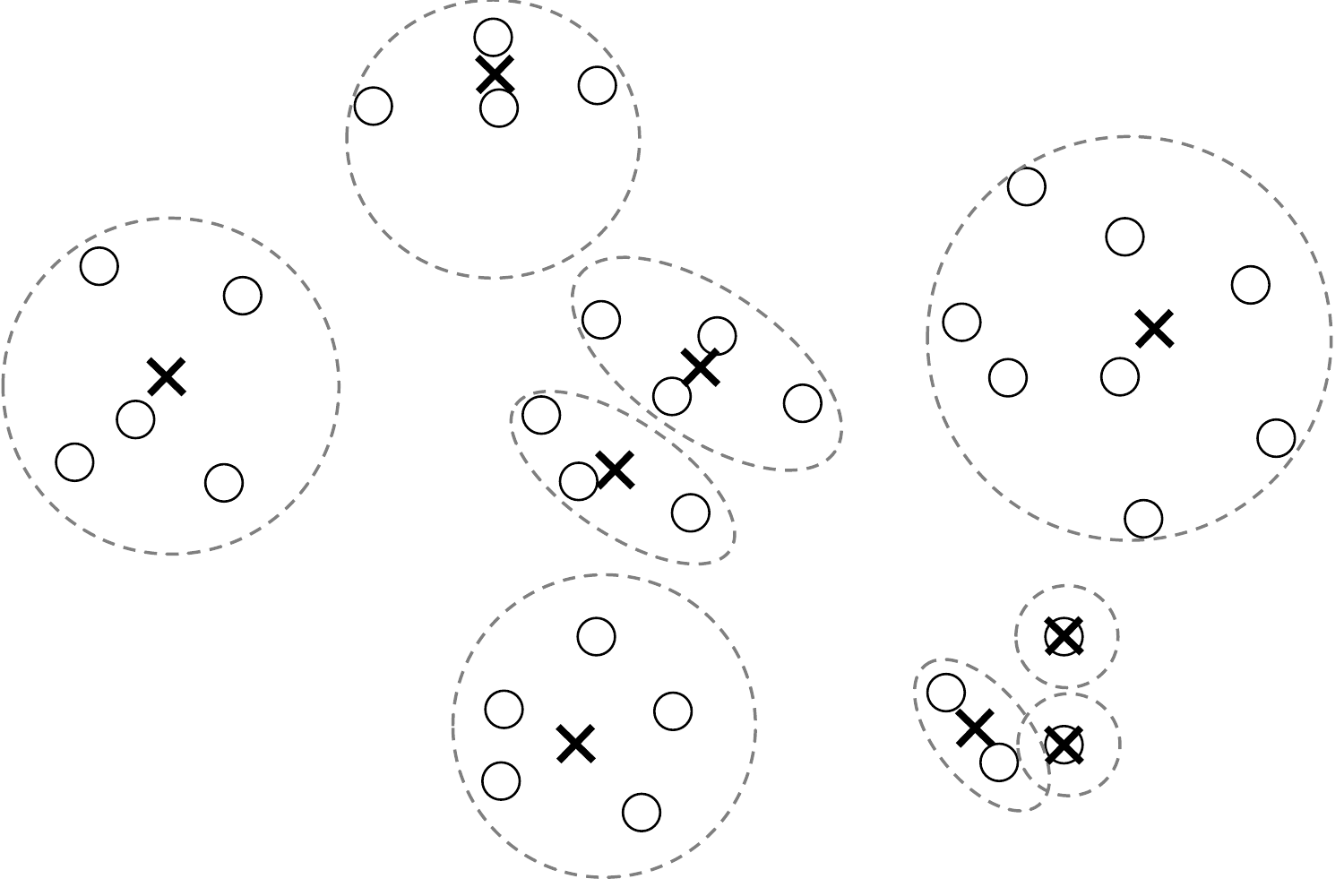} \label{fig:ex_decluster_3}
        }
    \end{center}
    \vspace{-0.5cm}
    \caption{Illustration of AID}
    \label{fig:ex_decluster}  
\end{figure}

We use the following notation in subsequent sections.
\begin{enumerate}[noitemsep]
\item[] $I = \{ 1,2,\cdots, n\}$: Index set of entries, where $n$ is the number of entries (observations)
\item[] $J = \{ 1,2,\cdots, m\}$: Index set of attributes, where $m$ is the number of attributes
\item[] $K^t = \{1,2,\cdots, |K^t| \}$: Index set of the clusters in iteration $t$
\item[] $C^t = \{C_1^t,C_2^t,\cdots,C_{|K^t|}^t\}$: Set of clusters in iteration $t$, where $C_k^t$ is a subset of $I$ for any $k$ in $K^t$
\item[] $T$: Last iteration of the algorithm when the optimality condition is satisfied
\end{enumerate}

\section{AID for Machine Learning Problems}
\label{section_dab_for_ML}

\subsection{Least Absolute Deviation Regression}
The multiple linear least absolute deviation regression problem (LAD) can be formulated as
\begin{equation}
\label{formulation_standard_regression}
 E^* = \min_{\beta \in \mathbb{R}^{m}} \sum_{i \in I} |y_i - \sum_{j \in J} x_{ij} \beta_j |,
\end{equation}
where $x = [x_{ij}] \in \mathbb{R}^{n \times m}$ is the explanatory variable data, $y = [y_i] \in \mathbb{R}^{n}$ is the response variable data, and $\beta \in \mathbb{R}^m$ is the decision variable. Since the objective function of \eqref{formulation_standard_regression} is the summation of functions over all $i$ in $I$, LAD fits \eqref{opt_problem_for_dab}, and we can use AID.

Let us first define the clustering method. Given target number of clusters $|K^0|$, any clustering algorithm can be used to partition $n$ entries into $|K^0|$ initial clusters $C^0 = \{C_1^0,C_2^0, \cdots, C_{|K^0|}^0 \}$. Given $C^t$ in iteration $t$, for each $k \in K^t$, we generate aggregated data by
\begin{center}
$\displaystyle x_{kj}^t = \frac{\sum_{i \in C_k^t} x_{ij}}{|C_k^t|}$, for all $j \in J$, and $y_k^t = \frac{\sum_{i \in C_k^t} y_i}{|C_k^t|}$,
\end{center}
where $x^t \in \mathbb{R}^{|K^t| \times m}$ and $y^t \in \mathbb{R}^{|K^t| \times 1}$. To balance the clusters with different cardinalities, we give weight $|C_k^t|$ to the absolute error associated with $C_k^t$. Hence, we solve the aggregated problem
\begin{equation}
\label{formulation_weighted_regression_iter_t}
F^t = \min_{\beta^t \in \mathbb{R}^{m}} \sum_{k \in K^t} |C_k^t| |y_k^t - \sum_{j \in J} x_{kj}^t \beta_j^t |.
\end{equation}
Observe that any feasible solution to \eqref{formulation_weighted_regression_iter_t} is a feasible solution to \eqref{formulation_standard_regression}. Let $\bar{\beta}^t$ be an optimal solution to \eqref{formulation_weighted_regression_iter_t}. Then, the objective function value of $\bar{\beta}^t$ to \eqref{formulation_standard_regression} with the original data is 
\begin{equation}
E^t = \sum_{i \in I} |y_i - \sum_{j \in J} x_{ij} \bar{\beta}_j^t |.
\end{equation}

Next, we present the declustering criteria and construction of $C^{t+1}$. Given $C^t$ and $\bar{\beta}^t$, we define the clusters for iteration $t+1$ as follows. 
\begin{enumerate}[noitemsep]
\item[] \textsf{Step 1} $C^{t+1} = \emptyset$.
\item[] \textsf{Step 2} For each  $k \in K^t$,
	\begin{enumerate}
	\item[] \textsf{Step 2(a)} If $y_i - \sum_{j \in J} x_{ij} \bar{\beta}_j^t$ for all $i \in C_k^t$ have the same sign, then $C^{t+1} \gets C^{t+1} \cup \{ C_k^t \}$.
	\item[] \textsf{Step 2(b)} Otherwise, decluster $C_k^t$ into two clusters: $C_{k+}^{t} = \{ i \in C_k^t | y_i - \sum_{j \in J} x_{ij} \bar{\beta}_j^t > 0 \}$ and $C_{k-}^{t} = \{ i \in C_k^t | y_i - \sum_{j \in J} x_{ij} \bar{\beta}_j^t \leq 0 \}$, and set $C^{t+1} \gets C^{t+1} \cup \{ C_{k+}^{t}, C_{k-}^{t} \}$.
	\end{enumerate}
\end{enumerate}

The above procedure keeps cluster $C_k^t$ if all original entries in the clusters are on the same side of the regression hyperplane. Otherwise, the procedure splits $C_k^t$ into two clusters $C_{k+}^t$ and $C_{k-}^t$, where the two clusters contain original entries on the one and the other side of the hyperplane. It is obvious that this rule implies a finite algorithm.

In Figure \ref{fig:lad}, we illustrate AID for LAD. In Figure \ref{fig:lad_1}, the small circles and crosses represent the original and aggregated entries, respectively, where the large dotted circles are the clusters associated with the aggregated entries. The straight line represents the regression line $\bar{\beta}^t$ obtained from an optimal solution to \eqref{formulation_weighted_regression_iter_t}. In Figure \ref{fig:lad_2}, the shaded and empty circles are the original entries below and above the regression line, respectively. Observe that two clusters have original entries below and above the regression line. Hence, we decluster the two clusters based on the declustering criteria and obtain new clusters and aggregated data for the next iteration in Figure \ref{fig:lad_3}.

\begin{figure}[ht]
     \begin{center}
        \subfigure[Clusters $C^t$ and $\bar{\beta}^t$]{%
           \includegraphics[scale=0.3]{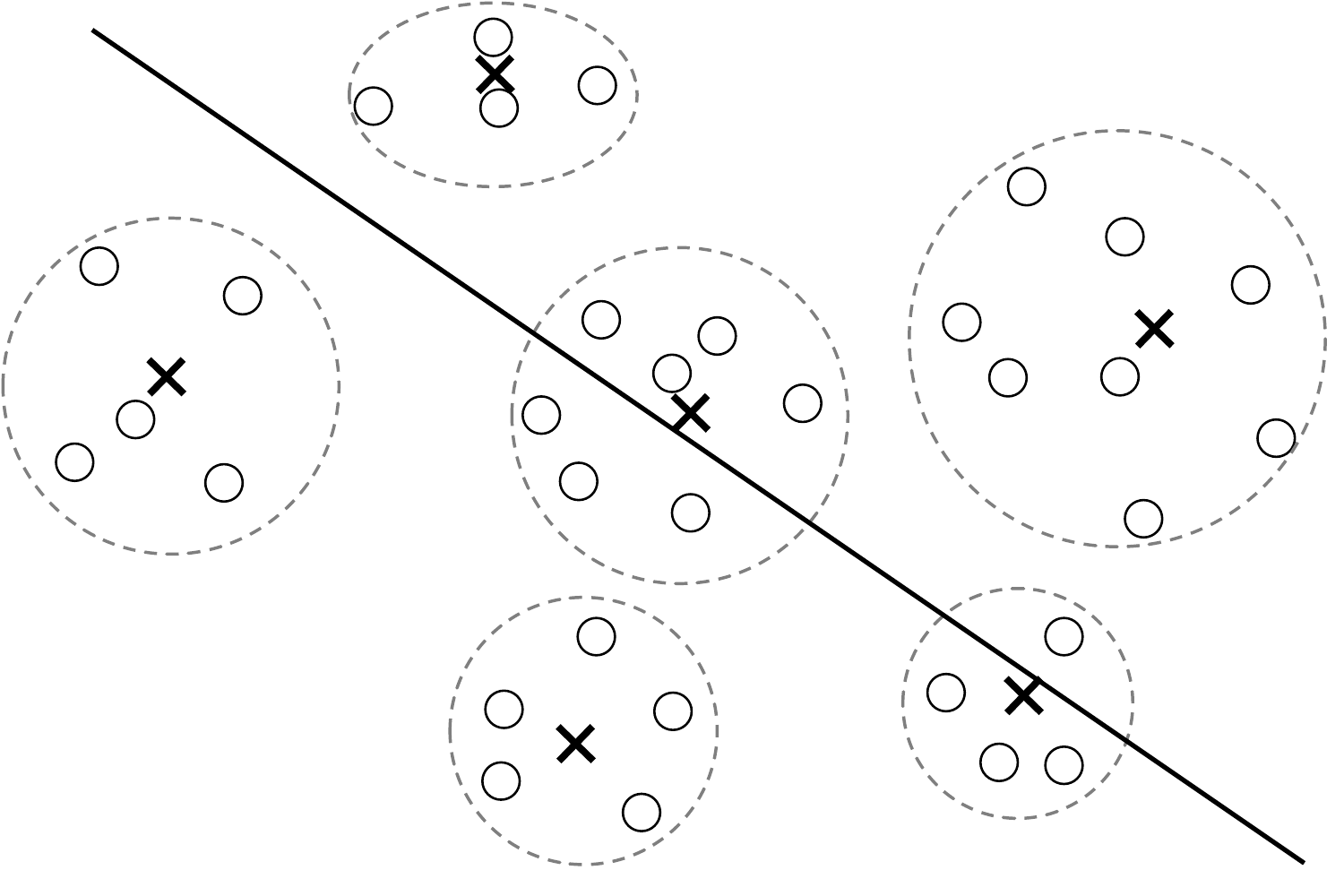} \label{fig:lad_1}
        }\qquad
        \subfigure[Declustered]{%
           \includegraphics[scale=0.3]{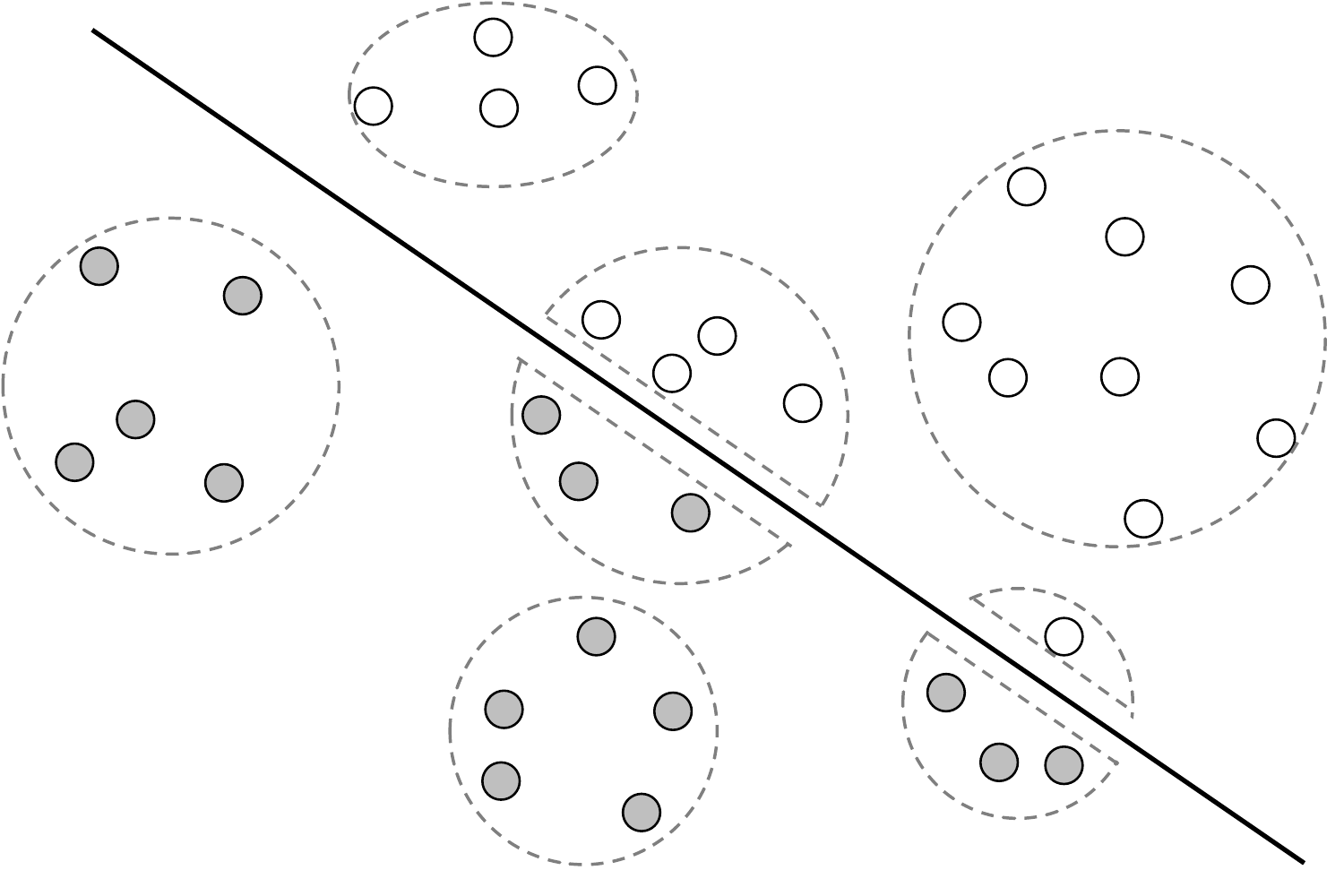} \label{fig:lad_2}
        }\qquad
        \subfigure[New clusters $C^{t+1}$]{%
           \includegraphics[scale=0.3]{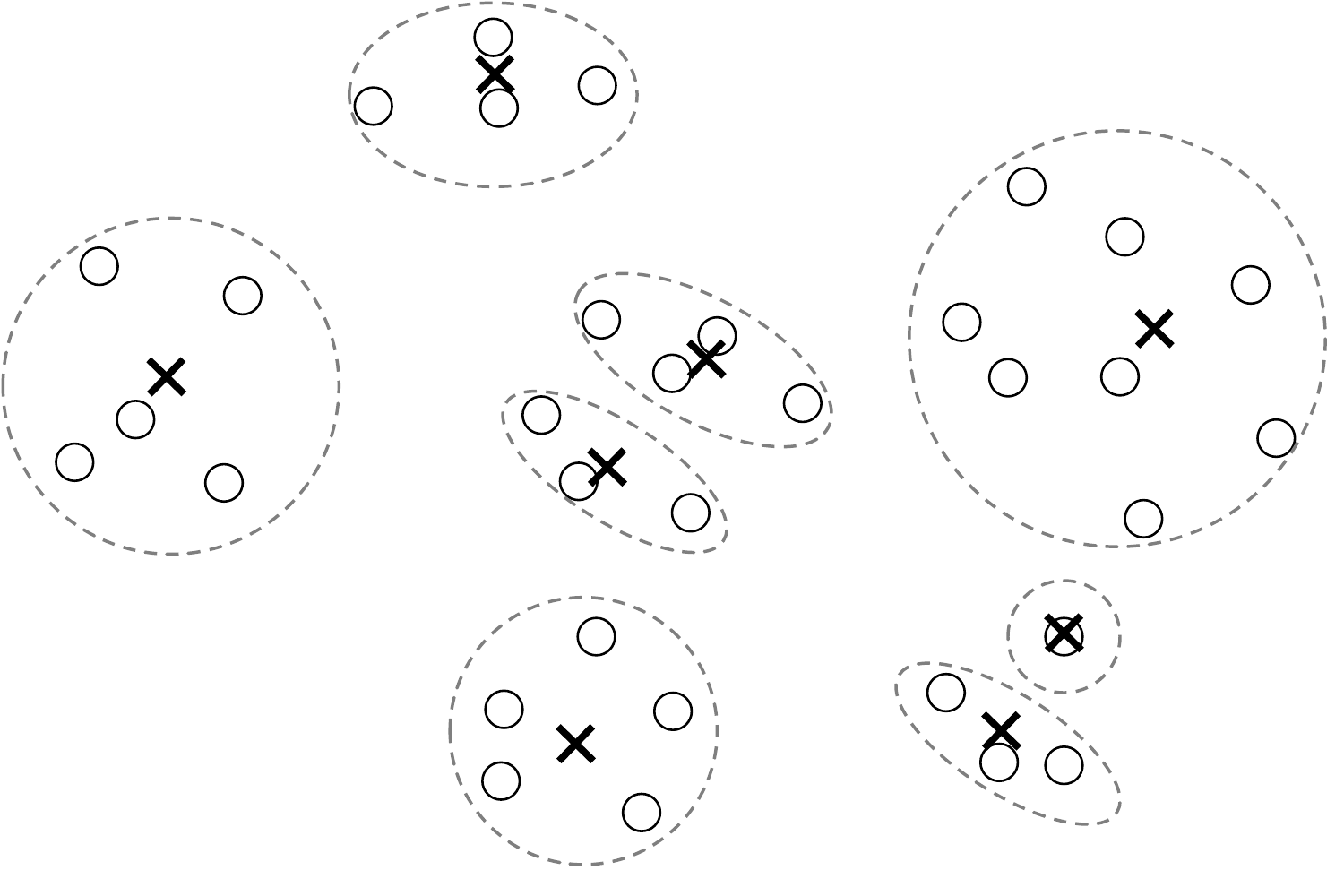} \label{fig:lad_3}
        }
    \end{center}
    \vspace{-0.5cm}
    \caption{Illustration of AID for LAD}
    \label{fig:lad}  
\end{figure}

Now we are ready to present the optimality condition and show that $\bar{\beta}^t$ is an optimal solution to \eqref{formulation_standard_regression} when the optimality condition is satisfied. The optimality condition presented in the following proposition is closely related to the clustering criteria.

\begin{proposition}
\label{proposition_F_and_E_equivalent}
If $y_i - \sum_{j \in J} x_{ij} \bar{\beta}_j^t$ for all $i \in C_k^t$ have the same sign for all $k \in K^t$, then $\bar{\beta}^t$ is an optimal solution to \eqref{formulation_standard_regression}. In other words, if all entries in $C_k^t$ are on the same side of the hyperplane defined by $\bar{\beta}^t$ for all $k \in K^t$, then $\bar{\beta}^t$ is an optimal solution to \eqref{formulation_standard_regression}. Further, $E^t = F^t$.
\end{proposition}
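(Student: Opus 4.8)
The plan is to exploit a single structural fact: at \emph{every} $\beta \in \mathbb{R}^m$ the aggregated objective is a lower bound for the original objective, and the stated optimality condition is exactly what forces this bound to be tight.

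First I would rewrite the per-cluster term of the aggregated objective using the centroid definitions of $x_{kj}^t$ and $y_k^t$. Since $y_k^t - \sum_{j \in J} x_{kj}^t \beta_j = \frac{1}{|C_k^t|}\sum_{i \in C_k^t}\bigl(y_i - \sum_{j \in J} x_{ij}\beta_j\bigr)$, the weight $|C_k^t|$ cancels the denominator, giving
\begin{equation*}
|C_k^t|\,\Bigl| y_k^t - \sum_{j \in J} x_{kj}^t \beta_j \Bigr| = \Bigl| \sum_{i \in C_k^t}\bigl( y_i - \sum_{j \in J} x_{ij}\beta_j \bigr) \Bigr|.
\end{equation*}
Applying the triangle inequality $|\sum_i a_i| \le \sum_i |a_i|$ to each cluster and summing over $k \in K^t$ then shows that, for every $\beta$, the aggregated objective is at most $\sum_{i \in I}| y_i - \sum_{j \in J} x_{ij}\beta_j|$. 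Evaluating this bound at a minimizer $\beta^*$ of \eqref{formulation_standard_regression} yields $F^t \le E^*$, since $\beta^*$ is also feasible for \eqref{formulation_weighted_regression_iter_t}.

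Next I would handle the equality case, which is the crux of the argument. The triangle inequality above is an equality precisely when all summands $y_i - \sum_{j \in J} x_{ij}\bar\beta_j^t$, $i \in C_k^t$, share a common sign. This is exactly the hypothesis of the proposition, so at $\bar\beta^t$ the inequality is tight for every cluster $k$; summing these per-cluster equalities and using that $\bar\beta^t$ is optimal for \eqref{formulation_weighted_regression_iter_t} gives $F^t = E^t$. I would then close with a sandwich argument: because every $\beta \in \mathbb{R}^m$ is feasible for the unconstrained original problem, $E^t = \sum_{i \in I}|y_i - \sum_{j \in J} x_{ij}\bar\beta_j^t| \ge E^*$; combined with $E^t = F^t \le E^*$ this forces $E^* \le E^t = F^t \le E^*$, hence equality throughout. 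Thus $\bar\beta^t$ attains the optimal value $E^*$ and is optimal for \eqref{formulation_standard_regression}, and $E^t = F^t$.

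I do not expect a genuine obstacle here; the only delicate point is the equality characterization of the triangle inequality and recognizing that it coincides verbatim with the stated same-sign optimality condition. One minor care is the treatment of a zero residual $y_i - \sum_{j \in J} x_{ij}\bar\beta_j^t = 0$ (Step 2(b) groups it with the non-positive side), but since such a term contributes nothing to either objective, it does not affect tightness and the argument goes through unchanged.
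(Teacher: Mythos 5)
Your proposal is correct and follows essentially the same route as the paper: the per-cluster triangle inequality showing the aggregated objective lower-bounds the original one (applied at $\beta^*$ to get $F^t \le E^*$), tightness of that inequality at $\bar\beta^t$ under the same-sign hypothesis to get $F^t = E^t$, and feasibility of $\bar\beta^t$ to close the sandwich. The paper merely presents these steps as one chained string of inequalities starting from $E^*$, and your remark about zero residuals is a harmless refinement the paper leaves implicit.
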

\begin{proof}
Let $\beta^*$ be an optimal solution to \eqref{formulation_standard_regression}. Then, we derive
\vspace{0.3cm}

\begin{tabular}{lll}
$E^*$ & $=$ & $\displaystyle \sum_{i \in I}  | y_i - \sum_{j \in J} x_{ij} \beta_j^* | = \sum_{k \in K^t}\sum_{i \in C_k^t}  | y_i - \sum_{j \in J} x_{ij} \beta_j^* |$ \\
	& $\geq$& $\displaystyle \sum_{k \in K^t} | \sum_{i \in C_k^t}  y_i - \sum_{j \in J} x_{ij} \beta_j^* | = \sum_{k \in K^t} |C_k^t| |y_k^t - \sum_{j \in J} x_{kj}^t \beta_j^* |$\\
	& $\geq$ & $\displaystyle \sum_{k \in K^t} |C_k^t| |y_k^t - \sum_{j \in J} x_{kj}^t \bar{\beta}_j^t| =  \sum_{k \in K^t} | \sum_{i \in C_k^t} y_i - \sum_{i \in C_k^t} \sum_{j \in J} x_{ij} \bar{\beta}_j^t |$\\
	& $=$ & $\displaystyle \sum_{k \in K^t} \sum_{i \in C_k^t} | y_i - \sum_{j \in J} x_{ij} \bar{\beta}_j^t | = \sum_{i \in I}  | y_i - \sum_{j \in J} x_{ij} \bar{\beta}_j^t | = E^t$,\\
\end{tabular}
\vspace{0.3cm}

\noindent where the third line holds since $\bar{\beta}^t$ is optimal to \eqref{formulation_weighted_regression_iter_t} and the fourth line is based on the condition that all observations in $C_k^t$ are on the same side of the hyperplane defined by $\bar{\beta}^t$, for all $k \in K^t$. Since $\bar{\beta}^t$ is feasible to \eqref{formulation_standard_regression}, clearly $E^* \leq E^t$, which shows $E^* = E^t$. This implies that $\bar{\beta}^t$ is an optimal solution to \eqref{formulation_standard_regression}. Observe that $\sum_{k \in K^t} |C_k^t| |y_k^t - \sum_{j \in J} x_{kj}^t \bar{\beta}_j^t|$ in the fifth line is equivalent to $F^t$. Hence, we also showed $E^t = F^t$ by the fifth to ninth lines.
\end{proof}

We also show the non-decreasing property of $F^t$ in $t$ and the convergence.
\begin{proposition}
\label{proposition_lad_improving_obj}
We have $F^{t-1} \leq F^{t}$ for $t=1,\cdots,T$. Further, $F^T = E^T = E^*$.
\end{proposition}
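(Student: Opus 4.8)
The plan has two parts, matching the two claims. For the monotonicity $F^{t-1} \le F^t$, the key structural fact is that the clusters refine across iterations: by \textsf{Step 2}, every cluster $C_k^{t-1}$ is either carried over unchanged into $C^t$ or split into the two sub-clusters $C_{k+}^{t-1}, C_{k-}^{t-1}$, so $C^t$ is a partition refinement of $C^{t-1}$. I would write the aggregated objective as a function of $\beta$, namely $\Phi^t(\beta) = \sum_{k \in K^t} |C_k^t|\,|y_k^t - \sum_{j\in J} x_{kj}^t \beta_j|$, so that $F^t = \min_{\beta} \Phi^t(\beta)$, and prove the \emph{pointwise} inequality $\Phi^{t-1}(\beta) \le \Phi^t(\beta)$ for every $\beta$.

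To establish this pointwise inequality I would reuse the identity that already drives the proof of \autoref{proposition_F_and_E_equivalent}: since the aggregated data are cluster averages,
\[
|C_k^t|\,\Big|y_k^t - \sum_{j\in J} x_{kj}^t \beta_j\Big| = \Big|\sum_{i \in C_k^t}\big(y_i - \sum_{j\in J} x_{ij}\beta_j\big)\Big|.
\]
Fix a cluster $C_k^{t-1}$ and let $\mathcal{S}$ denote the collection of sub-clusters into which it is partitioned in $C^t$ (a single element if it is carried over, the two pieces $C_{k+}^{t-1}, C_{k-}^{t-1}$ if it is split). Applying the identity to the parent cluster and then the triangle inequality gives
\[
|C_k^{t-1}|\,\Big|y_k^{t-1} - \sum_{j\in J} x_{kj}^{t-1}\beta_j\Big| = \Big|\sum_{S \in \mathcal{S}}\sum_{i \in S}\big(y_i - \sum_{j\in J} x_{ij}\beta_j\big)\Big| \le \sum_{S \in \mathcal{S}} \Big|\sum_{i \in S}\big(y_i - \sum_{j\in J} x_{ij}\beta_j\big)\Big|,
\]
and by the same identity applied to each $S$ the right-hand side is exactly the contribution of these sub-clusters to $\Phi^t(\beta)$. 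Summing over $k \in K^{t-1}$ yields $\Phi^{t-1}(\beta) \le \Phi^t(\beta)$ for all $\beta$; taking the minimum over $\beta$ on both sides gives $F^{t-1} = \min_{\beta} \Phi^{t-1}(\beta) \le \min_{\beta} \Phi^t(\beta) = F^t$.

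For the final equalities $F^T = E^T = E^*$ I would simply invoke \autoref{proposition_F_and_E_equivalent}. The algorithm halts at iteration $T$ precisely because the optimality condition holds, i.e., the residuals $y_i - \sum_{j\in J} x_{ij}\bar{\beta}_j^T$ share the same sign within every cluster $C_k^T$. That proposition then immediately delivers $E^T = F^T$ together with the optimality of $\bar{\beta}^T$ for \eqref{formulation_standard_regression}, whence $E^T = E^*$, closing the chain.

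I expect the main obstacle to be the bookkeeping in the pointwise step: correctly pairing each parent cluster with its sub-clusters and checking that the triangle inequality is applied in the direction that \emph{increases} the objective (so refinement can only raise $\Phi$). It is worth noting that this monotonicity argument uses only the refinement property of the partitions, not the specific sign-based splitting rule; the sign-based rule enters only to guarantee finiteness and to trigger termination through \autoref{proposition_F_and_E_equivalent}.
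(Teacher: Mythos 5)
Your proposal is correct and follows essentially the same route as the paper: the heart of both arguments is the identity $|C_k^t|\,|y_k^t - \sum_{j} x_{kj}^t \beta_j| = |\sum_{i \in C_k^t}(y_i - \sum_{j} x_{ij}\beta_j)|$ combined with the triangle inequality across the sub-clusters of a split cluster, and your ``prove $\Phi^{t-1}(\beta) \le \Phi^{t}(\beta)$ pointwise, then take minima'' packaging unfolds into exactly the paper's chain $F^{t-1} = \Phi^{t-1}(\bar{\beta}^{t-1}) \le \Phi^{t-1}(\bar{\beta}^{t}) \le \Phi^{t}(\bar{\beta}^{t}) = F^{t}$. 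Your observation that monotonicity needs only the refinement property, plus the appeal to Proposition \ref{proposition_F_and_E_equivalent} at termination for $F^T = E^T = E^*$, matches the paper's reasoning.
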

\begin{proof}
For simplicity, let us assume that $\{ C_1^{t-1} \} = C^{t-1} \setminus C^{t}$, $\{ C_1^{t},C_2^{t} \} = C^{t} \setminus C^{t-1}$, and $C_1^{t-1} = C_1^{t} \cup C_2^{t}$. That is, $C_1^{t-1}$ is the only cluster in $C^{t-1}$ such that the entries in $C_1^{t-1}$ have both positive and negative signs, and $C_1^{t-1}$ is partitioned into $C_1^t$ and $C_2^t$ for iteration $t$. Then, we derive
\vspace{0.3cm}

\noindent
\begin{tabular}{lll}
$F^{t-1}$ & $=$ & $\displaystyle \Big|C_1^{t-1}\Big| \Big|y_1^{t-1} - \sum_{j \in J} x_{1j}^{t-1} \bar{\beta}_j^{t-1} \Big| + \sum_{k \in K^{t-1} \setminus \{1\}} \Big|C_k^{t-1}\Big| \Big|y_k^{t-1} - \sum_{j \in J} x_{kj}^{t-1} \bar{\beta}_j^{t-1} \Big|$\\[0.2cm]
	& $\leq$ & $\displaystyle \Big|C_1^{t-1}\Big| \Big|y_1^{t-1} - \sum_{j \in J} x_{1j}^{t-1} \bar{\beta}_j^{t} \Big| + \sum_{k \in K^{t-1} \setminus \{1\}} \Big|C_k^{t-1}\Big| \Big|y_k^{t-1} - \sum_{j \in J} x_{kj}^{t-1} \bar{\beta}_j^{t} \Big|$\\[0.2cm]
	& $=$ & $\displaystyle \Big| \sum_{i \in C_1^{t-1}} y_i - \sum_{i \in C_1^{t-1}} \sum_{j \in J} x_{ij} \bar{\beta}_j^{t} \Big| + \sum_{k \in K^{t-1} \setminus \{1\}} \Big|C_k^{t-1}\Big| \Big|y_k^{t-1} - \sum_{j \in J} x_{kj}^{t-1} \bar{\beta}_j^{t} \Big|$\\[0.2cm]
	& $=$ & $\displaystyle \Big| \sum_{i \in C_1^{t}} y_i - \sum_{i \in C_1^{t}} \sum_{j \in J} x_{ij} \bar{\beta}_j^{t} + \sum_{i \in C_2^{t}} y_i - \sum_{i \in C_2^{t}} \sum_{j \in J} x_{ij} \bar{\beta}_j^{t}\Big| + \sum_{k \in K^{t-1} \setminus \{1\}} \Big|C_k^{t-1}\Big| \Big|y_k^{t-1} - \sum_{j \in J} x_{kj}^{t-1} \bar{\beta}_j^{t} \Big|$\\[0.2cm]
	& $\leq$ & $\displaystyle \Big| \sum_{i \in C_1^{t}} y_i - \sum_{i \in C_1^{t}} \sum_{j \in J} x_{ij} \bar{\beta}_j^{t}\Big| + \Big|\sum_{i \in C_2^{t}} y_i - \sum_{i \in C_2^{t}} \sum_{j \in J} x_{ij} \bar{\beta}_j^{t}\Big| + \sum_{k \in K^{t-1} \setminus \{1\}} \Big|C_k^{t-1}\Big| \Big|y_k^{t-1} - \sum_{j \in J} x_{kj}^{t-1} \bar{\beta}_j^{t} \Big| $\\[0.2cm]
	& $=$ & $\displaystyle \Big| C_1^{t} \Big| \Big| y_1^{t} - \sum_{j \in J} x_{1j}^{t} \bar{\beta}_j^{t}\Big| + \Big| C_2^{t} \Big| \Big| y_2^{t} - \sum_{j \in J} x_{2j}^{t} \bar{\beta}_j^{t}\Big| +  \sum_{k \in K^{t-1} \setminus \{1\}} \Big|C_k^{t-1}\Big| \Big|y_k^{t-1} - \sum_{j \in J} x_{kj}^{t-1} \bar{\beta}_j^{t} \Big|$\\[0.2cm]
	& $=$ & $\displaystyle  \Big| C_1^{t} \Big| \Big| y_1^{t} - \sum_{j \in J} x_{1j}^{t} \bar{\beta}_j^{t}\Big| + \Big| C_2^{t} \Big| \Big| y_2^{t} - \sum_{j \in J} x_{2j}^{t} \bar{\beta}_j^{t}\Big| + \sum_{k \in K^{t} \setminus \{1,2\}} \Big|C_k^{t}\Big| \Big|y_k^{t} - \sum_{j \in J} x_{kj}^{t} \bar{\beta}_j^{t} \Big|$\\
	& $=$ & $F^{t}$,
\end{tabular}
\vspace{0.3cm}

\noindent where the second line holds since $\bar{\beta}^{t-1}$ is an optimal solution to the aggregate problem in iteration $t-1$, and the seventh line follows from the fact that there exist $q \in K^{t-1} \setminus \{1\}$ and $k \in K^{t} \setminus \{1,2\}$ such that $C_q^{t-1} = C_k^t$. For the cases with multiple clusters in $t$ are declustered, we can use the similar technique. This completes the proof.
\end{proof}

By Proposition \ref{proposition_lad_improving_obj}, in any iteration, $F^t$ can be interpreted as a lower bound to \eqref{formulation_standard_regression}. Further, the optimality gap $\frac{E^{\mbox{\scriptsize{best}}} - F^t}{E^{\mbox{\scriptsize{best}}}}$ is non-increasing in $t$, where $\displaystyle E^{\mbox{\scriptsize{best}}} = \min_{s=1,\cdots,t} \{E^s\}$.

\subsection{Support Vector Machines}
\label{subsection_SVM}

One of the most popular forms of support vector machines (SVM) includes a kernel satisfying the Mercer's theorem \cite{Mercer:09} and soft margin. Let $\phi: x_i \in \mathbb{R}^{m} \mapsto \phi(x_i) \in \mathbb{R}^{m'}$ be the mapping function that maps from the $m$-dimensional original feature space to $m'$-dimensional new feature space. Then, the primal optimization problem for SVM is written as
\begin{equation}
\label{def_SVM_kernel}
\begin{split}
& E^* = \min_{w,b,\xi} \frac{1}{2} \|w\|^2 + M \| \xi \|_1 \\
& \qquad \quad \mbox{s.t.} \quad y_i(w \phi(x_i) + b) \geq 1 - \xi_i, \xi_i \geq 0, i \in I,
\end{split}
\end{equation}
where $x = [x_{ij}] \in \mathbb{R}^{n \times m}$ is the feature data, $y = [y_i] \in \{-1,1\}^n$ is the class (label) data, and $w \in \mathbb{R}^{m'}, b \in \mathbb{R},$ and $\xi \in \mathbb{R}_{+}^n$ are the decision variables, and the corresponding dual optimization problem is written as
\begin{equation}
\label{def_SVM_dual_kernel}
\begin{split}
& \max \sum_{i \in I} \alpha_i - \frac{1}{2} \sum_{i,j \in I} \mathcal{K}(x_i,x_j) \alpha_i \alpha_j y_i y_j \\
& \mbox{s.t.} \quad \sum_{i \in I} \alpha_i y_i = 0,\\
& \qquad 0 \leq \alpha_i \leq M, i \in I,
\end{split}
\end{equation}
where $\mathcal{K}(x_i,x_j) = \langle \phi(x_i), \phi(x_j) \rangle$ is the kernel function. In this case, $\phi(x_i)$ in \eqref{def_SVM_kernel} can be interpreted as new data in $m'$-dimensional feature space with linear kernel. Hence, without loss of generality, we derive all of our findings in this section for \eqref{def_SVM_kernel} with the linear kernel, while all of the results hold for any kernel function satisfying the Mercer's theorem. However, in Appendix \ref{section_appendix_B_aid_kernel}, we also describe AID with direct use of the kernel function.

By using the linear kernel, \eqref{def_SVM_kernel} is simplified as
\begin{equation}
\label{def_SVM}
\begin{split}
& E^* = \min_{w,b,\xi} \frac{1}{2} \|w\|^2 + M \| \xi \|_1 \\
& \qquad \quad \mbox{s.t.} \quad y_i(w x_i + b) \geq 1 - \xi_i, \xi_i \geq 0, i \in I,
\end{split}
\end{equation}
where $w \in \mathbb{R}^{m}$. Since $\|\cdot\|_1$ in the objective function of \eqref{def_SVM} is the summation of the absolute values over all $i$ in $I$ and the constraints are defined for each $i$ in $I$, SVM fits \eqref{opt_problem_for_dab}. Hence, we apply AID to solve \eqref{def_SVM}.

Let us first define the clustering method. The algorithm maintains that the observations $i_1$ and $i_2$ with different labels $(y_{i_1} \neq y_{i_2})$ cannot be in the same cluster. We first cluster all data $i$ with $y_i = 1$, and then we cluster those with $y_i = -1$. Thus we run the clustering algorithm twice. This gives initial clusters $C^0 = \{C_1^0,C_2^0, \cdots, C_{|K^0|}^0 \}$. Given $C^t$ in iteration $t$, for each $k \in K^t$, we generate aggregated data by 
\begin{center}
$\displaystyle x_{k}^t = \frac{\sum_{i \in C_k^t} x_{i}}{|C_k^t|}$ and $y_k^t = \frac{\sum_{i \in C_k^t} y_i}{|C_k^t|} \in \{-1,1\}$,
\end{center}
where $x^t \in \mathbb{R}^{|K^t| \times m}$ and $y^t \in \{-1,1\}^{|K^t|}$. Note that, since we create a cluster with observations with the same label, we have
\begin{equation}
\label{property_aggregated_y}
y_k^t = y_i \mbox{ for all } i \in C_k^t.
\end{equation}
By giving weight $|C_k^t|$ to $\xi_k^t$, we obtain
\begin{equation}
\label{def_SVM2}
\begin{split}
& F^t = \min_{w^t, b^t, \xi^t} \frac{1}{2} \|w^t \|^2 + M \sum_{k \in K^t} |C_k^t|  \xi_k^t \\
& \qquad \quad \mbox{s.t.} \quad y_k^t \big[w^t x_k^t + b^t \big] \geq 1 - \xi_k^t, \xi_k^t \geq 0, k \in K^t,
\end{split}
\end{equation}
where $w^t \in \mathbb{R}^{m},b^t \in \mathbb{R},$ and $\xi^t \in \mathbb{R}_{+}^{|K^t|}$ are the decision variables. Note that $\xi$ in \eqref{def_SVM} has size of $n$, whereas the aggregated data has $|K^t|$ entries. Note also that \eqref{def_SVM2} is weighted SVM \cite{weightedSVM}, where weight is $|C_k^t|$ for aggregated entry $k \in K^t$.

Next we present the declustering criteria and construction of $C^{t+1}$. Let $(w^*,\xi^*,b^*)$ and $(\bar{w}^t,\bar{\xi}^t,\bar{b}^t)$ be optimal solutions to \eqref{def_SVM} and \eqref{def_SVM2}, respectively. Given $C^t$ and $(\bar{w}^t,\bar{\xi}^t,\bar{b}^t)$, we define the clusters for iteration $t+1$ as follows. 
\begin{enumerate}[noitemsep]
\item[] \textsf{Step 1} $C^{t+1} \gets \emptyset$.
\item[] \textsf{Step 2} For each  $k \in K^t$,
	\begin{enumerate}[noitemsep]
	\item[] \textsf{Step 2(a)} If (\rmnum{1}) $1-y_i (\bar{w}^t x_i + \bar{b}^t) \leq 0$ for all $i \in C_k^t$ or (\rmnum{2}) $1-y_i (\bar{w}^t x_i + \bar{b}^t) > 0$ for all $i \in C_k^t$, then $C^{t+1} \gets C^{t+1} \cup \{ C_k^t \}$.
	\item[] \textsf{Step 2(b)} Otherwise, decluster $C_k^t$ into two clusters: $C_{k+}^{t} = \{ i \in C_k^t | 1-y_i (\bar{w}^t x_i + \bar{b}^t) > 0 \}$ and $C_{k-}^{t} = \{ i \in C_k^t | 1-y_i (\bar{w}^t x_i + \bar{b}^t) \leq 0 \}$, and set  $C^{t+1} \gets C^{t+1} \cup \{ C_{k+}^{t}, C_{k-}^{t} \}$.
	\end{enumerate}
\end{enumerate}

In Figure \ref{fig:svm}, we illustrate AID for SVM. In Figure \ref{fig:svm_1}, the small white circles and crosses represent the original entries with labels 1 and -1, respectively. The small black circles and crosses represent the aggregated entries, where the large circles are clusters associated with the aggregated entries. The plain line represents the separating hyperplane $(\bar{w}^t, \bar{b}^t)$ obtained from an optimal solution to \eqref{def_SVM2}, where the margins are implied by the dotted lines. The shaded large circles represent the clusters violating the optimality condition in Proposition \ref{proposition_svm_F_and_E_equivalent}. In Figure \ref{fig:svm_2}, below the bottom dotted line is the area such that observations with label 1 (circles) have zero error and above the top dotted line is the area such that observations with label -1 (crosses) have zero error. Observe that two clusters have original entries below and above the corresponding dotted lines. Based on the declustering criteria, the two clusters are declustered and we obtain new clusters in Figure \ref{fig:svm_3}.

\begin{figure}[ht]
     \begin{center}
        \subfigure[Clusters $C^t$ and $(\bar{w}^t, \bar{b}^t)$]{%
           \includegraphics[scale=0.17]{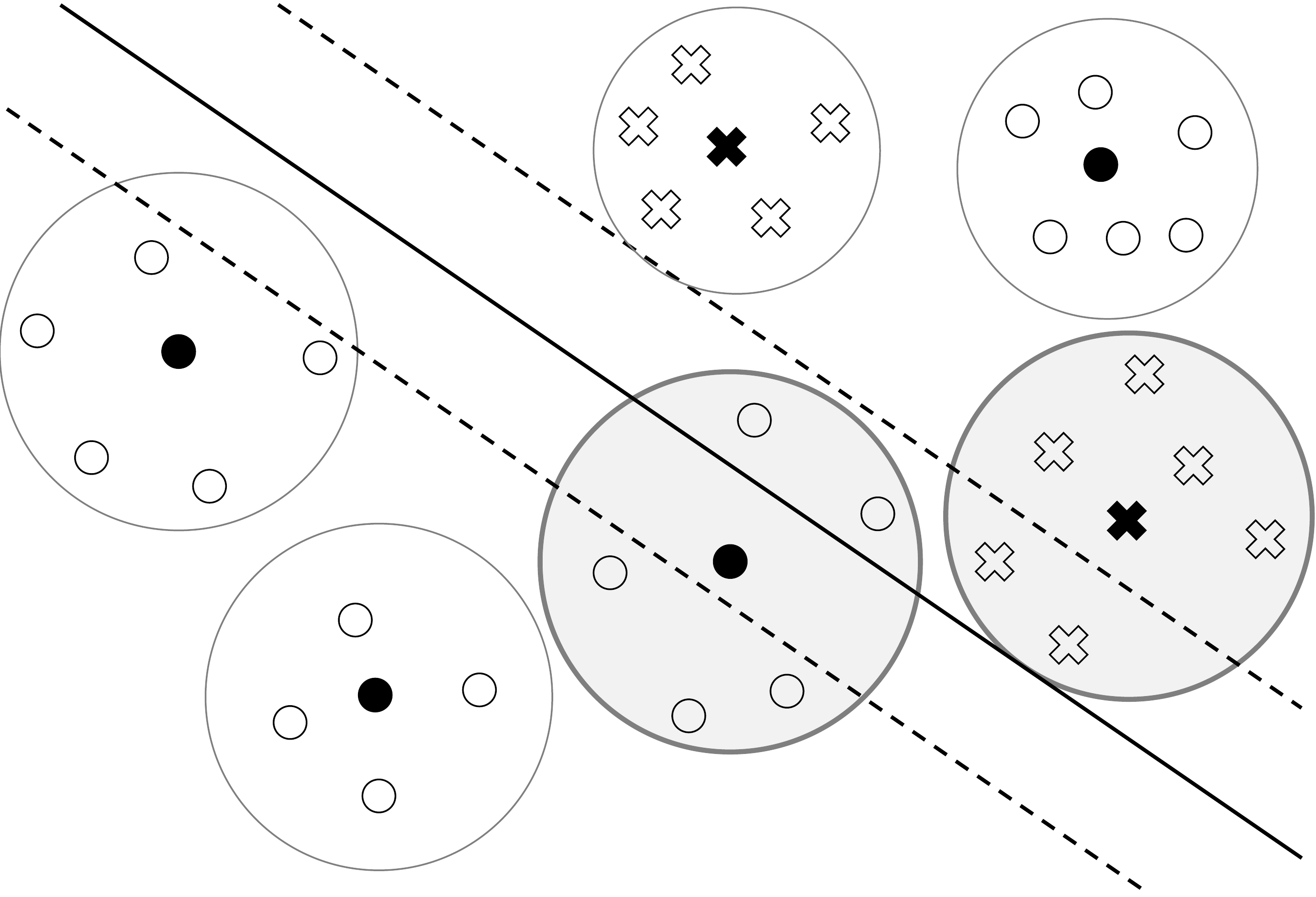} \label{fig:svm_1}
        }\qquad
        \subfigure[Declustered]{%
           \includegraphics[scale=0.17]{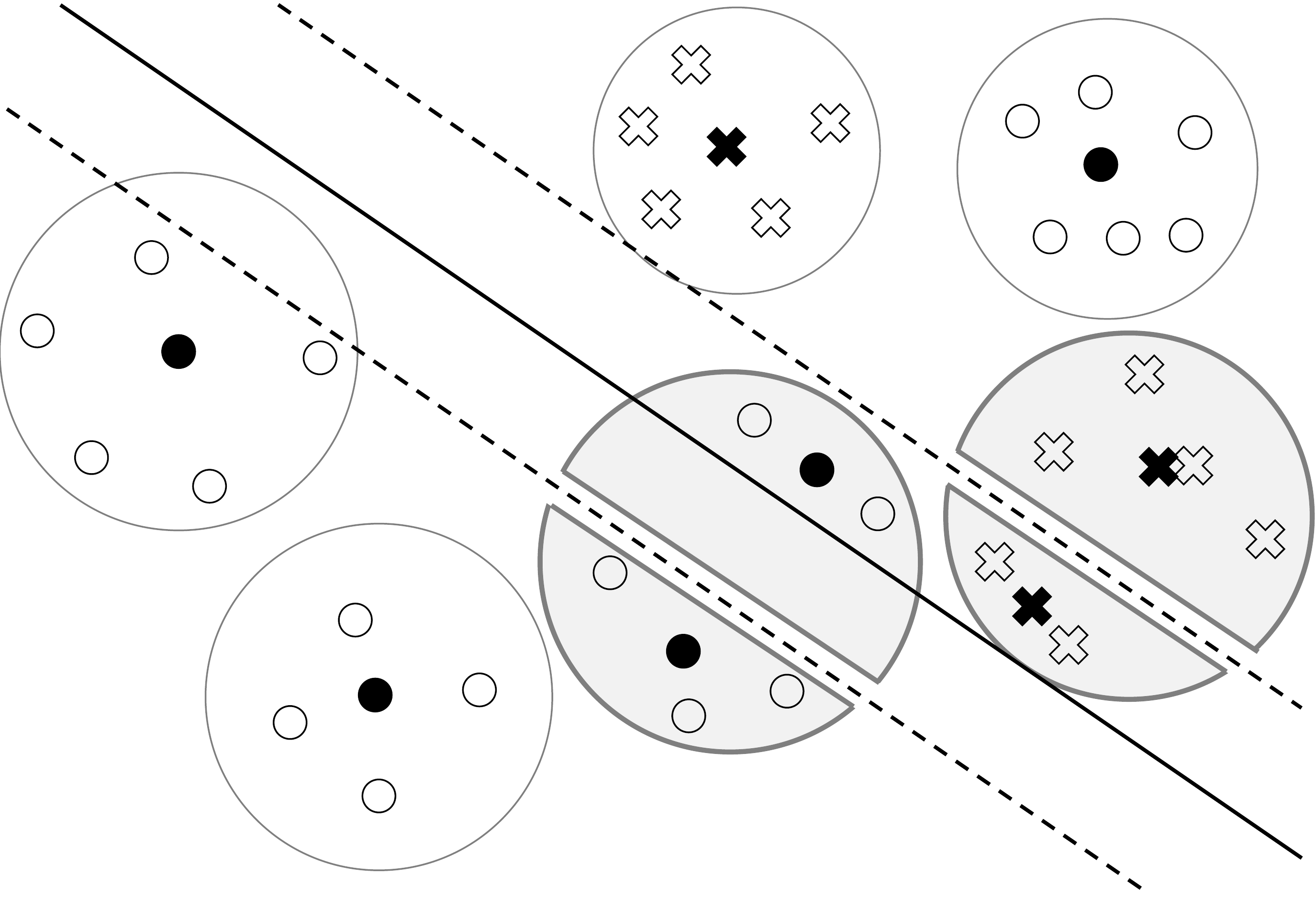} \label{fig:svm_2}
        }\qquad
        \subfigure[New clusters $C^{t+1}$]{%
           \includegraphics[scale=0.17]{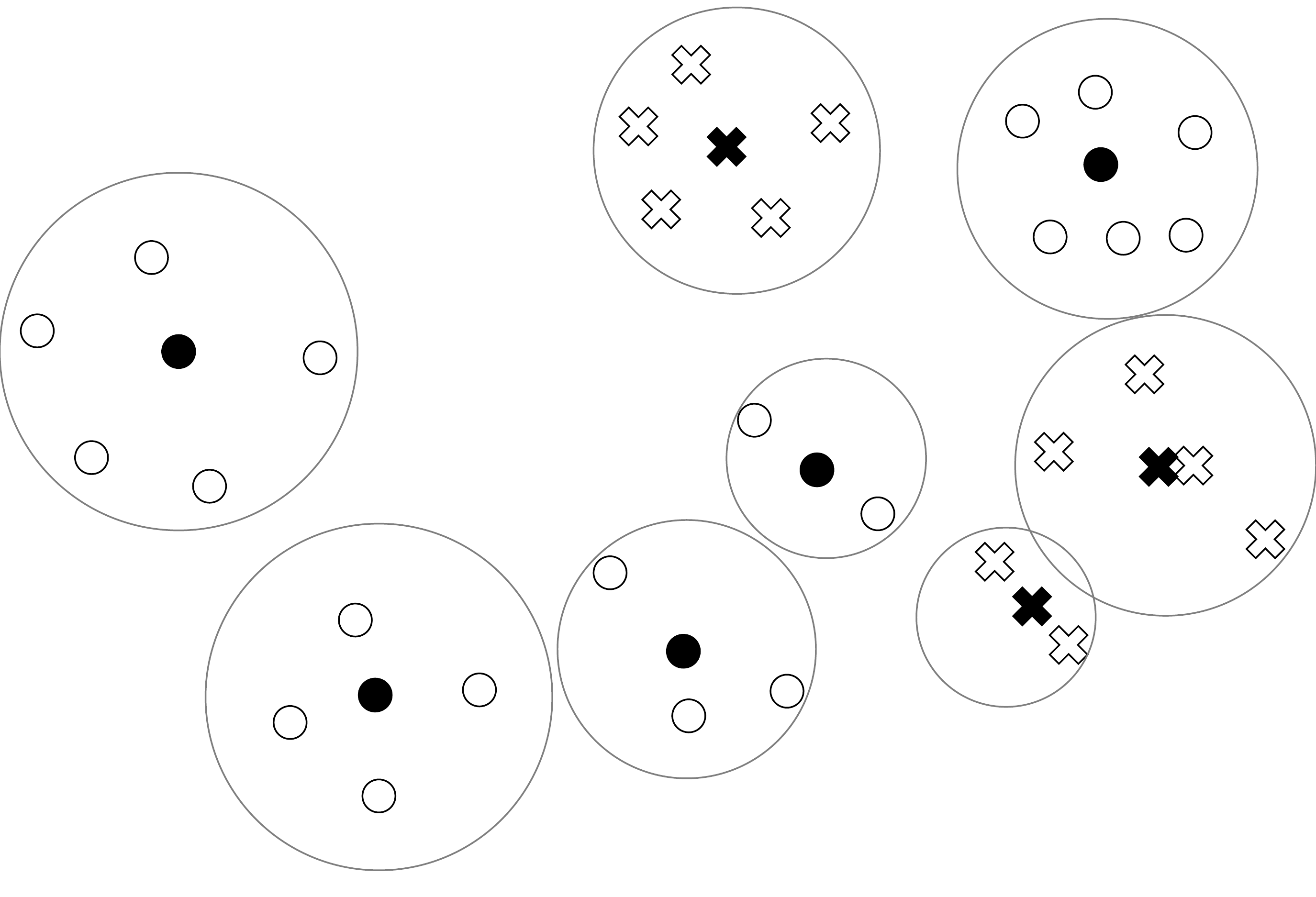} \label{fig:svm_3}
        }
    \end{center}
    \vspace{-0.5cm}
    \caption{Illustration of AID for SVM}
    \label{fig:svm}  
\end{figure}

Note that a feasible solution to \eqref{def_SVM} does not have the same dimension as a feasible solution to \eqref{def_SVM2}. In order to analyze the algorithm, we convert feasible solutions to \eqref{def_SVM} and \eqref{def_SVM2} to feasible solutions to \eqref{def_SVM2} and \eqref{def_SVM}, respectively.
\begin{enumerate}[noitemsep]
\item Conversion from \eqref{def_SVM} to \eqref{def_SVM2}\\
Given a feasible solution $(w,b,\xi) \in (\mathbb{R}^m, \mathbb{R}, \mathbb{R}_{+}^n)$ to \eqref{def_SVM}, we define a feasible solution $(w^t,b^t,\xi^t) \in (\mathbb{R}^m, \mathbb{R}, \mathbb{R}_{+}^{|K^t|})$ to \eqref{def_SVM2} as follows: $w^t := w$, $b^t := b$, and $\xi_k^t := \frac{\sum_{i \in C_k^t} \xi_i}{|C_k^t|}$ for $k \in K^t$.
\item Conversion from \eqref{def_SVM2} to \eqref{def_SVM}\\
Given a feasible solution $(w^t,b^t,\xi^t) \in (\mathbb{R}^m, \mathbb{R}, \mathbb{R}_{+}^{|K^t|})$ to \eqref{def_SVM2}, we define a feasible solution $(w,b,\xi) \in (\mathbb{R}^m, \mathbb{R}, \mathbb{R}_{+}^n)$ to \eqref{def_SVM} as follows: $w := w^t$, $b := b^t$, and $\xi_i := \max \{ 0, 1-y_i(w^t x_i + b^t) \}$ for $i \in I$.
\end{enumerate}
Given an optimal solution $(w^*,b^*,\xi^*)$ to \eqref{def_SVM}, by using the above mappings, we denote by $(\hat{w}^*,\hat{b}^*,\hat{\xi}^*)$ the corresponding feasible solution to \eqref{def_SVM2}. Likewise, given an optimal solution $(\bar{w}^t,\bar{b}^t,\bar{\xi}^t)$ to \eqref{def_SVM2}, we denote by $(\hat{w}^t,\hat{b}^t,\hat{\xi}^t)$ the corresponding feasible solution to \eqref{def_SVM}. The objective function value of $(\hat{w}^t,\hat{b}^t,\hat{\xi}^t)$ to \eqref{def_SVM} is evaluated by 
\begin{equation}
E^t = \frac{1}{2} \| \hat{w}^t \|^2 + M \| \hat{\xi}^t \|_1.
\end{equation}

In Propositions \ref{proposition_svm_F_and_E_equivalent} and \ref{proposition_svm_improving_obj}, we present the optimality condition and monotone convergence property.
\begin{proposition}
\label{proposition_svm_F_and_E_equivalent}
For all $k \in K^t$, if (\rmnum{1}) $1-y_i (\bar{w}^t x_i + \bar{b}^t) \leq 0$ for all $i \in C_k^t$ or (\rmnum{2}) $1-y_i (\bar{w}^t x_i + \bar{b}^t) > 0$ for all $i \in C_k^t$, then $(\hat{w}^t, \hat{b}^t, \hat{\xi}^t_i)$ is an optimal solution to \eqref{def_SVM}. In other words, if all entries in $C_k^t$ are on the same side of the margin-shifted hyperplane of the separating hyperplane $(\bar{w}^t, \bar{b}^t)$, then $(\hat{w}^t, \hat{b}^t, \hat{\xi}^t_i)$ is an optimal solution to \eqref{def_SVM}. Further, $E^t = F^t$.
\end{proposition}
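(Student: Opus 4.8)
The plan is to mirror the sandwiching argument used for LAD in Proposition~\ref{proposition_F_and_E_equivalent}, establishing the chain $E^* \geq F^t = E^t \geq E^*$, which forces equality throughout and certifies that the converted solution $(\hat{w}^t,\hat{b}^t,\hat{\xi}^t)$ is optimal to \eqref{def_SVM}. The three links are: first, $F^t \leq E^*$, which holds unconditionally; second, $E^t = F^t$, which is where the optimality condition enters; and third, $E^t \geq E^*$, which is immediate from feasibility.

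For the first link, I would take an optimal solution $(w^*,b^*,\xi^*)$ of \eqref{def_SVM} and push it through the conversion to \eqref{def_SVM2}, obtaining $(\hat{w}^*,\hat{b}^*,\hat{\xi}^*)$ with $\hat{w}^*=w^*$, $\hat{b}^*=b^*$, and $\hat{\xi}^*_k=\frac{1}{|C_k^t|}\sum_{i\in C_k^t}\xi_i^*$. Feasibility of this point in \eqref{def_SVM2} follows from $x_k^t=\frac{1}{|C_k^t|}\sum_{i\in C_k^t}x_i$ together with the label-homogeneity \eqref{property_aggregated_y} (namely $y_k^t=y_i$ for every $i\in C_k^t$), which give $y_k^t(w^* x_k^t+b^*)=\frac{1}{|C_k^t|}\sum_{i\in C_k^t} y_i(w^* x_i+b^*)\ge \frac{1}{|C_k^t|}\sum_{i\in C_k^t}(1-\xi_i^*)=1-\hat{\xi}^*_k$. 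The averaging of the $\xi_i^*$ then collapses the weighted penalty $M\sum_k|C_k^t|\hat{\xi}^*_k$ back to $M\sum_{i\in I}\xi_i^*$, so the objective value of $(\hat{w}^*,\hat{b}^*,\hat{\xi}^*)$ in \eqref{def_SVM2} equals $E^*$ exactly; optimality of $(\bar{w}^t,\bar{b}^t,\bar{\xi}^t)$ then yields $F^t\le E^*$. Note this step uses only label-homogeneity, not the optimality condition.

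The second link is the crux. At optimality of \eqref{def_SVM2} the slacks are tight, so $\bar{\xi}_k^t=\max\{0,\,1-y_k^t(\bar{w}^t x_k^t+\bar{b}^t)\}$, and the same averaging identity gives $|C_k^t|\,[1-y_k^t(\bar{w}^t x_k^t+\bar{b}^t)]=\sum_{i\in C_k^t}[1-y_i(\bar{w}^t x_i+\bar{b}^t)]$. Meanwhile the converted slacks in \eqref{def_SVM} are $\hat{\xi}_i^t=\max\{0,\,1-y_i(\bar{w}^t x_i+\bar{b}^t)\}$. In general one only has the subadditive bound $\max\{0,\sum_i a_i\}\le\sum_i\max\{0,a_i\}$, and the main obstacle is to upgrade this to a per-cluster equality. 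This is precisely what the hypothesis supplies: within each $C_k^t$ the quantities $1-y_i(\bar{w}^t x_i+\bar{b}^t)$ all share one sign, so in case (\rmnum{1}) every term and both sides vanish, while in case (\rmnum{2}) the operation $\max\{0,\cdot\}$ acts as the identity on every summand and on the sum, giving $|C_k^t|\,\bar{\xi}_k^t=\sum_{i\in C_k^t}\hat{\xi}_i^t$ in both cases. Summing over $k\in K^t$ and cancelling the common term $\tfrac12\|\bar{w}^t\|^2=\tfrac12\|\hat{w}^t\|^2$ yields $F^t=E^t$.

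Finally, the third link is immediate: $(\hat{w}^t,\hat{b}^t,\hat{\xi}^t)$ is by construction feasible to \eqref{def_SVM}, so $E^t\ge E^*$. Combining the three links gives $E^*\ge F^t=E^t\ge E^*$, forcing $E^*=F^t=E^t$; hence the feasible point $(\hat{w}^t,\hat{b}^t,\hat{\xi}^t)$ attains the optimal value and is optimal to \eqref{def_SVM}, and the identity $E^t=F^t$ is recorded along the way. I expect the sign-case bookkeeping in the second link to be the only delicate part, the remainder being the conversion-and-average mechanics already seen for LAD.
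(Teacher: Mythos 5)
Your proposal is correct and follows essentially the same route as the paper's proof: the paper establishes the same chain $E^* \geq F^t = E^t \geq E^*$ in one continuous display, using the identical conversion maps, the tightness of $\bar{\xi}_k^t$ at optimality, and the same-sign hypothesis to pull the $\max\{0,\cdot\}$ through the per-cluster sum. Your explicit verification of feasibility of $(\hat{w}^*,\hat{b}^*,\hat{\xi}^*)$ and the two-case bookkeeping for the $\max$ identity merely spell out steps the paper leaves implicit.
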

\begin{proof}
We can derive
\vspace{0.2cm}

\begin{tabular}{lll}
$\frac{1}{2} \|w^*\|^2 + M \sum_{i \in I}  \xi_i^* $ & $=$ & $\frac{1}{2} \|w^*\|^2 + M \sum_{k \in K^t} |C_k^t| \frac{\sum_{i \in C_k^t}  \xi_i^* }{|C_k^t|} $\\
	& $=$ & $\frac{1}{2} \|\hat{w}^*\|^2 + M \sum_{k \in K^t} |C_k^t| \hat{\xi}_k^*$\\
	& $\geq$ & $\frac{1}{2} \|\bar{w}^t\|^2 + M \sum_{k \in K^t} |C_k^t| \bar{\xi}_k^t$\\
\end{tabular}

\begin{tabular}{lll}
\textcolor{white}{$\frac{1}{2} \|w^*\|^2 + M \sum_{i \in I}  \xi_i^* $}	& $=$ & $\frac{1}{2} \|\bar{w}^t\|^2 + M \sum_{k \in K^t} \max \{ 0,  |C_k^t| - |C_k^t| y_k^t ( \bar{w}^t x_k^t + \bar{b}^t) \}$\\
	& $=$ & $\frac{1}{2} \|\bar{w}^t\|^2 + M \sum_{k \in K^t} \max \{ 0,  |C_k^t| - y_k^t \bar{w}^t \sum_{i \in C_k^t} x_i - y_k^t |C_k^t| \bar{b}^t \}$\\
	& $=$ & $\frac{1}{2} \|\bar{w}^t\|^2 + M \sum_{k \in K^t} \max \Big\{ 0, \sum_{i \in C_k^t} [1 - y_i (\bar{w}^t x_i + \bar{b}^t)] \Big\}$\\
	& $=$ & $\frac{1}{2} \|\bar{w}^t\|^2 + M \sum_{k \in K^t} \sum_{i \in C_k^t} \max \{ 0,  1 - y_i (\bar{w}^t x_i + \bar{b}^t) \}$\\
	& $=$ & $\frac{1}{2} \|\hat{w}^t\|^2 + M \sum_{k \in K^t} \sum_{i \in C_k^t} \hat{\xi}_i^t$\\
	& $\geq$ & $\frac{1}{2} \|w^*\|^2 + M \sum_{i \in I} \xi_i^* $,
\end{tabular}
\vspace{0.2cm}

\noindent where the second line follows from the definition of $\hat{\xi}_k^*$, the third line holds since $(\bar{w}^t,\bar{b}^t,\bar{\xi}^t)$ is an optimal solution to \eqref{def_SVM2}, the fourth line is by the definition of $\bar{\xi}^t$, the fifth line is by the definition of $x_k^t$, the eighth line is true because of the assumption such that all observations are on the same side of the margin-shifted hyperplane of the separating hyperplane (optimality condition), and the last line holds since $(w^*,b^*,\xi^*)$ is an optimal solution to \eqref{def_SVM}. Observe that the inequalities above must hold at equality. This implies that $(\hat{w}^t, \hat{b}^t, \hat{\xi}^t_i)$ is an optimal solution to \eqref{def_SVM}.
\end{proof}

Because $(\hat{w}^t,\hat{b}^t)$ defines an optimal hyperplane, we are also able to obtain the corresponding dual optimal solution $\hat{\alpha}^t \in \mathbb{R}^{n}$ for (6). However, unlike primal optimal solutions, $\hat{\alpha}^t$ cannot be directly constructed from dual solution $\bar{\alpha}^t \in \mathbb{R}^{|K^t|}$ of $(\bar{w}^t,\bar{b}^t)$ for the aggregated problem within the current settings. Within a modified setting presented later in this section, we can explain the relationship between $\bar{\alpha}^t$ and $\hat{\alpha}^t$, modified optimality condition, declustering procedure, and the construction of $\hat{\alpha}^t$ based on $\bar{\alpha}^t$.

\begin{proposition}
\label{proposition_svm_improving_obj}
We have $F^{t-1} \leq F^t$ for $t=1,\cdots,T$. Further, $F^T = E^T = E^*$.
\end{proposition}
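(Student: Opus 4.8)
The plan is to mirror the argument for LAD in Proposition~\ref{proposition_lad_improving_obj}, with the triangle inequality used there replaced by subadditivity of the hinge (positive-part) function. As in that proof, it suffices to treat the case where exactly one cluster is split: assume $C_1^{t-1} = C_1^t \cup C_2^t$ while every other cluster of $C^{t-1}$ survives unchanged into $C^t$, the general case following by applying the single-split bound to each declustered cluster in turn. The overall strategy is a three-link chain $F^{t-1} \le [\text{iter-}(t{-}1)\text{ objective at }(\bar w^t,\bar b^t)] \le [\text{iter-}t\text{ objective at }(\bar w^t,\bar b^t)] = F^t$.

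First I would put both aggregated problems into a common hinge-loss form over the original entries. Reusing the identity established inside the proof of Proposition~\ref{proposition_svm_F_and_E_equivalent}, namely that, using $y_k^t = y_i$ for $i \in C_k^t$ together with the averaging definition of $x_k^t$, one has $|C_k^t|\max\{0,\,1 - y_k^t(w x_k^t + b)\} = \max\{0,\ \sum_{i \in C_k^t}[1 - y_i(w x_i + b)]\}$, I can write, for any $(w,b)$,
\[
\tfrac12\|w\|^2 + M\sum_{k \in K^t} |C_k^t|\,\xi_k^t(w,b) \;=\; \tfrac12\|w\|^2 + M\sum_{k \in K^t}\max\Big\{0,\ \textstyle\sum_{i \in C_k^t}[1 - y_i(w x_i + b)]\Big\},
\]
where $\xi_k^t(w,b) = \max\{0,\,1 - y_k^t(w x_k^t + b)\}$ is the optimal slack. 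Evaluating at the iteration-$t$ optimizer $(\bar w^t,\bar b^t)$ gives exactly $F^t$, and the analogous expression over $K^{t-1}$ evaluated at $(\bar w^{t-1},\bar b^{t-1})$ gives $F^{t-1}$.

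The comparison then proceeds as follows. Plugging $(\bar w^t,\bar b^t)$ into the iteration-$(t{-}1)$ objective and using optimality of $(\bar w^{t-1},\bar b^{t-1})$ for \eqref{def_SVM2} at $t-1$ yields the first link $F^{t-1} \le \tfrac12\|\bar w^t\|^2 + M\sum_{k \in K^{t-1}}\max\{0,\ \sum_{i \in C_k^{t-1}}[1 - y_i(\bar w^t x_i + \bar b^t)]\}$. The quadratic term $\tfrac12\|\bar w^t\|^2$ is identical in the $t-1$ and $t$ expressions, so only the hinge terms need comparison. Every surviving cluster contributes the same term on both sides; for the split cluster, writing $A = \sum_{i \in C_1^t}[1 - y_i(\bar w^t x_i + \bar b^t)]$ and $B = \sum_{i \in C_2^t}[1 - y_i(\bar w^t x_i + \bar b^t)]$, subadditivity of the positive part $(A+B)^+ \le A^+ + B^+$ bounds the merged term $\max\{0,\,A+B\}$ by $\max\{0,A\} + \max\{0,B\}$, which are precisely the two hinge terms for $C_1^t$ and $C_2^t$. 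Summing, the iteration-$(t{-}1)$ objective at $(\bar w^t,\bar b^t)$ is at most the iteration-$t$ objective at $(\bar w^t,\bar b^t)$, and the latter equals $F^t$ by the previous paragraph, closing the chain and giving $F^{t-1} \le F^t$.

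Finally, the equalities $F^T = E^T = E^*$ follow directly from Proposition~\ref{proposition_svm_F_and_E_equivalent}: at the terminal iteration $T$ the optimality condition holds by the stopping rule, so that proposition gives both $E^T = F^T$ and that the converted solution $(\hat w^T,\hat b^T,\hat\xi^T)$ is optimal for \eqref{def_SVM}, i.e.\ $E^T = E^*$. I expect the only delicate point to be the replacement of the LAD triangle inequality $|A+B| \le |A| + |B|$ by the hinge subadditivity $(A+B)^+ \le A^+ + B^+$; once the aggregated objective is rewritten in the entrywise hinge form above, the monotonicity argument is structurally identical to the LAD case, and the $L_2$ regularizer causes no difficulty since it depends only on $w$ and is unchanged when a cluster is split.
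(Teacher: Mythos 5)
Your proposal is correct and follows essentially the same route as the paper's proof: both compare $F^{t-1}$ to the iteration-$(t-1)$ objective evaluated at $(\bar w^t,\bar b^t)$ via optimality of $(\bar w^{t-1},\bar b^{t-1})$, use the identity $|C_k|\max\{0,1-y_k(w x_k+b)\}=\max\{0,\sum_{i\in C_k}[1-y_i(w x_i+b)]\}$ (valid because labels agree within a cluster and $x_k$ is the centroid), and then apply subadditivity of the positive part to the single split cluster while surviving clusters contribute identical terms. The only cosmetic difference is that you rewrite both objectives in entrywise hinge form up front, whereas the paper carries the aggregate slacks $\tilde\xi^{t-1},\bar\xi^t$ through the chain and invokes the same inequality midstream.
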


\begin{proof}
Recall that $(\bar{w}^t, \bar{b}^t, \bar{\xi}^t)$ is an optimal solution to \eqref{def_SVM2} with aggregated data $x^t$ and $y^t$. Let $(\tilde{w}^{t-1}, \tilde{b}^{t-1},\tilde{\xi}^{t-1})$ be a feasible solution to \eqref{def_SVM2} with aggregated data $x^{t-1}$ and $y^{t-1}$ such that $\tilde{w}^{t-1} = \bar{w}^t$, $\tilde{b}^{t-1} = \bar{b}^t$, and $\tilde{\xi}^{t-1} = \max \{ 0, 1 - y_{k}^{t-1} ( \bar{w}^t x_k^{t-1} + \bar{b}^t ) \} $ for $k \in K^{t-1}$. In other words, $(\tilde{w}^{t-1}, \tilde{b}^{t-1}, \tilde{\xi}^{t-1})$ is a feasible solution to \eqref{def_SVM2} with aggregated data $x^{t-1}$ and $y^{t-1}$, but generated based on $(\bar{w}^t,\bar{b}^t,\bar{\xi}^t)$. For simplicity, let us assume that $\{ C_1^{t-1} \} = C^{t-1} \setminus C^{t}$, $\{ C_1^{t},C_2^{t} \} = C^{t} \setminus C^{t-1}$, and $C_1^{t-1} = C_1^{t} \cup C_2^{t}$. The cases such that more than one cluster of $C^{t-1}$ are declustered in iteration $t$ can be derived using the same technique.

Observe that there exists a pair $(q,k)$, $q \in K^{t-1} \setminus \{1\}$ and $k \in K^t \setminus \{1,2\}$ such that
\begin{equation}
\label{one2one_match}
\tilde{\xi}_q^{t-1} = \bar{\xi}_k^t
\end{equation}
for all $q$ in $K^{t-1} \setminus \{1\}$ and the match between $K^{t-1} \setminus \{1\}$ and $K^t \setminus \{1,2\}$ is one-to-one. This is because the aggregated data for these clusters remains same and the hyper-plane used, $(\tilde{w}^{t-1}, \tilde{b}^{t-1})$ and $(\bar{w}^t, \bar{b}^t)$, are the same. Hence, we derive
\vspace{0.3cm}

\begin{tabular}{lll}
$F^{t-1}$ & $=$ & $\frac{1}{2} \| \bar{w}^{t-1} \|^2 + M |C_1^{t-1}| \bar{\xi}_1^{t-1}  + M \sum_{k \in K^{t-1} \setminus \{1\}} |C_k^{t-1}| \bar{\xi}_k^{t-1} $\\
	& $\leq$ & $\frac{1}{2} \| \tilde{w}^{t-1} \|^2 + M |C_1^{t-1}| \tilde{\xi}_1^{t-1}  + M \sum_{k \in K^{t-1} \setminus \{1\}} |C_k^{t-1}| \tilde{\xi}_k^{t-1} $\\
	\textcolor{white}{$F^{t-1}$}	& $\leq$ & $\frac{1}{2} \| \bar{w}^{t} \|^2 + M |C_1^{t-1}| \tilde{\xi}_1^{t-1}  +  M \sum_{k \in K^{t} \setminus \{1,2\}} |C_k^{t}| \bar{\xi}_k^{t} $\\
	& $\leq$ & $\frac{1}{2} \| \bar{w}^{t} \|^2 + M |C_1^{t}| \bar{\xi}_1^{t}  + M |C_2^{t}| \bar{\xi}_2^{t} +  M \sum_{k \in K^{t} \setminus \{1,2\}} |C_k^{t}| \bar{\xi}_k^{t} $\\
\end{tabular}

\begin{tabular}{lll}
\textcolor{white}{$F^{t-1}$}	& $=$ & $\frac{1}{2} \| \bar{w}^{t} \|^2  +  M \sum_{k \in K^{t}} |C_k^{t}| \bar{\xi}_k^{t} $\\
	& $=$ & $F^t$,
\end{tabular}
\vspace{0.3cm}

\noindent where the first inequality holds because $\bar{w}^{t-1}, \bar{b}^{t-1}, \bar{\xi}^{t-1}$ is an optimal solution to \eqref{def_SVM2} with aggregated data $x^{t-1}$ and $y^{t-1}$, the second inequality follows by the fact that $\tilde{w}^{t-1} = \bar{w}^t$ and by \eqref{one2one_match}, and the last inequality is true because
\vspace{0.3cm}

\begin{tabular}{lll}
$|C_1^{t-1}| \tilde{\xi}_1^{t-1} $ & $=$ & $|C_1^{t-1}|$ $ \max \Big\{ 0, 1 - y_1^{t-1} (\bar{w}^t x_1^{t-1} + \bar{b}^t) \Big\}$\\
 & $=$ & $|C_1^{t-1}|$ $ \max \Big\{ 0, 1 - y_1^{t-1} (\bar{w}^t x_1^{t-1} + \bar{b}^t) \Big\} $\\
& $=$ & $\max \Big\{ 0, |C_1^{t-1}| - |C_1^{t-1}| y_1^{t-1} \bar{w}^t x_1^{t-1} - |C_1^{t-1}| y_1^{t-1} \bar{b}^t \Big\}$\\
& $=$ & $\max \Big\{ 0, |C_1^{t}|+|C_2^{t}| - y_1^{t-1} \bar{w}^t (\sum_{i \in C_1^t} x_i + \sum_{i \in C_2^t} x_i) -  |C_1^{t}| y_1^t \bar{b}^t - |C_2^{t}| y_2^t \bar{b}^t \Big\}$\\
\end{tabular}

\begin{tabular}{lll}
\textcolor{white}{$|C_1^{t-1}| \tilde{\xi}_1^{t-1} $}	& $=$ & $\max \Big\{ 0, |C_1^{t}| - |C_1^{t}| y_1^{t} \bar{w}^t x_1^t - |C_1^{t}| y_1^t \bar{b}^t + |C_2^{t}| - |C_2^{t}| y_2^{t} \bar{w}^t x_2^t - |C_2^{t}| y_2^t \bar{b}^t  \Big\}$\\
	& $\leq$ & $\max \Big\{ 0, |C_1^{t}| - |C_1^{t}| y_1^{t} \bar{w}^t x_1^t - |C_1^{t}| y_1^t \bar{b}^t \Big\} + \max \Big\{ 0, |C_2^{t}| - |C_2^{t}| y_2^{t} \bar{w}^t x_2^t - |C_2^{t}| y_2^t \bar{b}^t  \Big\} $\\
	& $=$ & $|C_1^{t}| \max \{ 0, 1 - y_1^{t} ( \bar{w}^t x_1^t +\bar{b}^t) \} + |C_2^{t}| \max \{ 0,  1 - y_2^{t}( \bar{w}^t x_2^t + \bar{b}^t ) \}$\\
	& $=$ & $|C_1^{t}| \bar{\xi}_1^{t}  + |C_2^{t}| \bar{\xi}_2^{t}$.
\end{tabular}
\vspace{0.3cm}

\noindent where the fourth line holds by (\rmnum{1}) $|C_1^{t-1}| = |C_1^{t}|+|C_2^{t}| $,(\rmnum{2}) $y_1^t = y_2^t = y_1^{t-1}$ (by \eqref{property_aggregated_y}), and (\rmnum{3}) by the definition of $x_1^{t-1}$, and the fifth line holds due to the definition of $x_1^t$ and $x_2^t$. This completes the proof.
\end{proof}

So far, we have explained the algorithm based on the primal formulation of SVM. However, we can also explain the relationship between the dual of the original and aggregated problems by proposing a modified procedure. Let us divide observations in $C_k^t$ into three sets.
\begin{enumerate}[noitemsep]
\item $1-y_i (\bar{w}^t x_i + \bar{b}^t) < 0$ for $i \in C_k^t$
\item $1-y_i (\bar{w}^t x_i + \bar{b}^t) = 0$ for $i \in C_k^t$
\item $1-y_i (\bar{w}^t x_i + \bar{b}^t) > 0$ for $i \in C_k^t$
\end{enumerate}
These three sets correspond to the following three cases for the original data given hyperplane $(\bar{w}^t, \bar{b}^t)$ from an optimal solution of \eqref{def_SVM2}.
\begin{enumerate}[noitemsep]
\item Observations correctly classified: $1-y_i (\bar{w}^t x_i + \bar{b}^t) < 0$ and $\hat{\xi}_i^t = 0$ in \eqref{def_SVM} and $\hat{\alpha}_i = 0$ in the dual of \eqref{def_SVM}
\item Observations on the hyperplane: $1-y_i (\bar{w}^t x_i + \bar{b}^t) = 0$ and $\hat{\xi}_i^t = 0$ in \eqref{def_SVM} and $0 < \hat{\alpha}_i < M$ in the dual of \eqref{def_SVM}
\item Observations in the margin or misclassified: $1-y_i (\bar{w}^t x_i + \bar{b}^t) > 0$ and $\hat{\xi}_i^t > 0$ in \eqref{def_SVM} and $\hat{\alpha}_i = M$ in the dual of \eqref{def_SVM}
\end{enumerate}
Suppose we are given $\bar{\alpha}^t$, a dual optimal solution that corresponds to $(\bar{w}^t, \bar{b}^t)$. Then we can construct dual optimal solution $\hat{\alpha}^t$ for the original problem from $\bar{\alpha}^t$ by
\begin{center}
$\hat{\alpha}_i^t = \frac{\bar{\alpha}^t}{|C_k|}$ for $i \in C_k^t, k \in K^t$.
\end{center}
With this definition, now all original observations in a cluster belong to exactly one of the three categories above.

We first show that $\hat{\alpha}^t$ is a feasible solution. Let us consider cluster $k \in K^t$. We derive
\begin{center}
$\sum_{i \in C_k^t} \hat{\alpha}_i^t y_i = \sum_{i \in C_k^t} \hat{\alpha}_i^t y_k = \sum_{i \in C_k^t} \frac{\bar{\alpha}_k^t}{|C_k^t|} y_k = y_k \frac{|C_k^t|\bar{\alpha}_k^t }{|C_k^t|} = y_k \bar{\alpha}_k^t$,
\end{center}
where the first equality holds because all labels are the same for a cluster and the second equality is obtained by plugging the definition of $\hat{\alpha}_i^t$. Because $y_k \bar{\alpha}_k^t = \sum_{i \in C_k^t} \hat{\alpha}_i^t y_i$ and $\sum_{k \in K^t} y_k \bar{\alpha}_k^t = 0$, we conclude that $\hat{\alpha}^t$ is a feasible solution to \eqref{def_SVM_dual_kernel}.

In order to show optimality, we show that $\hat{\alpha}^t$ and $\bar{\alpha}^t$ give the same hyperplane. Let us consider cluster $k \in K^t$. We derive
\begin{center}
$\sum_{i \in C_k^t} \hat{\alpha}_i^t y_i x_i = \sum_{i \in C_k^t} \hat{\alpha}_i^t y_k x_i = \sum_{i \in C_k^t} \frac{\bar{\alpha}_k^t }{|C_k^t|} y_k x_i = \bar{\alpha}_k^t y_k  \frac{\sum_{i \in C_k^t}  x_i}{|C_k^t|} = \bar{\alpha}_k^t y_k x_k^t$, 
\end{center}
where the first equality holds because all labels are the same for a cluster, the second equality is obtained by plugging the definition of $\hat{\alpha}_i^t$, and the last equality is due to the definition of $x_k^t$. Because $\bar{\alpha}_k^t y_k x_k^t = \sum_{i \in C_k^t} \hat{\alpha}_i^t y_i x_i$, by summing over all clusters, we obtain $\bar{w}^t =\bar{\alpha}_k^t y_k x_k^t = \sum_{i \in C_k^t} \hat{\alpha}_i^t y_i x_i = \hat{w}^t$, which completes the proof.

\subsection{Semi-Supervised SVM}

The task of semi-supervised learning is to decide classes of unlabeled (unsupervised) observations given some labeled (supervised) observations. Semi-supervised SVM (S$^{3}$VM) is an SVM-based learning model for semi-supervised learning. In S$^{3}$VM, we need to decide classes for unlabeled observations in addition to finding a hyperplane. Let $I_l = \{1,\cdots,l\}$ and $I_u = \{l+1, \cdots,n \}$ be the index sets of labeled and unlabeled observations, respectively. The standard S$^{3}$VM with linear kernel is written as the following minimization problem over both the hyperplane parameters $(w,b)$ and the unknown label vector $d := [d_{l+1} \cdots d_n]$,
\begin{equation}
\label{definition_sssvm}
E^* = \min_{w,b,d} \frac{1}{2} \| w\|^2 + M_l \sum_{i \in I_l} \max \{ 0, 1-y_i(w x_i + b)\} + M_u \sum_{i \in I_u} \max \{ 0, 1-d_i(w x_i + b)\},
\end{equation}
where $x = [x_{ij}] \in \mathbb{R}^{n \times m}$ is the feature data, $y = [y_i] \in \{-1,1\}^{l}$ is the class (label) data, and $w \in \mathbb{R}^{m}$, $b \in \mathbb{R}$, and $d \in \{-1,1\}^{|I_u|}$ are the decision variables. By introducing error term $\xi \in \mathbb{R}_{+}^n$, \eqref{definition_sssvm} is rewritten as

\begin{equation}
\label{def_SSSVM}
\begin{array}{llll}
E^* & = & \displaystyle \min_{w,\xi,b,d}& \displaystyle  \frac{1}{2} \|w\|^2 + M_l \sum_{i \in I_l} \xi_i + M_u \sum_{i \in I_u} \xi_i\\
    &   & \displaystyle \quad \mbox{s.t.} & \displaystyle y_i(w x_i + b) \geq 1 - \xi_i, \xi_i \geq 0, i \in I_l,\\
    &   & & \displaystyle d_i(w x_i + b) \geq 1 - \xi_i, \xi_i \geq 0, i \in I_u.
\end{array}
\end{equation}
Observe that $y_i$, $i \in I_l$, is given as data, whereas $d_i$, $i \in I_u$, is unknown and decision variable. Note that \eqref{def_SSSVM} has non-convex constraints. In order to eliminate the non-convex constraints, Bennett and Demiriz \cite{Bennett:1999} proposed a mixed integer quadratic programming (MIQP) formulation
\begin{equation}
\label{def_SSSVM_alternative}
\begin{array}{llll}
E^* = & \displaystyle \min_{\substack{w,b,d\\\xi,\eta^+, \eta^-} }  &  \displaystyle \frac{1}{2} \|w\|^2 + M_l \sum_{i \in I_l} \xi_i + M_u \sum_{i \in I_u} \eta^+_i + \eta^-_i  \\ 
& \quad s.t. &    y_i(w x_i + b) \geq 1 - \xi_i, \xi_i \geq 0, & i \in I_l, \\
&	& w x_i + b + \eta^+_i + M (1-d_i) \geq 1, & i \in I_u, \\
&	& 0 \leq \eta^+_i \leq M d_i,  & i \in I_u,  \\
&	& -(w x_i + b) + \eta^-_i + M d_i \geq 1,&  i \in I_u \\
&	& 0 \leq \eta^-_i \leq M(1-d_i),  & i \in I_u,\\
&	& d_i \in \{0,1\},  & i \in I_u, 
\end{array}
\end{equation}
where $M>0$ is a large number and $w \in \mathbb{R}^{m}$, $b \in \mathbb{R}$, $d \in \{-1,1\}^{|I_u|}$, $\eta^+ \in \mathbb{R}_{+}^{|I_u|}$, $\eta^- \in \mathbb{R}_{+}^{|I_u|}$ are the decision variables. Note that $d_i$ in \eqref{def_SSSVM_alternative} is different from $d_i$ in \eqref{definition_sssvm}. In \eqref{def_SSSVM_alternative}, if $d_i = 1$ then observation $i$ is in class 1 and if $d_i = 0$ then observation $i$ is in class $-1$. Note also that, by the objective function, if $d_i = 1$ then $\eta_i^-$ becomes 0 and if $d_i = 0$ then $\eta_i^+$ becomes 0 at optimum. A Branch-and-Bound algorithm to solve \eqref{definition_sssvm} is proposed by Chapelle \textit{et al} \cite{Chapelle-etal:07} and an MIQP solver is used to solve \eqref{def_SSSVM_alternative} in \cite{Bennett:1999}. However, both of the works only solve small size problems. See Chapelle \textit{et al}  \cite{Chapelle-etal:08} for detailed survey of the literature. Observe that \eqref{def_SSSVM_alternative} fits \eqref{opt_problem_for_dab}. Hence, we use AID to solve \eqref{def_SSSVM_alternative} with larger size instances, which were not solved by the works in \cite{Bennett:1999, Chapelle-etal:07}.

Similar to the approach used to solve SVM in Section 3.1, we define clusters $C^t = \{ C_1^t, C_2^t, \cdots, C_{|K_l^t|}^t \}$ for the labeled data, where $K_l^t$ is the index set of the clusters of labeled data in iteration $t$ and each cluster contains observations with same label. In addition, we have clusters for the unlabeled data $D^t = \{ D_1^t, D_2^t, \cdots, D_{|K_u^t|}^t \}$, where $K_u^t$ is the index set of the clusters of unlabeled data in iteration $t$. In the S$^{3}$VM case, initially we need to run a clustering algorithm three times. We generate aggregated data by
\begin{itemize}[noitemsep]
\item[] $x_{k}^t = \frac{\sum_{i \in C_k^t} x_{i}}{|C_k^t|}$ and $y_k^t = \frac{\sum_{i \in C_k^t} y_i}{|C_k^t|}$ for each $k \in K_l^t$ given $C^t$,
\item[] $x_{k}^t = \frac{\sum_{i \in D_k^t} x_{i}}{|D_k^t|}$ for each $k \in K_u^t$ given $D^t$.
\end{itemize}
Using the aggregated data, we obtain the aggregated version of \eqref{def_SSSVM} as
\begin{equation}
\label{def_SSSVM_agg}
\begin{array}{llll}
F^t = & \displaystyle \min_{w^t,\xi^t,b^t,d^t}& \displaystyle \frac{1}{2} \|w\|^2 + M_l \sum_{k \in K_l^t} |C_k^t| \xi_k^t + M_u \sum_{k \in K_u^t} |D_k^t| \xi_k^t\\
       & \displaystyle \quad \mbox{s.t.} & \displaystyle y_k^t(w^t x_k^t + b^t) \geq 1 - \xi_k^t, \xi_k^t \geq 0, k \in K_l^t,\\
       & & \displaystyle d_k^t(w^t x_k^t + b^t) \geq 1 - \xi_k^t, \xi_k^t \geq 0, k \in K_u^t,
\end{array}
\end{equation}

\textcolor{white}{ywpark}

\noindent where $y_k^t$, $k \in K_l^t$, is known and $d_k^t$, $k \in K_u^t$, is unknown. Observe that \eqref{def_SSSVM_agg} can be solved optimally by the Branch-and-Bound algorithm in \cite{Chapelle-etal:07} or by an MIQP solver.

In the following lemma, we show that, given an optimal hyperplane $(w^*,b^*)$, optimal values of $\xi^* \in \mathbb{R}^{n}$ and $d \in \{0,1\}^{|I_u|}$ can be obtained.
\begin{lemma}
\label{lemma_sssvm_opt_sol_property}
Let $(w^*,b^*,\xi^*, d^*)$ be an optimal solution for \eqref{def_SSSVM}. For $i \in I_u$, if $\xi_i^* > 0$, then we must have $d_i^*(w^* x_i + b^*) \geq 0$. For $i \in I_u$, if $\xi_i^* = 0$ and $\max \{ 0,1-d_i(w^* x_i + b^*)\} = 0$ only for one of $d_i = 1$ or $d_i = -1$, then we must have $d_i^* = 1$ if $w^* x_i + b^* \geq 0$, $d_i^* = -1$ if $w^* x_i + b^* < 0$. A similar property holds for an optimal solution of the aggregated problem \eqref{def_SSSVM_agg}.
\end{lemma}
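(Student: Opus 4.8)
The plan is to prove the two statements by separate elementary arguments, both resting on the observation that at any optimum of \eqref{def_SSSVM} the slacks are pinned down by $\xi_i^* = \max\{0,\,1 - d_i^*(w^* x_i + b^*)\}$ for $i \in I_u$. This holds because the objective is strictly increasing in each $\xi_i$ (as $M_u > 0$) while the only constraints involving $\xi_i$ are $\xi_i \geq 1 - d_i(w^* x_i + b^*)$ and $\xi_i \geq 0$, so minimization drives $\xi_i$ down to the larger of the two lower bounds.

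For the first part I would argue by contradiction using a single-coordinate exchange on the label $d_i$. Suppose $\xi_i^* > 0$ yet $d_i^*(w^* x_i + b^*) < 0$. Keeping $w^*, b^*$ and every other variable fixed, flip the label to $\tilde d_i = -d_i^* \in \{-1,1\}$ and reset the slack to its optimal value $\tilde\xi_i = \max\{0,\,1 + d_i^*(w^* x_i + b^*)\}$; this leaves the modified point feasible, since the reset slack satisfies the unlabeled constraint by construction and all other constraints are untouched. Writing $s = d_i^*(w^* x_i + b^*) < 0$, the contribution of observation $i$ changes from $M_u(1 - s)$ to $M_u\max\{0,\,1 + s\}$. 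Since $s < 0$ one checks $\max\{0,\,1 + s\} < 1 - s$ in both subcases (if $1 + s \geq 0$ because $1 + s < 1 - s$; if $1 + s < 0$ because $0 < 1 - s$), so the objective strictly decreases, contradicting optimality. Hence $d_i^*(w^* x_i + b^*) \geq 0$.

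For the second part I would use feasibility directly rather than an exchange. If $\xi_i^* = 0$, the constraint $d_i^*(w^* x_i + b^*) \geq 1 - \xi_i^* = 1$ forces $d_i^*(w^* x_i + b^*) \geq 1 > 0$, so $d_i^*$ and $w^* x_i + b^*$ share the same sign; because $d_i^* \in \{-1,1\}$ this yields $d_i^* = 1$ when $w^* x_i + b^* \geq 0$ and $d_i^* = -1$ when $w^* x_i + b^* < 0$. The hypothesis that zero slack is attainable for only one of the two labels is precisely what removes ambiguity and certifies that this sign assignment is the unique optimal label. Finally, for the aggregated problem \eqref{def_SSSVM_agg} I would note that its unlabeled constraints $d_k^t(w^t x_k^t + b^t) \geq 1 - \xi_k^t$, $\xi_k^t \geq 0$ have identical structure, with strictly positive objective weights $M_u |D_k^t|$, so both arguments transfer verbatim to clusters $k \in K_u^t$.

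I expect the only delicate point to be the first part, where one must justify that modifying the single pair $(d_i, \xi_i)$ preserves feasibility and that re-optimizing $\xi_i$ alone cannot be improved upon, and then verify the strict inequality $\max\{0,\,1 + s\} < 1 - s$ for $s < 0$. The second part is essentially immediate from feasibility at zero slack, so there the main care lies in phrasing the uniqueness role played by the \emph{only one label} hypothesis.
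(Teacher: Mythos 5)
Your proposal is correct and rests on the same core observation as the paper's proof: since the slacks are pinned to $\xi_i^* = \max\{0,\,1-d_i^*(w^*x_i+b^*)\}$ at optimality, a label that disagrees with the sign of $w^*x_i+b^*$ can be flipped to strictly decrease the hinge term, which is exactly the paper's ``setting $d_i^*$ to match the sign decreases $\xi_i^*$ most'' argument recast as an exchange/contradiction. You are in fact slightly more complete than the paper, which writes out only the $\xi_i^*>0$ case and leaves the $\xi_i^*=0$ case and the transfer to \eqref{def_SSSVM_agg} implicit.
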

\begin{proof}
Suppose that $\xi_i^* > 0$. Hence, we have $\xi_i^* = 1 - d_i^*(w^* x_i + b^*) > 0$. If $w^* x_i + b^* < 0$, then setting $d_i^* = -1$ decreases $\xi_i^*$ most because $1 - (-1)(w^* x_i + b^*) < 1 - (1)(w^* x_i + b^*)$. Likewise, if $w^* x_i + b^* \geq 0$, then setting $d_i^* = 1$ decreases $\xi_i^*$.
\end{proof}

For the analysis, we define the following sets.
\begin{enumerate}[noitemsep]
\item For labeled observations in $I_l$, given hyperplane $(w,b)$, let us define subsets of $I_l$.
	\begin{enumerate}
	\item[] $I^{+}_{(w,b)} = \{ i \in I_l | 1-y_i(wx_i + b) > 0 \}$
	\item[] $I^{-}_{(w,b)} = \{ i \in I_l | 1-y_i(wx_i + b) \leq 0 \}$
	\end{enumerate}
\item For unlabeled observations in $I_u$, given hyperplane $(w,b)$ and labels $d$, let us define subsets of $I_u$.
	\begin{enumerate}[noitemsep]
	\item[] $I^{++}_{(w,b,d)} = \{ i \in I_u | 1-d_i(wx_i + b) > 0, wx_i + b > 0 \}$
	\item[] $I^{+-}_{(w,b,d)} = \{ i \in I_u | 1-d_i(wx_i + b) > 0, wx_i + b \leq 0 \}$
	\item[] $I^{-+}_{(w,b,d)} = \{ i \in I_u | 1-d_i(wx_i + b) \leq 0, wx_i + b > 0 \}$
	\item[] $I^{--}_{(w,b,d)} = \{ i \in I_u | 1-d_i(wx_i + b) \leq 0, wx_i + b \leq 0 \}$
	\end{enumerate}
	Note that $d_i$ that minimizes error is determined by the sign of $wx_i + b$ by Lemma \ref{lemma_sssvm_opt_sol_property}. This means that $I^{++}_{(w,b,d)}, I^{+-}_{(w,b,d)},I^{-+}_{(w,b,d)}$, and $I^{--}_{(w,b,d)}$ can be defined without $d$.
\end{enumerate}

Next we present the declustering criteria. Let $(w^*,b^*,\xi^*, d^*)$ and $(\bar{w}^t,\bar{b}^t,\bar{\xi}^t,\bar{d}^t)$ be optimal solutions to \eqref{def_SSSVM} and \eqref{def_SSSVM_agg}, respectively. Given $C^t$ and $(\bar{w}^t,\bar{b}^t,\bar{\xi}^t,\bar{d}^t)$, we define the clusters for iteration $t+1$ as follows. 
\begin{enumerate}[noitemsep]
\item[] \textsf{Step 1} $C^{t+1} \gets \emptyset$, $D^{t+1} \gets \emptyset$
\item[] \textsf{Step 2} For each  $k \in K_l^t$
	\begin{enumerate}[noitemsep]
	\item[] \textsf{Step 2(a)} If $1-y_i (\bar{w}^t x_i + \bar{b}^t) \leq 0$ for all $i \in C_k^t$, or if $1-y_i (\bar{w}^t x_i + \bar{b}^t) \geq 0$ for all $i \in C_k^t$, then $C^{t+1} \gets C^{t+1} \cup \{ C_k^t \}$ 	
	\item[] \textsf{Step 2(b)} Otherwise, first, decluster $C_k^t$ into two clusters: $C_{k+}^{t} = \{ i \in C_k^t | 1-y_i (\bar{w}^t x_i + \bar{b}^t) > 0 \}$ and $C_{k-}^{t} = \{ i \in C_k^t | 1-y_i (\bar{w}^t x_i + \bar{b}^t) \leq 0 \}$. Next, $C^{t+1} \gets C^{t+1} \cup \{ C_{k+}^{t}, C_{k-}^{t} \}$.
	\end{enumerate}
\item[] \textsf{Step 3} For each $k \in K_u^t$, 
	\begin{enumerate}[noitemsep]
	\item[] \textsf{Step 3(a)} Partition $D_k^t$ into four sub-clusters.
		\begin{enumerate}
		\item[] $D_{k++}^{t} = \{ i \in D_k^t | 1-\bar{d}_k^t (\bar{w}^t x_i + \bar{b}^t) > 0, \bar{w}^t x_i + \bar{b}^t > 0 \} = \{ i \in D_k^t \cap I^{++}_{(\bar{w}^t, \bar{b}^t, \bar{d}^t)} \}$
		\item[] $D_{k+-}^{t} = \{ i \in D_k^t | 1-\bar{d}_k^t (\bar{w}^t x_i + \bar{b}^t) > 0, \bar{w}^t x_i + \bar{b}^t \leq 0 \}= \{ i \in D_k^t \cap I^{+-}_{(\bar{w}^t, \bar{b}^t, \bar{d}^t)} \}$			
		\item[] $D_{k-+}^{t} = \{ i \in D_k^t | 1-\bar{d}_k^t (\bar{w}^t x_i + \bar{b}^t) \leq 0, \bar{w}^t x_i + \bar{b}^t > 0 \} = \{ i \in D_k^t \cap I^{-+}_{(\bar{w}^t, \bar{b}^t, \bar{d}^t)} \}$
		\item[] $D_{k--}^{t} = \{ i \in D_k^t | 1-\bar{d}_k^t (\bar{w}^t x_i + \bar{b}^t) \leq 0, \bar{w}^t x_i + \bar{b}^t \leq 0 \} = \{ i \in D_k^t \cap I^{--}_{(\bar{w}^t, \bar{b}^t, \bar{d}^t)} \}$
		\end{enumerate}
	\item[] \textsf{Step 3(b)} If one of $D_{k++}^{t}$, $D_{k+-}^{t}$, $D_{k-+}$, and $D_{k--}^{t}$ equals to $D_k^t$, then $D^{t+1} \gets D^{t+1} \cup \{ D_{k}^{t} \}$. Otherwise, we set $D^{t+1} \gets D^{t+1} \cup \{ D_{k++}^{t}, D_{k+-}^{t},$ $D_{k-+},D_{k--}^{t} \}$. Note that any of $D_{k++}^{t}$, $D_{k+-}^{t}$, $D_{k-+}^{t}$, or $D_{k--}^{t}$ can be empty.
	\end{enumerate}
\end{enumerate}

In Figure \ref{fig:s3vm}, we illustrate AID for S$^{3}$VM. As labeled observations follow the illustration in Figure \ref{fig:svm}, we only illustrate unlabeled observations. In Figure \ref{fig:s3vm_1}, the small white circles are the original entries and the black circles are the aggregated entries. The plain line represents the separating hyperplane $(\bar{w}^t,\bar{b}^t)$ obtained from an optimal solution to \eqref{def_SSSVM_agg}, where the margins are implied by the dotted lines. The original and aggregated observations have been assigned to either + or $-$ (1 or -1, respectively): the labels of aggregated entries are from the optimal solution of the aggregated problem, the labels of the original entries are based on $(\bar{w}^t,\bar{b}^t)$ and Lemma \ref{lemma_sssvm_opt_sol_property}. Observe that two clusters (gray large circles) violate the optimality conditions. In Figure \ref{fig:s3vm_2}, one of the two violating clusters is partitioned into four subclusters: (\rmnum{1}) entries with + labels and under the zero error boundary, (\rmnum{2}) entries with $-$ labels and under the zero error boundary, (\rmnum{3}) entries with + labels and above the zero error boundary, and (\rmnum{4}) entries with $-$ labels and above the zero error boundary. The other cluster is partitioned into two subclusters. Based on the declustering criteria, the two clusters are declustered and we obtain new clusters in Figure \ref{fig:s3vm_3}. Note that new labels will be decided after solving \eqref{def_SSSVM_agg} with the new aggregated data.

\begin{figure}[ht]
     \begin{center}
        \subfigure[Clusters $D^t$ and $(\bar{w}^t,\bar{b}^t)$]{%
           \includegraphics[scale=0.17]{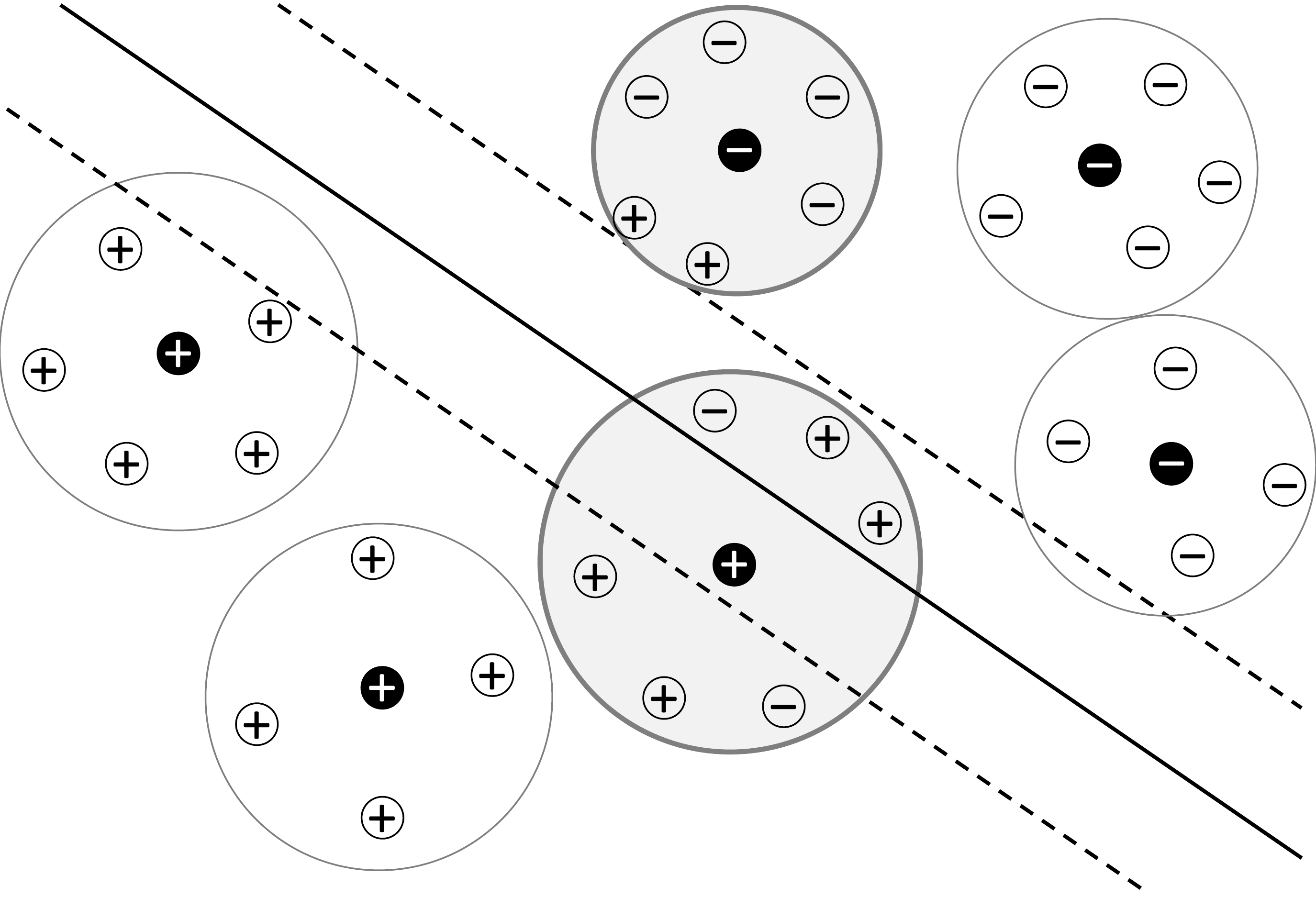} \label{fig:s3vm_1}
        }\qquad
        \subfigure[Declustered]{%
           \includegraphics[scale=0.17]{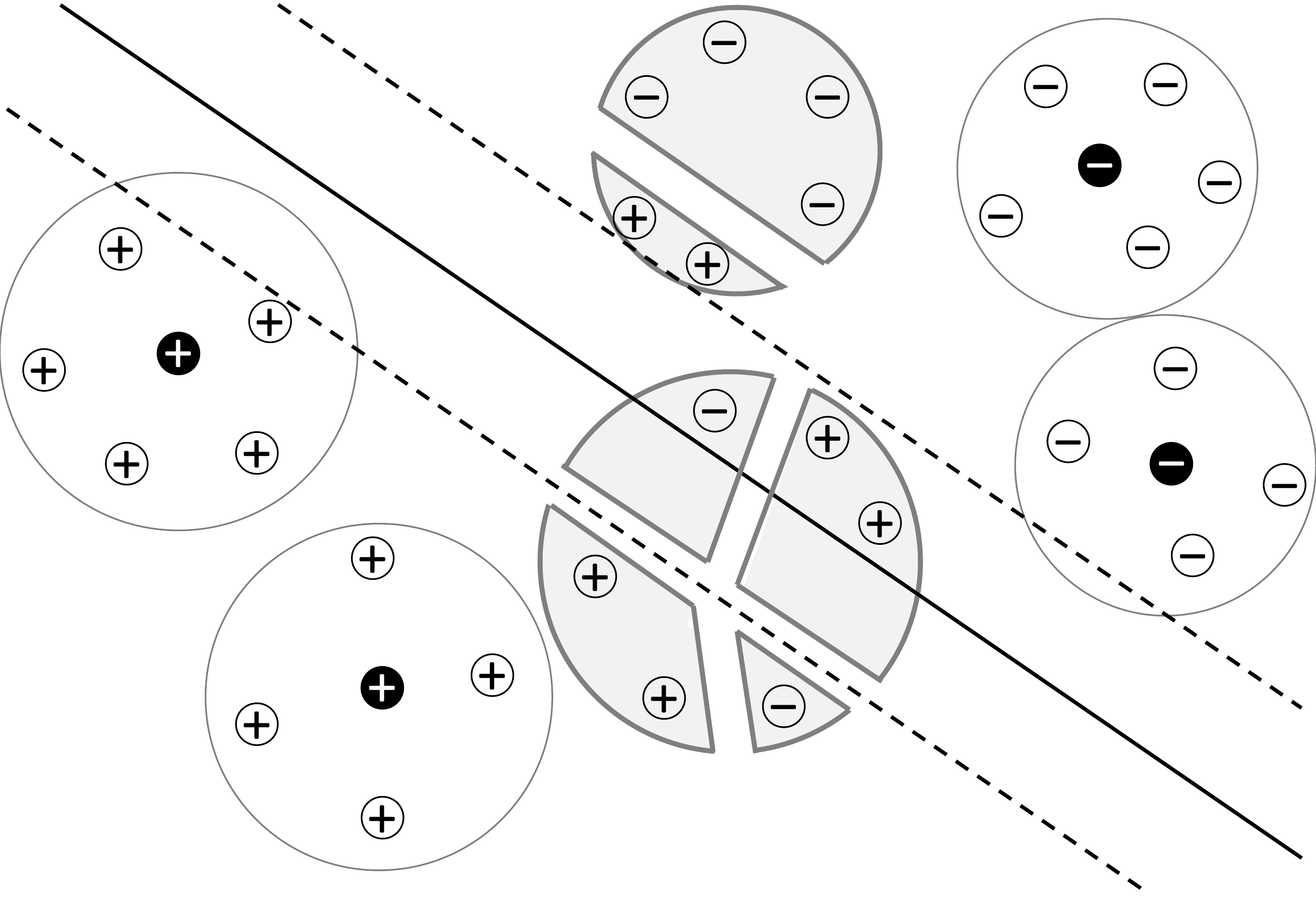} \label{fig:s3vm_2}
        }\qquad
        \subfigure[New clusters $D^{t+1}$]{%
           \includegraphics[scale=0.17]{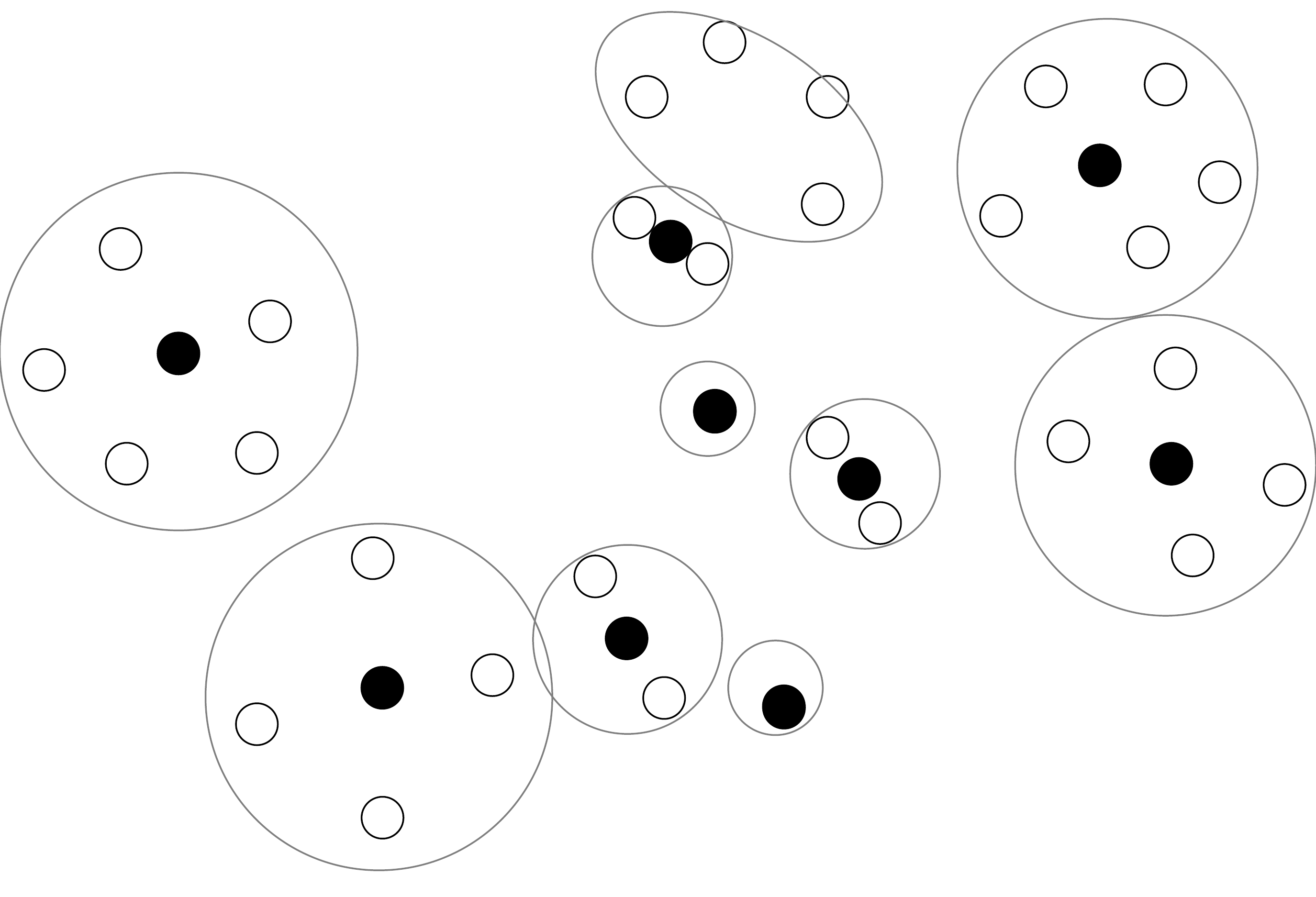} \label{fig:s3vm_3}
        }
    \end{center}
    \vspace{-0.5cm}
    \caption{Illustration of AID for S$^{3}$VM}
    \label{fig:s3vm}  
\end{figure}

Note that a feasible solution to \eqref{def_SSSVM} does not have the same dimension as a feasible solution to \eqref{def_SSSVM_agg}. In order to analyze the algorithm, we convert a feasible solution to \eqref{def_SSSVM_agg} to a feasible solution to \eqref{def_SSSVM}. Given a feasible solution $(w^t,b^t,\xi^t,d^t) \in (\mathbb{R}^m, \mathbb{R}, \mathbb{R}^{|K_l^t|+|K_u^t|}, \mathbb{R}^{|K_u^t}|)$ to \eqref{def_SSSVM_agg}, we define a feasible solution $(w,b,\xi,d) \in (\mathbb{R}^m, \mathbb{R}, \mathbb{R}^n, \mathbb{R}^{|I_u|})$ to \eqref{def_SSSVM} as follows.
\begin{enumerate}[noitemsep]
	\item[] $w := w^t$, $b := b^t$
	\item[] $d_i = \left \{
	\begin{array}{ll}
		1,  &  \mbox{if } w^tx_i + b^t < 0,\\ 
		-1, &  \mbox{if } w^tx_i + b^t \geq 0,
	\end{array}
\right. \qquad \mbox{for } i \in D_k^t$ and $k \in K_u^t,$
	\item[] $\xi_i := \max \{ 0, 1-y_i(w^t x_i + b^t) \}$ for $i \in I_l$
	\item[] $\xi_i := \max \{ 0, 1-d_i^t(w^t x_i + b^t) \}$ for $i \in I_u$
	\end{enumerate}

Using the above procedure, we map an optimal solution $(\bar{w}^t,\bar{b}^t,\bar{\xi}^t,\bar{d}^t)$ to \eqref{def_SSSVM_agg} to a feasible solution $(\hat{w}^t,\hat{b}^t,\hat{\xi}^t,\hat{d}^t)$ to \eqref{def_SSSVM}. The objective function value of $(\hat{w}^t,\hat{b}^t,\hat{\xi}^t,\hat{d}^t)$ is evaluated by
\begin{equation}
E^t = \frac{1}{2} \|\hat{w}^t\|^2 + M_l \sum_{i \in I_l} \hat{\xi}^t_i + M_u \sum_{i \in I_u} \hat{\xi}^t_i.
\end{equation}

We next explain the optimality condition and show its correctness. Let $C^g = \{ C_1^g,C_2^g, \cdots, C_{|K_l^g|}^g \}$ and $D^g = \{ D_1^g,D_2^g, \cdots, D_{|K_u^g|}^g \}$ be arbitrary clusters of labeled and unlabeled data where $K_l^g = \{1,2,\cdots,|K_l^g| \}$ and $K_u^g = \{1,2,\cdots,|K_u^g| \}$ are the associated index sets of clusters, respectively. Let us consider the following optimization problem.

\vspace{-0.5cm}
\begin{equation}
\label{def_sssvm_ext}
\begin{array}{llll}
\displaystyle G^* = & \displaystyle \min_{\substack{w,b,d,\\C^g,D^g}}& \multicolumn{2}{l}{\displaystyle \frac{1}{2} \| w\|^2 + M_l \sum_{i \in I_l} \max \{ 0, 1-y_i(w x_i + b)\} + M_u \sum_{i \in I_u} \max \{ 0, 1-d_i(w x_i + b)\}} \\
& \quad s.t.  &  C_k^g \subseteq I^{+}_{(w,b)} \mbox{ or } C_k^g \subseteq I^{-}_{(w,b)}, &   k \in K_l^g, \\
	& & D_k^g \subseteq I^{++}_{(w,b,d)} \mbox{ or } D_k^g \subseteq I^{+-}_{(w,b,d)} \mbox{ or } D_k^g \subseteq I^{-+}_{(w,b,d)} \mbox{ or } D_k^g \subseteq I^{--}_{(w,b,d)}, & k \in K_u^g,
\end{array}
\end{equation}

\noindent where $w \in \mathbb{R}^{m}$, $b \in \mathbb{R}$, $d \in \{-1,1\}^{|I_u|}$, and $C^g$ and $D^g$ are the cluster decision sets. In fact, compare to \eqref{definition_sssvm}, \eqref{def_sssvm_ext} has additional constraints and clustering decision to make. Observe that given an optimal solution to \eqref{definition_sssvm}, we can easily find $C^g$ and $D^g$ satisfying the constraints in \eqref{def_sssvm_ext} by simply classifying each observation. Hence, it is trivial to see that 
\begin{equation}
\label{eqn_estart_equals_gstar}
E^* = G^*.
\end{equation}
For the analysis, we will use $G^*$ and \eqref{def_sssvm_ext} instead of $E^*$ and \eqref{definition_sssvm}, respectively.

Next, let us consider the following aggregated problem.

\vspace{-0.5cm}
\begin{equation}
\label{def_sssvm_ext_agg}
\begin{array}{llll}
H^*  =  & \displaystyle \min_{\substack{w,b,d,\\C^g,D^g}} & \multicolumn{2}{l}{\displaystyle \frac{1}{2} \| w\|^2 + M_l \sum_{k \in K_l^g} |C_k^g| \max \{ 0, 1-y_k(w x_k + b)\} + M_u \sum_{k \in K_u^g} |D_k^g| \max \{ 0, 1-d_k(w x_k + b)\}} \\
 & \quad s.t. &  C_k^g \subseteq I^{+}_{(w,b)} \mbox{ or } C_k^g \subseteq I^{-}_{(w,b)}, & k \in K_l^g,  \\
	& & D_k^g \subseteq I^{++}_{(w,b,d)} \mbox{ or } D_k^g \subseteq I^{+-}_{(w,b,d)} \mbox{ or } D_k^g \subseteq I^{-+}_{(w,b,d)} \mbox{ or } D_k^g \subseteq I^{--}_{(w,b,d)}, & k \in K_u^g, \\
& & x_k = \frac{\sum_{i \in C_k^g} x_i}{|C_k^g|}, y_k = \frac{\sum_{i \in C_k^g} y_i}{|C_k^g|}, &  k \in K_l^g, \\
& & x_k = \frac{\sum_{i \in D_k^g} x_i}{|D_k^g|}, & k \in K_u^g,
\end{array}
\end{equation}

\noindent where $w \in \mathbb{R}^{m}$, $b \in \mathbb{R}$, and $d \in \{-1,1\}^{|K_u^g|}$, and $C^g$ and $D^g$ are the cluster decision sets. Note that $x_k$ and $y_k$ are now decision variables that depend on $C_k^g$ and $D_k^g$. Note that due to characteristic of the subsets $I^{++}_{(w,b,d)}, I^{+-}_{(w,b,d)},I^{-+}_{(w,b,d)}$, and $I^{--}_{(w,b,d)}$, we can replace $d_i$ by $d_k$ in the definition of the subsets.

\begin{lemma}
\label{lemma_equiv_sssvm_ext_agg}
There is a one-to-one correspondence between feasible solutions of \eqref{def_sssvm_ext} and \eqref{def_sssvm_ext_agg} which preserves the objective function value.
\end{lemma}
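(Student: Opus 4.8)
The plan is to exhibit the correspondence explicitly and then verify, cluster by cluster, that it preserves the objective. I would keep the hyperplane and the clusterings fixed: a feasible point $(w,b,d,C^g,D^g)$ of \eqref{def_sssvm_ext} is sent to the point of \eqref{def_sssvm_ext_agg} with the same $w$, $b$, $C^g$, $D^g$, with $x_k,y_k$ defined to be the cluster centroids (as forced by the last two constraint blocks of \eqref{def_sssvm_ext_agg}), and with the aggregated label $d_k$ on each unlabeled cluster set to the common value of $d_i$ over $i\in D_k^g$; the inverse map sets $d_i := d_k$ for $i \in D_k^g$. Feasibility transfers immediately in both directions, because the subset constraints $C_k^g\subseteq I^{\pm}_{(w,b)}$ and $D_k^g\subseteq I^{\pm\pm}_{(w,b,d)}$ are constraints on the original observations, are $d$-independent by the remark following their definition, and are literally identical in the two problems; only the objective and the length of $d$ differ.

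The heart of the argument is that the objective is preserved, which I would check term by term; the $\frac12\|w\|^2$ term is unchanged, so only the two hinge-loss sums matter. For a labeled cluster, the constraint $C_k^g\subseteq I^{+}_{(w,b)}$ or $C_k^g\subseteq I^{-}_{(w,b)}$ means the hinge $\max\{0,1-y_i(wx_i+b)\}$ is either uniformly zero or uniformly equal to its linear part across the whole cluster; since labeled clusters are single-label so that $y_k=y_i$ for all $i\in C_k^g$, in the active case
\[
\sum_{i\in C_k^g}\bigl(1-y_i(wx_i+b)\bigr)=|C_k^g|-y_k\Bigl(w\!\!\sum_{i\in C_k^g}\!x_i+|C_k^g|b\Bigr)=|C_k^g|\bigl(1-y_k(wx_k+b)\bigr),
\]
which is exactly the aggregated term, and in the inactive case both sides vanish. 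This is precisely the collapse already carried out for SVM in Proposition \ref{proposition_svm_F_and_E_equivalent}.

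I would handle the unlabeled clusters the same way, now using the four-way partition. By Lemma \ref{lemma_sssvm_opt_sol_property} the error-minimizing label is fixed by the sign of $wx_i+b$, so the four sets $I^{++}_{(w,b,d)},\dots,I^{--}_{(w,b,d)}$ are independent of $d$ and each unlabeled cluster, being contained in one of them, has a constant side $\mathrm{sign}(wx_i+b)$ and hence a single label $d_k$. On the two within-margin regions $I^{++},I^{+-}$ the hinge is uniformly active and the same linear collapse gives $\sum_{i\in D_k^g}(1-d_k(wx_i+b))=|D_k^g|(1-d_k(wx_k+b))$, while on the two beyond-margin regions $I^{-+},I^{--}$ every summand is zero and so is the aggregated term. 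Summing over all clusters yields equality of the two objectives at corresponding points, and since the map is a bijection this proves the claim; together with \eqref{eqn_estart_equals_gstar} it also shows $H^*=G^*=E^*$.

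The main obstacle, in my view, is the bookkeeping around the label variable $d$ rather than any single inequality. I must argue that the region constraint genuinely forces $d$ to be constant on each unlabeled cluster — this is where Lemma \ref{lemma_sssvm_opt_sol_property} and the remark that the four subsets can be written without $d$ are essential — so that the finite-dimensional $d\in\{-1,1\}^{|K_u^g|}$ of the aggregated problem is in honest bijection with the cluster-wise-constant labels of the disaggregated problem. The second delicate point is checking, in the beyond-margin case, that the centroid inherits the inactive status of the hinge: from $d_k(wx_i+b)\ge1$ for every $i\in D_k^g$ one averages to $d_k(wx_k+b)\ge1$, so no kink of the $\max$ is crossed in passing to the centroid; the analogous averaging gives strict positivity in the within-margin case. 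Once these two points are in place the objective identity is a routine linear computation identical to the SVM case.
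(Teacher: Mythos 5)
Your proof is correct and takes essentially the same route as the paper's: the same cluster-by-cluster collapse of the hinge loss using the same-side constraints, with your explicit active/inactive case split plus the averaging step being just an unpacked form of the paper's identity $\sum_i \max\{0,a_i\}=\max\{0,\sum_i a_i\}$ for same-signed $a_i$. Your extra care about the constancy of $d$ on each unlabeled cluster (via Lemma \ref{lemma_sssvm_opt_sol_property}) and about the feasibility transfer only makes explicit what the paper dismisses as ``easy to see.''
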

\begin{proof}
For $k \in K_l^g$, we derive
\vspace{0.2cm}

\begin{tabular}{lll}
$\sum_{i \in C_k^g} \max \{ 0, 1-y_i(wx_i + b) \}$ & $=$ & $\sum_{i \in C_k^g} \max \{ 0, 1-y_k(wx_i + b) \}$\\
	& $=$ & $ \max \{ 0, \sum_{i \in C_k^g} \big( 1-y_k(wx_i + b) \big) \}$\\
\end{tabular}

\begin{tabular}{lll}
\textcolor{white}{$\sum_{i \in C_k^g} \max \{ 0, 1-y_i(wx_i + b) \}$}	& $=$ & $ \max \{ 0, |C_k^g| -|C_k^g| y_k w \Big( \frac{\sum_{i \in C_k^g} x_i}{|C_k^g|} \Big)  - y_k |C_k^g| b  \}$\\
	& $=$ & $|C_k^g|  \max \{ 0,  1 - y_k (w x_k + b)  \}$,
\end{tabular}
\vspace{0.3cm}

\noindent where the first line holds since all $i$ in $C_k^g$ have the same label by the initial clustering, the second line holds since $C_k^g \subseteq I^{+}_{(w,b)}$ or $ C_k^g \subseteq I^{-}_{(w,b)}$ for any $k \in K_l^g$, and the fourth line follows from the constraint in \eqref{def_sssvm_ext_agg}.

For $k \in K_u^g$, we derive
\vspace{0.2cm}

\begin{tabular}{lll}
$\sum_{i \in D_k^g} \max \{ 0, 1-d_i(wx_i + b) \}$ & $=$ & $\sum_{i \in D_k^g} \max \{ 0, 1-d_k(wx_i + b) \}$\\
	& $=$ & $ \max \{ 0, \sum_{i \in D_k^g} \big( 1-d_k(wx_i + b) \big) \}$\\
	& $=$ & $ \max \{ 0, |D_k^g| -|D_k^g| d_k w \Big( \frac{\sum_{i \in D_k^g} x_i}{|D_k^g|} \Big)  - d_k |D_k^g| b  \}$\\
	& $=$ & $|D_k^g|  \max \{ 0,  1 - d_k (w x_k + b)  \}$,
\end{tabular}
\vspace{0.3cm}

\noindent where the first and second lines hold since $D_k^g \subseteq I^{++}_{(w,b,d)}$ or $D_k^g \subseteq I^{+-}_{(w,b,d)}$ or $D_k^g \subseteq I^{-+}_{(w,b,d)}$ or $D_k^g \subseteq I^{--}_{(w,b,d)}$ for any $k \in K_u^g$.

Observe that the above two results can be shown in the reverse order. Hence, it is easy to see that there is a one-to-one correspondence between feasible solutions of \eqref{def_sssvm_ext} and \eqref{def_sssvm_ext_agg}, and the objective function values of the corresponding feasible solutions are the same. 
\end{proof}

Note that Lemma \ref{lemma_equiv_sssvm_ext_agg} implies that, for an optimal solution of \eqref{def_sssvm_ext_agg}, the corresponding solution for \eqref{def_sssvm_ext} is an optimal solution for \eqref{def_sssvm_ext}. This gives the following corollary.
\begin{corollary}
We have $G^* = H^*$.
\end{corollary}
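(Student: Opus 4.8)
The plan is to read the corollary off directly from Lemma~\ref{lemma_equiv_sssvm_ext_agg}: once we have an objective-value-preserving bijection between the feasible regions of \eqref{def_sssvm_ext} and \eqref{def_sssvm_ext_agg}, the two optimal values must coincide. Concretely, I would establish the two inequalities $G^* \le H^*$ and $H^* \le G^*$ separately and then combine them. This is essentially the remark already flagged before the corollary, promoted to a two-sided argument.

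For $G^* \le H^*$, let $(w,b,d,C^g,D^g)$ be an optimal solution of the aggregated problem \eqref{def_sssvm_ext_agg}, so that its objective value equals $H^*$. By Lemma~\ref{lemma_equiv_sssvm_ext_agg}, the corresponding point of \eqref{def_sssvm_ext} is feasible and carries exactly the same objective value $H^*$; since $G^*$ is the minimum over all feasible solutions of \eqref{def_sssvm_ext}, we get $G^* \le H^*$. For $H^* \le G^*$ I would run the identical argument in the reverse direction: take an optimal solution of \eqref{def_sssvm_ext} with value $G^*$, and use that the correspondence of Lemma~\ref{lemma_equiv_sssvm_ext_agg} is explicitly two-sided (each chain of equalities in its proof is reversible, the centroid identities $x_k=\frac{\sum_{i\in C_k^g}x_i}{|C_k^g|}$ etc. being built into the constraints of \eqref{def_sssvm_ext_agg}) to obtain a feasible point of \eqref{def_sssvm_ext_agg} of the same value $G^*$, whence $H^* \le G^*$. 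Combining the two bounds yields $G^* = H^*$.

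There is essentially no hard step here; the whole content has already been absorbed into Lemma~\ref{lemma_equiv_sssvm_ext_agg}. The one point that deserves a line of care is that the correspondence must carry feasible points to feasible points in \emph{both} directions, which is exactly what the phrase ``one-to-one correspondence between feasible solutions'' asserts — the cluster-containment constraints $C_k^g \subseteq I^{+}_{(w,b)}$ or $C_k^g \subseteq I^{-}_{(w,b)}$ (and the four-way analogue for $D_k^g$) are literally identical in the two formulations, so feasibility transfers verbatim once the aggregated data are defined by the centroid equations. If attainment of the minima is a concern, I would note that both outer minimizations range over finitely many cluster partitions and, for each fixed partition and label vector $d$, over a closed feasible set on which the objective is bounded below by $0$; hence the equal infima are in fact equal minima, and the corollary follows.
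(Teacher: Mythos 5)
Your argument is correct and is essentially the paper's own: the corollary is read off directly from Lemma~\ref{lemma_equiv_sssvm_ext_agg}, whose objective-value-preserving bijection between the feasible sets of \eqref{def_sssvm_ext} and \eqref{def_sssvm_ext_agg} immediately forces the two optimal values to coincide. Your extra remarks on the two-sidedness of the correspondence and on attainment of the minima are sound but not needed beyond what the lemma already provides.
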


In Proposition \ref{proposition_s3vm_optimality}, we present the optimality condition.

\begin{proposition}
\label{proposition_s3vm_optimality}
Let us assume that
\begin{enumerate}
\item for all $k \in K_l^t$, (\rmnum{1}) $1-y_i (\bar{w}^t x_i + \bar{b}^t) \leq 0$ for all $i \in C_k^t$ or (\rmnum{2}) $1-y_i (\bar{w}^t x_i + \bar{b}^t) \geq 0$ for all $i \in C_k^t$
\item for all $k \in K_u^t$, exactly one of the following holds.
	\begin{enumerate}[noitemsep]
	\item[] (\rmnum{1}) $1-\hat{d}_i^t (\bar{w}^t x_i + \bar{b}^t) \leq 0$ and $\bar{w}^t x_i + \bar{b}^t \leq 0$ for all $i \in D_k^t$
	\item[] (\rmnum{2}) $1-\hat{d}_i^t (\bar{w}^t x_i + \bar{b}^t) \leq 0$ and $\bar{w}^t x_i + \bar{b}^t > 0$ for all $i \in D_k^t$
	\item[] (\rmnum{3}) $1-\hat{d}_i^t (\bar{w}^t x_i + \bar{b}^t) \geq 0$ and $\bar{w}^t x_i + \bar{b}^t > 0$ for all $i \in D_k^t$
	\item[] (\rmnum{4}) $1-\hat{d}_i^t (\bar{w}^t x_i + \bar{b}^t) \geq 0$ and $\bar{w}^t x_i + \bar{b}^t \leq 0$ for all $i \in D_k^t$
	\end{enumerate} 
\end{enumerate}
Then, $(\hat{w}^t,\hat{b}^t,\hat{\xi}^t,\hat{d}^t)$ is an optimal solution to \eqref{def_SSSVM}. In other words, if (\rmnum{1}) all observations in $C_k^t$ and $D_k^t$ are on the same side of the margin-shifted hyperplane of the separating hyperplane $(\bar{w}^t,\bar{b}^t)$ and (\rmnum{2}) all observations in $D_k^t$ have the same label, then $(\hat{w}^t,\hat{b}^t,\hat{\xi}^t,\hat{d}^t)$ is an optimal solution to \eqref{def_SSSVM}.
\end{proposition}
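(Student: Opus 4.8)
The plan is to reduce everything to the extended formulations and the identities $E^* = G^* = H^*$ (from \eqref{eqn_estart_equals_gstar} and the corollary $G^* = H^*$), so that proving optimality amounts to showing the iteration-$t$ data attains $H^*$. First I would observe that the two hypotheses are \emph{exactly} the clustering constraints of \eqref{def_sssvm_ext_agg} evaluated at the triple $(\bar w^t,\bar b^t,\bar d^t)$ with the fixed clusters $C^t,D^t$: hypothesis~1 is the statement $C_k^t\subseteq I^{+}_{(\bar w^t,\bar b^t)}$ or $C_k^t\subseteq I^{-}_{(\bar w^t,\bar b^t)}$, and hypothesis~2 is the statement that each $D_k^t$ lies in a single one of $I^{++},I^{+-},I^{-+},I^{--}$. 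Together with the centroid definitions of $x_k^t,y_k^t$, this makes $(\bar w^t,\bar b^t,\bar d^t,C^t,D^t)$ a feasible point of \eqref{def_sssvm_ext_agg}, so $H^*\le F^t$.

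Next I would run the objective bookkeeping that identifies $E^t$ with $F^t$. The labeled clusters are handled exactly as in the SVM case: hypothesis~1 lets one pull the per-cluster weighted hinge term $|C_k^t|\max\{0,1-y_k^t(\bar w^t x_k^t+\bar b^t)\}$ inside the sum over $i\in C_k^t$ without changing its value. For the unlabeled clusters the key point is the four-way split in hypothesis~2: restricted to one of the four regions the map $z\mapsto\max\{0,1-|z|\}$ is \emph{affine} in $z=\bar w^t x_i+\bar b^t$, and the centroid value $\bar w^t x_k^t+\bar b^t$ is the average of the member values, so the Jensen step holds with equality and $|D_k^t|\max\{0,1-\bar d_k^t(\bar w^t x_k^t+\bar b^t)\}=\sum_{i\in D_k^t}\max\{0,1-\hat d_i^t(\bar w^t x_i+\bar b^t)\}$. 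Here I would also check the label-consistency $\hat d_i^t=\bar d_k^t$ for every $i\in D_k^t$: since the whole cluster sits on one side of the hyperplane, its centroid sits on the same side, and \autoref{lemma_sssvm_opt_sol_property} (applied to both \eqref{def_SSSVM} and \eqref{def_SSSVM_agg}) forces the disaggregated sign-label to agree with the aggregated label. Summing the labeled and unlabeled identities gives $E^t=F^t$. Equivalently, the image of $(\bar w^t,\bar b^t,\bar d^t,C^t,D^t)$ under the correspondence of \autoref{lemma_equiv_sssvm_ext_agg} is precisely the converted solution $(\hat w^t,\hat b^t,\hat\xi^t,\hat d^t)$, a feasible point of \eqref{def_sssvm_ext} (hence of \eqref{def_SSSVM}) with value $E^t=F^t$; this already yields $E^*=G^*\le E^t$.

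The remaining and genuinely hard direction is $E^t\le E^*$, equivalently $F^t\le H^*$. This is where S$^3$VM differs sharply from LAD and SVM: because each cluster is forced to carry a single label, the aggregated problem \eqref{def_SSSVM_agg} is a \emph{restriction} rather than a relaxation of \eqref{def_SSSVM}, so the easy inequality runs the wrong way ($E^*\le F^t$), and a naive attempt to map an optimal $(w^*,b^*,d^*)$ into the fixed clusters $C^t,D^t$ breaks down exactly when some $D_k^t$ straddles the hyperplane of $w^*$, where the non-convex hat function makes the aggregated hinge exceed the disaggregated one. The plan to close this gap is to upgrade the feasibility statement of the first paragraph to \emph{optimality}: show that $(\bar w^t,\bar b^t,\bar d^t,C^t,D^t)$ actually attains $H^*$ in \eqref{def_sssvm_ext_agg}. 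My approach would be to use that \eqref{def_SSSVM_agg} is solved to global optimality, so $(\bar w^t,\bar b^t,\bar d^t)$ beats every labeling on the clusters $C^t,D^t$; combined with the region-purity supplied by the two hypotheses (which is what aligns the fixed clustering with the region-constrained clusterings competing in $H^*$), one argues that no feasible clustering in \eqref{def_sssvm_ext_agg} can undercut $F^t$, giving $F^t=H^*$.

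Chaining $E^*=G^*=H^*=F^t=E^t$ with the feasibility bound $E^*\le E^t$ then forces $E^t=E^*$, so $(\hat w^t,\hat b^t,\hat\xi^t,\hat d^t)$ is optimal for \eqref{def_SSSVM}. I expect this last step---reconciling the iteration-$t$ clustering with the cluster-optimized value $H^*$ in the presence of the non-convex unlabeled term---to be the main obstacle, and the place where the four-region (rather than two-region) declustering rule is indispensable.
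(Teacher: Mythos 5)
Your route is the paper's route: identify the two hypotheses with the region-purity constraints of \eqref{def_sssvm_ext_agg}, pass through the correspondence of \autoref{lemma_equiv_sssvm_ext_agg}, and finish with $E^* = G^* = H^*$. Your first two paragraphs (feasibility of $(\bar w^t,\bar b^t,\bar d^t,C^t,D^t)$ for \eqref{def_sssvm_ext_agg}, and the per-cluster identity $E^t=F^t$ via the piecewise-affine structure on each region) are exactly the content the paper compresses into its first sentence plus \autoref{lemma_equiv_sssvm_ext_agg}, and your diagnosis of why the LAD/SVM sandwich fails here --- mapping an optimal $(w^*,b^*,d^*)$ into the fixed clusters breaks when some $D_k^t$ straddles $w^*$'s hyperplane, because $z\mapsto\max\{0,1-|z|\}$ is neither convex nor concave --- is correct and is precisely why the paper detours through \eqref{def_sssvm_ext} and \eqref{def_sssvm_ext_agg}.

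The one step you leave open is, however, also the one step the paper does not actually argue. The paper writes that feasibility of $C^t,D^t$ for \eqref{def_sssvm_ext_agg} ``implies that $(\bar w^t,\bar b^t,\bar\xi^t,\bar d^t)$ is an optimal solution to \eqref{def_sssvm_ext_agg}'' with no further justification; your third paragraph says ``one argues that no feasible clustering in \eqref{def_sssvm_ext_agg} can undercut $F^t$'' without supplying that argument. These are the same missing inequality, $F^t\le H^*$, which is equivalent to $E^t\le E^*$ --- the very conclusion being proved --- so the paper's clause is circular as stated and your sketch does not close it. Global optimality of \eqref{def_SSSVM_agg} only says $(\bar w^t,\bar b^t,\bar d^t)$ beats every $(w,b,d)$ \emph{on the fixed clusters} $C^t,D^t$; it says nothing about a competitor $(w,b,d,C^g,D^g)$ in \eqref{def_sssvm_ext_agg} whose clustering refines or crosses $C^t,D^t$, and purity of $C^t,D^t$ with respect to $(\bar w^t,\bar b^t)$ does not transfer to purity with respect to that competitor's hyperplane. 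So your proposal reproduces the paper's argument faithfully, including its gap; completing the proof would require a genuinely new argument for $F^t\le H^*$ (or additional hypotheses), which neither you nor the paper provides.
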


\begin{proof}
Observe that the conditions stated match with the definition of $I^{+}_{(w,b)}$, $I^{-}_{(w,b)}$, $I^{++}_{(w,b,d)}$, $I^{+-}_{(w,b,d)}$, $I^{-+}_{(w,b,d)}$, and $I^{--}_{(w,b,d)}$. Hence, $C^t$ and $D^t$ satisfy the constraints of \eqref{def_sssvm_ext_agg}, which implies that $(\bar{w}^t,\bar{b}^t,\bar{\xi}^t,\bar{d}^t)$ is an optimal solution to \eqref{def_sssvm_ext_agg}. By Lemma \ref{lemma_equiv_sssvm_ext_agg}, $(\hat{w}^t,\hat{b}^t,\hat{\xi}^t,\hat{d}^t)$ is an optimal solution to \eqref{def_sssvm_ext}. Finally, since $E^* = G^*$ by \eqref{eqn_estart_equals_gstar}, we conclude that $(\hat{w}^t,\hat{b}^t,\hat{\xi}^t,\hat{d}^t)$ is an optimal solution to \eqref{def_SSSVM}.
\end{proof}

Observe that, unlike LAD and SVM, we do not have the non-decreasing property of $F_t$. Due to binary variable $d_i$, the non-decreasing property of $F_t$ no longer holds.

\section{Computational Experiments}
\label{section_compuatation}
All experiments were performed on Intel Xeon X5660 2.80 GHz dual core server with 32 GB RAM, running Windows Server 2008 64 bit. We implemented AID for LAD and SVM in scripts of R statistics \cite{Rstat} and Python, respectively, and AID for S$^{3}$VM is implemented in C\# with CPLEX.

For LAD, R statistics package \textsf{quantreg} \cite{R-quantreg} is used to solve \eqref{formulation_standard_regression} and \eqref{formulation_weighted_regression_iter_t}. In detail, function \textsf{rq()} is used with the Frisch-Newton interior point method (\textsf{fn}) option. Due to the absence of large-scale real world instances, we randomly generate three sets of LAD instances.
\begin{enumerate}[noitemsep]
\item[] Set A: $n \in \{2, 4, 8 , 16  \} \times 10^5$ and $m \in \{ 10,100,500,800\}$, where $(n,m)=(16 \times 10^5, 800)$ is excluded due to memory issues
\item[] Set B: $n=10^6$ and $m \in \{50,100,150,200,250,300,350,400,450,500\}$
\item[] Set C: $n \in \{4,6,8,10,12,14,16\} \times 10^5$ and $m \in \{50,500\}$
\end{enumerate}
Set A is used for the experiment checking the performance of AID over various pairs of $n$ and $m$, whereas Sets B and C are used for checking the performance of AID over fixed $n$ and $m$, respectively.

For SVM, Python package \textsf{scikit-learn} \cite{scikit-learn} is used to solve \eqref{def_SVM} and \eqref{def_SVM2}. In detail, functions \textsf{svc()} and \textsf{linearSVC()} are used, where the implementations are based on \textsf{libsvm} \cite{CC01a} and \textsf{liblinear} \cite{liblinear}, respectively. We use two benchmark algorithms because \textsf{libsvm} is one of the most popular and widely used implementation, while \textsf{liblinear} is known to be faster for SVM with the linear kernel. For SVM, we generate two sets of instances by sampling from two large data sets: (\rmnum{1}) 1.3 million observations and 342 attributes obtained from a real world application provided by IBM (\rmnum{2}) rcv1.binary data set with 677,399 observations and 47,236 attributes from \cite{CC01a}. We denote them as IBM and RCV, respectively.
\begin{enumerate}[noitemsep]
\item[] Set 1: $n \in \{3 , 5 , 10 , 15 \} \times 10^4$ and $m \in \{ 10,30,50,70,90\}$ from IBM and RCV
\item[] Set 2: $n \in \{2,4,6,8 \} \times 10^5$ and $m \in \{ 10,30,50,70,90\}$ from IBM
\end{enumerate}
This generation procedure enables us to analyze performances of the algorithms for data set with similar characteristics and various sizes. For each $(n,m)$ pair for SVM, we generate ten instances and present the average performance of the ten instances for each $(n,m)$ pair. Note that Set 1 instances are smaller than Set 2 instances. We use Set 1 to test AID against \textsf{libsvm} and Set 2 to test AID against \textsf{liblinear}, because \textsf{liblinear} is faster and capable of solving larger problems for SVM with the linear kernel. Recall that AID is capable of using any solver for the aggregated problem. Hence, when testing against \textsf{libsvm}, AID uses \textsf{libsvm} for solving the aggregated problems. Similarly, when testing against \textsf{liblinear}, AID uses \textsf{liblinear}.

For S$^{3}$VM, aggregated problems \eqref{def_SSSVM_agg} are solved by CPLEX. We set the 1,800 seconds time limit for the entire algorithm. We consider eight semi-supervised learning benchmark data sets from \cite{Chapelle-book}. Table \ref{tab:s3vm_book_data} lists the characteristics of the data sets. For each data set, two sets of twelve data splits are given: one with 10 and the other with 100 labeled observations for each split set for the first seven data sets, whereas 1,000 and 10,000 labeled observations are used for each split set for \textit{SecStr}.

\begin{table}[htbp]
\scriptsize
  \centering
    \begin{tabular}{|c|r|r|c|}
    \hline
    Data  & $n$ (entries)     & $m$ (attributes)     & $l$ (labeled) \\ \hline
    Digit1 & 1,500  & 241   & 10 and 100 \\
    USPS  & 1,500  & 241   & 10 and 100 \\
    BCI   & 400   & 117   & 10 and 100 \\
    g241c & 1,500  & 241   & 10 and 100 \\
    g241d & 1,500  & 241   & 10 and 100 \\
    Text  & 1,500  & 11,960 & 10 and 100 \\ 
    COIL2 & 1,500 & 241 & 10 and 100\\
    SecStr & 83,679 & 315 & 1,000 and 10,000\\ \hline
    \end{tabular}%
      \caption{S$^{3}$VM Data from Chapelle \cite{Chapelle-book}}
  \label{tab:s3vm_book_data}%
\end{table}%

In all experiments, AID iterates until the corresponding optimality condition is satisfied or the optimality gap is small (we set $10^{-3}$ and $10^{-4}$ for LAD and SVM, respectively). The tolerance for \textsf{fn} and \textsf{libsvm} are set to $10^{-3}$ according to the definition of the corresponding packages. The default tolerance for \textsf{liblinear} is $10^{-3}$ with the maximum of 1,000 iterations. Because we observed early terminations and inconsistencies with the default setting, we use the maximum of 100,000 iterations for \textsf{liblinear}. For S$^{3}$VM, we run AID for one and five iterations and terminate before we reach optimum. See Section \ref{section_exp_s3vm} for detail. 

For the initial clustering methods for LAD, SVM, and S$^{3}$VM, we do not fully rely on standard clustering algorithms, as it takes extensive amount of time to optimize a clustering objective function due to large data size. Instead, for the LAD initial clustering method, we first sample a small number of original entries, build regression model, and obtain a solution $\beta^{\mbox{\scriptsize{init}}}$. Let $r \in \mathbb{R}^n$ be the residual of the original data defined by $\beta^{\mbox{\scriptsize{init}}}$. We use one iteration of k-means based on data $(r,y) \in \mathbb{R}^{n \times 2}$ to create $C^0$. For the SVM initial clustering method, we sample a small number of original entries and find a hyperplane $(w^{init},b^{init})$. Then we use one iteration of k-means based on data $d \in \mathbb{R}^{n}$, where $d \in \mathbb{R}^n$ is the distance of the original entries to $(w^{init},b^{init})$. For the S$^{3}$VM initial clustering methods, we use one iteration of k-means based on the original data to create $C^0$. The initial clustering and aggregated data generation times are small for all methods for LAD, SVM, and S$^{3}$VM.

For AID, we need to specify the number of initial clusters, measured as the initial aggregation rate. User parameter $r^0$ $=$ $\frac{|K^0|}{n}$ is the initial aggregation rate given as a parameter. It defines the number of initial clusters. We generalize the notion of the aggregation rate with $r^t$, the aggregation rate in iteration t. The number of clusters is important because too many clusters lead to a large problem and too few clusters lead to a meaningless aggregated problem. Note that $r^0$ also should be problem specific. For example, we must have $|K^0| > m$ for LAD. Hence, we set $r^0 = \max\{\frac{3m}{n},0.0005\}$ if $m \times n > 5 \times 10^8$, $r^0 = \max\{\frac{2m}{n},0.0005\}$ otherwise. This means that $|K^0|$ is at least two or three times larger than $m$ and $|K^0|$ is at least 0.05\% of $n$. For SVM, we set $r^0 = \max\{\frac{1.1m}{n},0.0001\}$ for all instances. However, since S$^3$VM instances are not extremely large in our experiment, we fix it to some constant. For the S$^3$VM instances, we test both of $r^0 = 0.01$ and $0.05$ if $n \leq 10,000$, and both of $r^0 = 0.0001$ and $0.0005$ otherwise, where the number of clusters must be at least 10 to avoid a meaningless aggregated problem.

In order to compare the performance, execution times (in seconds) $\mathcal{T}^{\mbox{\scriptsize{AID}}}$, $\mathcal{T}^{\mbox{\scriptsize{\textsf{fn}}}}$, $\mathcal{T}^{\mbox{\scriptsize{\textsf{libsvm}}}}$, and $\mathcal{T}^{\mbox{\scriptsize{\textsf{liblinear}}}}$ for AID, \textsf{fn}, \textsf{libsvm}, and \textsf{liblinear}, respectively, are considered. For SVM, standard deviations $\sigma(\mathcal{T}^{\mbox{\scriptsize{AID}}})$, $\sigma(\mathcal{T}^{\mbox{\scriptsize{\textsf{libsvm}}}})$, and $\sigma(\mathcal{T}^{\mbox{\scriptsize{\textsf{liblinear}}}})$ are used to describe the stability of the algorithms. In order to further describe the performance of AID, we use the following measures.
\begin{center}
\begin{tabular}{llp{12cm}}
$r^T$ & $=$ & $\frac{|K^T|}{n}$ is the final aggregation rate at the termination\\[0.2cm]
$\rho$ & $=$ & $\frac{\mathcal{T}^{\mbox{\tiny{AID}}}}{\mathcal{T}^{\mbox{\scriptsize{\textsf{fn}}}}}$ or $\frac{\mathcal{T}^{\mbox{\tiny{AID}}}}{\mathcal{T}^{\mbox{\scriptsize{\textsf{libsvm}}}}}$ or $\frac{\mathcal{T}^{\mbox{\tiny{AID}}}}{\mathcal{T}^{\mbox{\scriptsize{\textsf{liblinear}}}}}$\\
$\Delta$ & $=$ & $\frac{E^{\mbox{\tiny{AID}}} - E^{\mbox{\scriptsize{\textsf{fn}}}}}{E^{\mbox{\scriptsize{\textsf{fn}}}}}$ or $\frac{E^{\mbox{\tiny{AID}}} - E^{\mbox{\scriptsize{\textsf{libsvm}}}}}{E^{\mbox{\scriptsize{\textsf{libsvm}}}}}$ or $\frac{E^{\mbox{\tiny{AID}}} - E^{\mbox{\scriptsize{\textsf{liblinear}}}}}{E^{\mbox{\scriptsize{\textsf{liblinear}}}}}$\\
$\Gamma$ & $=$ & training classification rate of AID $-$ training classification rate of benchmark
\end{tabular}
\end{center}

\noindent For example, we set $r^0 = \frac{3}{33} = \frac{1}{11}$ in Figure \ref{fig:ex_decluster_1} and terminate the algorithm with $r^T = \frac{9}{33} = \frac{3}{11}$ in Figure \ref{fig:ex_decluster_3}. Note that $\rho < 1$ indicates that AID is faster. We use $\Delta$ to check the relative difference of objective function values. For LAD, $\Delta$ is also used to check if the solution qualities are the same. For SVM, $\Gamma$ is used to measure the solution quality differences.

\subsection{Performance for LAD}

In Table \ref{tab:lad_result}, the computational performance of AID for LAD is compared against the benchmark \textsf{fn} for Set A. For many LAD instances in Table \ref{tab:lad_result}, \textsf{fn} is faster. For the smallest instance, \textsf{fn} is 34 times faster. However, ratio $\rho$ decreases in general as $n$ and $m$ increase. This implies that AID is competitive for larger size data. In fact, AID is five times faster than \textsf{fn} for the two largest LAD instances considered. The values of $\Delta$ indicate that \textsf{fn} and AID give the same quality solutions within numerical error bounds, as $\Delta$ measures the relative difference in the sum of the absolute error. The final aggregation rate $r^T$ also depends on problem size. As $n$ increases, $r^T$ decreases because original entries can be grouped into larger size clusters. As $m$ increases, $r^T$ increases because it is more difficult to cluster the original entries into larger size clusters. Further discussion on how $r^t$ changes over iterations is presented in Section \ref{section_discussion}.

\begin{table}[htbp]
\scriptsize
\centering
    \begin{tabular}{|rr|rrrr|r|r|r|}
        \hline
    \multicolumn{2}{|c|}{Instance} & \multicolumn{4}{c|}{AID}                       & \multicolumn{1}{c|}{\textsf{fn}} & \multicolumn{2}{c|}{Comparison} \\ \hline
    \multicolumn{1}{|c}{$n$} & \multicolumn{1}{c|}{$m$} & \multicolumn{1}{c}{$r^0$} & \multicolumn{1}{c}{$r^T$} & \multicolumn{1}{c}{$T$} & \multicolumn{1}{c|}{$\mathcal{T}^{\mbox{\tiny{AID}}}$} & \multicolumn{1}{c|}{$\mathcal{T}^{\mbox{\textsf{fn}}}$} & \multicolumn{1}{c|}{$\Delta$} & \multicolumn{1}{c|}{$\rho$} \\ \hline
    200,000 & 10    & 0.05\% & 2.40\% & 8     & 50    & 1      & 0.01\% & 34.59\\
    200,000 & 100   & 0.10\% & 10.00\% & 9     & 84    & 22     & 0.01\%  & 3.83\\
    200,000 & 500   & 0.50\% & 16.50\% & 7     & 451   & 393     & 0.02\% & 1.15\\
    200,000 & 800   & 0.80\% & 22.40\% & 7     & 1,347  & 1,062    & 0.01\% & 1.27\\ \hline
    400,000 & 10    & 0.05\% & 2.20\% & 8     & 99    & 3      & 0.00\% & 29.84\\
    400,000 & 100   & 0.05\% & 5.80\% & 9     & 163   & 44      & 0.01\% & 3.68 \\
    400,000 & 500   & 0.25\% & 13.30\% & 8     & 904   & 904       & 0.01\% & 1 \\
    400,000 & 800   & 0.40\% & 21.10\% & 8     & 2,689  & 2,125    & 0.01\% & 1.27 \\ \hline
    800,000 & 10    & 0.05\% & 0.90\% & 5     & 139   & 9      & 0.03\% & 15.46\\
    800,000 & 100   & 0.05\% & 5.00\% & 9     & 336   & 96      & 0.01\% & 3.51\\
    800,000 & 500   & 0.13\% & 10.30\% & 9     & 1,788  & 1,851    & 0.01\%  & \textbf{0.97} \\
    800,000 & 800   & 0.30\% & 10.30\% & 7     & 2,992  & 15,215    & 0.03\% & \textbf{0.2}\\ \hline
    1,600,000 & 10    & 0.05\% & 0.40\% & 4     & 235   & 18     & 0.06\% & 13.29\\
    1,600,000 & 100   & 0.05\% & 3.20\% & 8     & 612   & 196     & 0.01\% & 3.12\\
    1,600,000 & 500   & 0.09\% & 5.35\% & 8     & 2,460  & 12,164    & 0.01\% & \textbf{0.2} \\ \hline
    \end{tabular}%
  \caption{Performance of AID for LAD for Set A}
  \label{tab:lad_result}%
\end{table}%

In Figure \ref{fg_exp_lad}, comparisons of execution times of AID and \textsf{fn} are presented for Sets B and C. In Figure \ref{fg_exp_lad_fixed_n_time}, Set B is considered to check the performances over fixed $n = 10^6$. With fixed $n$, AID is slower when $m$ is small, but AID starts to outperform at $m = 400$. The corresponding $\rho$ values are constantly decreasing from 6.6 (when $m=50$) to 0.7 (when $m=500$). This observation also supports the results presented in Figure \ref{fg_exp_lad_fixed_m50_time} and \ref{fg_exp_lad_fixed_m500_time}, where the comparisons of Set C are presented. When $m$ is fixed to 50, the execution time of AID increases faster than \textsf{fn}. However, when $m$ is fixed to 500, AID is faster than \textsf{fn} and the execution time of AID grows slower than \textsf{fn}. Therefore, we conclude that AID for LAD is faster than \textsf{fn} when $n$ and $m$ are large and is especially beneficial when $m$ is large enough. For Figures \ref{fg_exp_lad_fixed_n_time}, \ref{fg_exp_lad_fixed_m50_time}, and \ref{fg_exp_lad_fixed_m500_time}, the corresponding number of iterations ($T$) are randomly spread over $9 \sim 11$, $10 \sim 12$, and $8 \sim 10$, respectively; we did not find a trend.

\begin{figure}[ht]
     \begin{center}
        \subfigure[Set B (fixed $n=10^6$)]{%
           \includegraphics[scale=0.2]{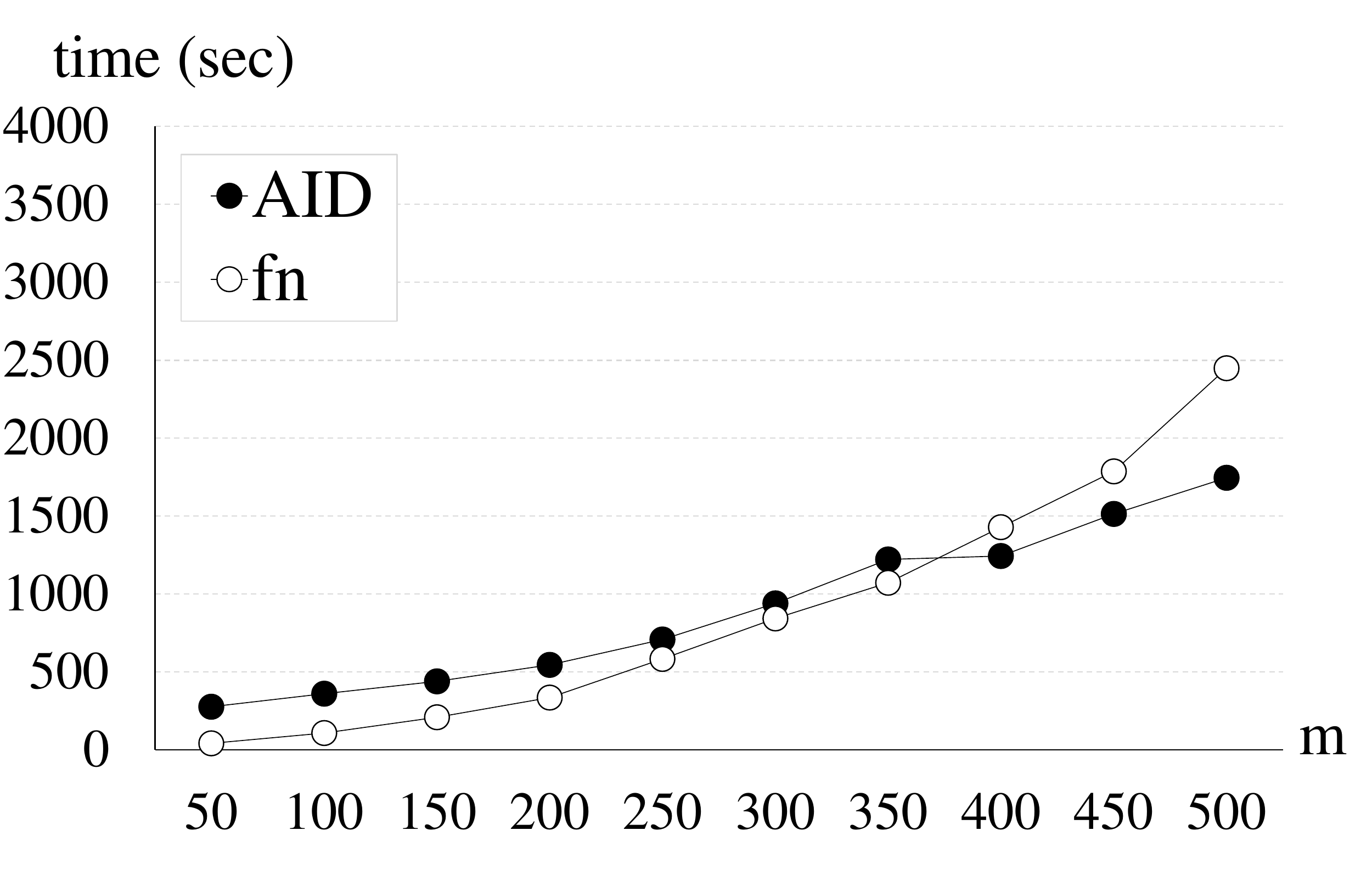} \label{fg_exp_lad_fixed_n_time}
        }
        \subfigure[Set C (fixed $m=50$)]{%
           \includegraphics[scale=0.2]{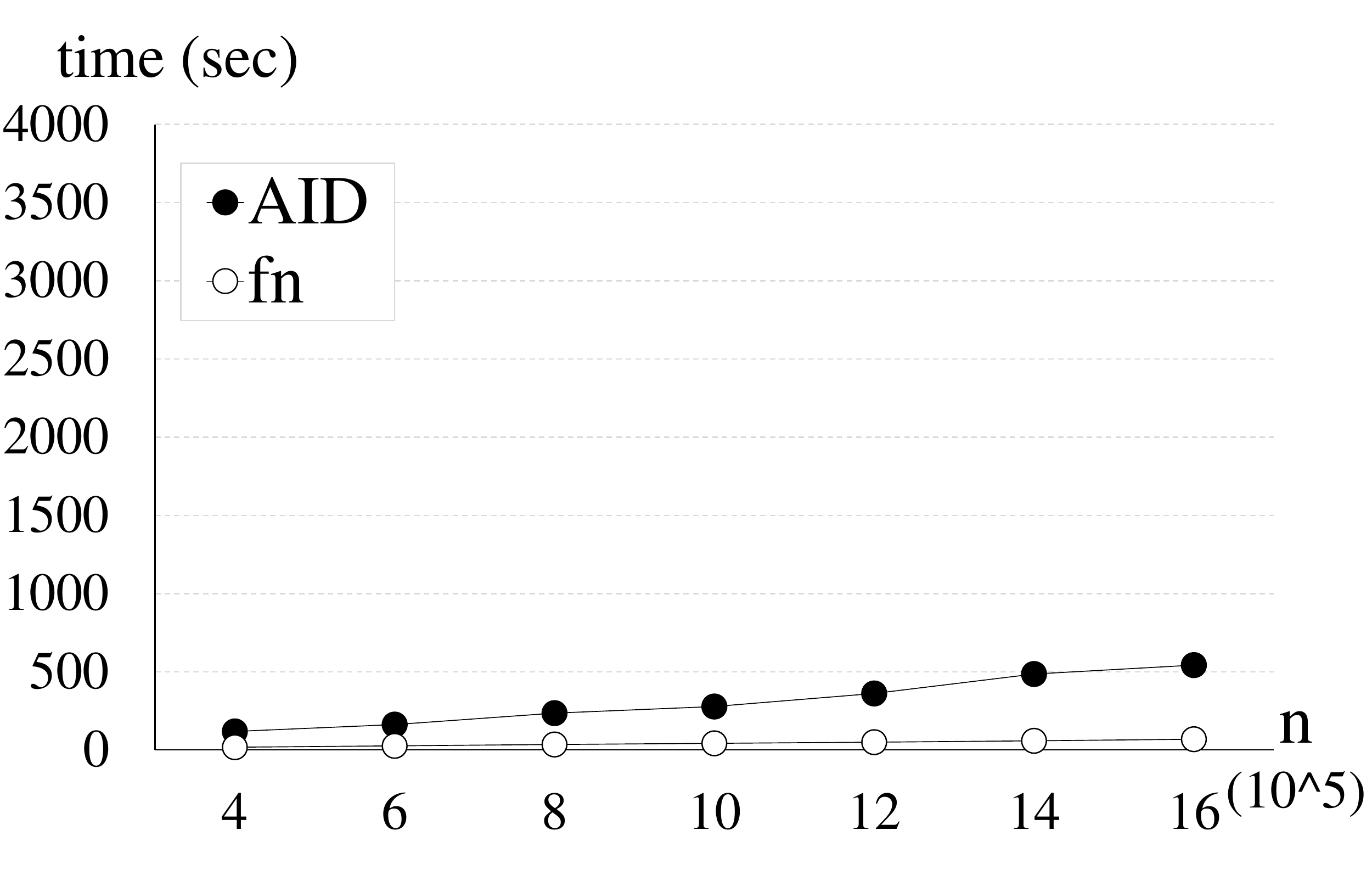} \label{fg_exp_lad_fixed_m50_time}
        }
        \subfigure[Set C (fixed $m=500$)]{%
           \includegraphics[scale=0.2]{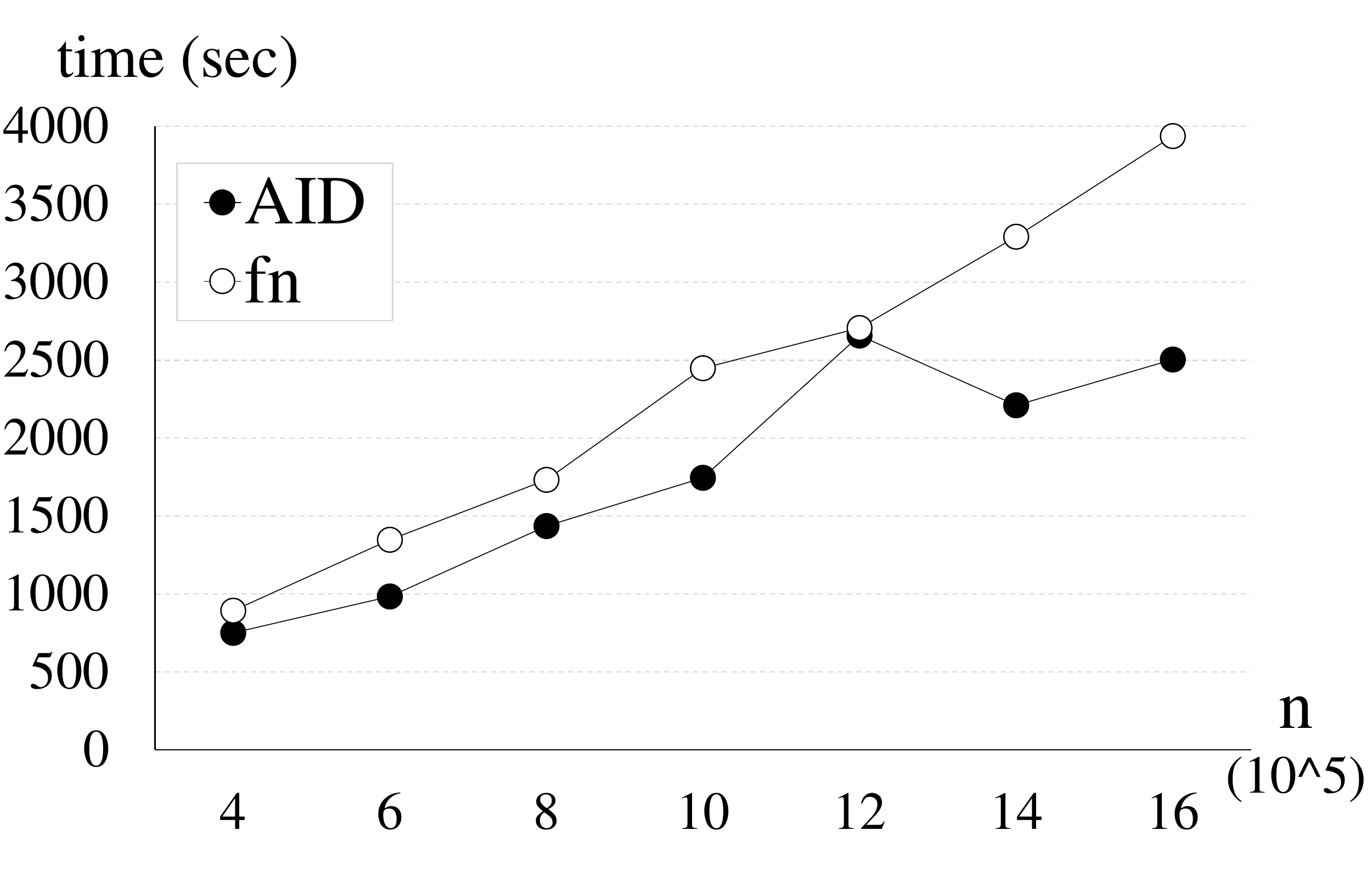} \label{fg_exp_lad_fixed_m500_time}
        }
    \end{center}
    \vspace{-0.5cm}
    \caption{Execution times of AID and \textsf{fn} for LAD}
    \label{fg_exp_lad}
\end{figure}

\subsection{Performance for SVM}

In Tables \ref{tab:svm_result_ibm} and \ref{tab:svm_result_rcv}, the computational performance of AID for SVM is compared against the benchmark \textsf{libsvm} for Set 1. In Table \ref{tab:svm_result_ibm_large}, comparison of AID and \textsf{liblinear} for Set 2 is presented. In all experiments, we fix penalty constant at $M = 0.1$. 

In Table \ref{tab:svm_result_ibm}, the result for Set 1 IBM data is presented. Observe that AID is faster than \textsf{libsvm} for all cases, as $\rho$ values are strictly less than 1 for all cases. Observe that $\rho$ values tend to decrease in $n$ and $m$. This is highly related to final aggregation rate $r^T$. Observe that, similar to the LAD result, $r^T$ decreases in $n$ and increase in $m$. As $n$ increases, it is more likely to have clusters with more original entries. This decreases $r^T$ and number of iterations $T$. It implies that we solve fewer aggregated problems and the sizes of the aggregated problems are smaller. On the other hand, as $m$ increases, $r^T$ also increases. This increases $T$ and aggregated problem sizes. However, since the complexity of \textsf{svmlib} increases faster than AID in increasing $m$, $\rho$ decreases in $m$.  Due to possibility of critical numerical errors, we also check the objective function value differences ($\Delta$) and training classification rate differences ($\Gamma$). The solution qualities of AID and \textsf{libsvm} are almost equivalent in terms of the objective function values and training classification rates.

% Table generated by Excel2LaTeX from sheet 'ibm libsvm'
\begin{table}[htbp]
  \centering
  \scriptsize
    \begin{tabular}{|rr|rrrrr|rr|rrr|}
    \hline
    \multicolumn{2}{|c|}{size} & \multicolumn{5}{c|}{AID}  & \multicolumn{2}{c|}{$\textsf{libsvm}$} & \multicolumn{3}{c|}{Comparison}\\ \hline
$n$     & $m$     & $r^0$    & $r^T$ & $T$ & $\mathcal{T}^{\mbox{\tiny{AID}}}$ & $\sigma(\mathcal{T}^{\mbox{\tiny{AID}}})$& $\mathcal{T}^{\mbox{\textsf{libsvm}}}$  & $\sigma(\mathcal{T}^{\mbox{\textsf{libsvm}}})$ & $\Gamma$ & $\Delta$ & $\rho$\\ \hline
30,000 & 10    & 0.08\% & 1.0\% & 6.1   & 3     & 0     & 10    & 2     & 0.00\% & 0.00\% & \textbf{0.26} \\
          & 30    & 0.23\% & 6.6\% & 8.5   & 4     & 0     & 12    & 1     & 0.00\% & 0.00\% & \textbf{0.34} \\
          & 50    & 0.37\% & 10.4\% & 8.4   & 6     & 1     & 19    & 3     & 0.00\% & 0.00\% & \textbf{0.30} \\
          & 70    & 0.52\% & 14.2\% & 8.2   & 8     & 1     & 26    & 1     & 0.00\% & 0.00\% & \textbf{0.32} \\
          & 90    & 0.67\% & 15.1\% & 8     & 10    & 1     & 32    & 3     & 0.00\% & 0.00\% & \textbf{0.31} \\
    50,000 & 10    & 0.05\% & 0.5\% & 6     & 5     & 1     & 15    & 2     & 0.00\% & -0.01\% & \textbf{0.30} \\
          & 30    & 0.14\% & 4.0\% & 8.1   & 8     & 3     & 31    & 6     & 0.00\% & 0.00\% & \textbf{0.25} \\
          & 50    & 0.22\% & 7.3\% & 8.9   & 9     & 1     & 44    & 5     & 0.00\% & 0.00\% & \textbf{0.21} \\
          & 70    & 0.31\% & 10.0\% & 9     & 14    & 1     & 57    & 4     & 0.00\% & 0.00\% & \textbf{0.24} \\
          & 90    & 0.40\% & 11.0\% & 8.5   & 16    & 2     & 66    & 3     & 0.00\% & 0.00\% & \textbf{0.24} \\
    100,000 & 10    & 0.02\% & 0.4\% & 5.7   & 11    & 6     & 53    & 24    & 0.00\% & -0.03\% & \textbf{0.21} \\
          & 30    & 0.07\% & 2.7\% & 9.1   & 14    & 2     & 75    & 7     & 0.00\% & 0.00\% & \textbf{0.19} \\
          & 50    & 0.11\% & 4.7\% & 9.3   & 18    & 2     & 108   & 12    & 0.00\% & 0.00\% & \textbf{0.17} \\
          & 70    & 0.16\% & 6.0\% & 9     & 23    & 2     & 133   & 8     & 0.00\% & 0.00\% & \textbf{0.17} \\
          & 90    & 0.20\% & 7.3\% & 9     & 30    & 2     & 168   & 9     & 0.00\% & 0.00\% & \textbf{0.18} \\
    150,000 & 10    & 0.02\% & 0.1\% & 3.8   & 14    & 7     & 86    & 96    & 0.00\% & -0.09\% & \textbf{0.16} \\
          & 30    & 0.05\% & 1.9\% & 9     & 23    & 3     & 157   & 81    & 0.00\% & 0.00\% & \textbf{0.15} \\
          & 50    & 0.07\% & 3.6\% & 9.6   & 27    & 2     & 174   & 14    & 0.00\% & 0.00\% & \textbf{0.16} \\
          & 70    & 0.10\% & 5.0\% & 9.5   & 36    & 5     & 234   & 15    & 0.00\% & 0.00\% & \textbf{0.15} \\
          & 90    & 0.13\% & 5.3\% & 9.1   & 39    & 3     & 280   & 11    & 0.00\% & 0.00\% & \textbf{0.14} \\ \hline
    \end{tabular}%
  \caption{Average performance of AID for SVM against \textsf{libsvm} (Set 1 IBM data)} 
  \label{tab:svm_result_ibm}
\end{table}%

The result for Set 1 RCV data is presented in Table \ref{tab:svm_result_rcv}. AID is again faster than \textsf{libsvm} for all cases. The values of $\rho$ are much smaller than the values from Table \ref{tab:svm_result_ibm}. This can be explained by the smaller values of $r^T$ and $T$. Because AID converges faster for RCV data, it terminates early and takes much less time. Recall that larger $T$ and $r^T$ imply that more aggregated problems with larger sizes are additionally solved. Similarly to the result in Table \ref{tab:svm_result_ibm}, $\rho$ values tend to decrease in $n$ and $m$, where the trend is much clearer for RCV data. By checking $\Delta$ and $\Gamma$, we observe that the solution of AID and \textsf{libsvm} are equivalent. One interesting observation is that the trend of the number of iterations ($T$) is different from Table \ref{tab:svm_result_ibm}. For Set 1 RCV data, $T$ tends to increase in $n$ and decrease in $m$. This is exactly opposite from the result for Set 1 IBM data. This can be explained by very small values of $r^T$ compared to the values in Table \ref{tab:svm_result_ibm}.

% Table generated by Excel2LaTeX from sheet 'ibm libsvm'
\begin{table}[htbp]
  \centering
  \scriptsize
    \begin{tabular}{|rr|rrrrr|rr|rrr|}
    \hline
    \multicolumn{2}{|c|}{size} & \multicolumn{5}{c|}{AID}  & \multicolumn{2}{c|}{$\textsf{libsvm}$} & \multicolumn{3}{c|}{Comparison}\\ \hline
$n$     & $m$     & $r^0$    & $r^T$ & $T$ & $\mathcal{T}^{\mbox{\tiny{AID}}}$ & $\sigma(\mathcal{T}^{\mbox{\tiny{AID}}})$& $\mathcal{T}^{\mbox{\textsf{libsvm}}}$  & $\sigma(\mathcal{T}^{\mbox{\textsf{libsvm}}})$ & $\Gamma$ & $\Delta$ & $\rho$\\ \hline
    30,000 & 10    & 0.08\% & 0.2\% & 4.1   & 2     & 0     & 25    & 1     & 0.00\% & 0.00\% & \textbf{0.064} \\
          & 30    & 0.23\% & 0.7\% & 3.7   & 2     & 0     & 41    & 1     & 0.00\% & 0.00\% & \textbf{0.038} \\
          & 50    & 0.37\% & 1.1\% & 3.3   & 2     & 0     & 71    & 1     & 0.00\% & 0.00\% & \textbf{0.022} \\
          & 70    & 0.52\% & 1.6\% & 3.6   & 2     & 0     & 113   & 2     & -0.01\% & 0.00\% & \textbf{0.016} \\
          & 90    & 0.67\% & 2.0\% & 3.6   & 2     & 0     & 146   & 2     & 0.03\% & 0.00\% & \textbf{0.013} \\
    50,000 & 10    & 0.05\% & 0.2\% & 4.1   & 3     & 0     & 67    & 2     & -0.05\% & 0.00\% & \textbf{0.039} \\
          & 30    & 0.14\% & 0.5\% & 3.8   & 3     & 0     & 147   & 4     & 0.01\% & 0.00\% & \textbf{0.018} \\
          & 50    & 0.22\% & 0.9\% & 4     & 3     & 0     & 254   & 5     & -0.01\% & 0.00\% & \textbf{0.012} \\
          & 70    & 0.31\% & 1.2\% & 3.9   & 3     & 0     & 349   & 6     & 0.01\% & 0.00\% & \textbf{0.009} \\
          & 90    & 0.40\% & 1.6\% & 3.9   & 3     & 0     & 422   & 6     & -0.01\% & 0.00\% & \textbf{0.008} \\
    100,000 & 10    & 0.02\% & 0.1\% & 4.6   & 6     & 1     & 367   & 29    & 0.01\% & 0.00\% & \textbf{0.016} \\
          & 30    & 0.07\% & 0.4\% & 4.9   & 7     & 0     & 856   & 82    & 0.00\% & 0.00\% & \textbf{0.008} \\
          & 50    & 0.11\% & 0.7\% & 4.9   & 7     & 0     & 1,312  & 167   & 0.00\% & 0.00\% & \textbf{0.005} \\
          & 70    & 0.16\% & 1.0\% & 5     & 8     & 2     & 1,524  & 163   & 0.00\% & 0.00\% & \textbf{0.005} \\
          & 90    & 0.20\% & 1.3\% & 5     & 12    & 6     & 1,918  & 285   & 0.00\% & 0.00\% & \textbf{0.006} \\
    150,000 & 10    & 0.02\% & 0.1\% & 5.7   & 11    & 1     & 1,120  & 70    & 0.02\% & 0.00\% & \textbf{0.010} \\
          & 30    & 0.05\% & 0.4\% & 5.1   & 11    & 1     & 2,503  & 424   & 0.00\% & 0.00\% & \textbf{0.004} \\
          & 50    & 0.07\% & 0.6\% & 5.1   & 11    & 1     & 3,469  & 655   & 0.00\% & 0.00\% & \textbf{0.003} \\
          & 70    & 0.10\% & 0.8\% & 5.1   & 12    & 1     & 3,963  & 820   & 0.02\% & 0.00\% & \textbf{0.003} \\
          & 90    & 0.13\% & 1.1\% & 5.2   & 13    & 1     & 4,402  & 663   & 0.00\% & 0.00\% & \textbf{0.003} \\ \hline
    \end{tabular}%
  \caption{Average performance of AID for SVM against \textsf{libsvm} (Set 1 RCV data)} 
  \label{tab:svm_result_rcv}
\end{table}%

In order to visually compare the performances, in Figure \ref{fg_exp_svm}, we plot $\rho$ values and computation times of AID and \textsf{libsvm} for IBM and RCV data. Figures \ref{fg_exp_svm_ibm_time} and \ref{fg_exp_svm_rcv_time} assert that AID is scalable, while its relative performance keeps improving with respect to \textsf{libsvm} as shown in Figures \ref{fg_exp_svm_ibm_rho} and \ref{fg_exp_svm_rcv_rho}. For both data sets, the computation times of AID grow slower than \textsf{fn} and AID saves more computation time as $n$ and $m$ increase.

\begin{figure}[ht]
     \begin{center}
        \subfigure[Execution times (Set 1 IBM data)]{%
           \includegraphics[scale=0.3]{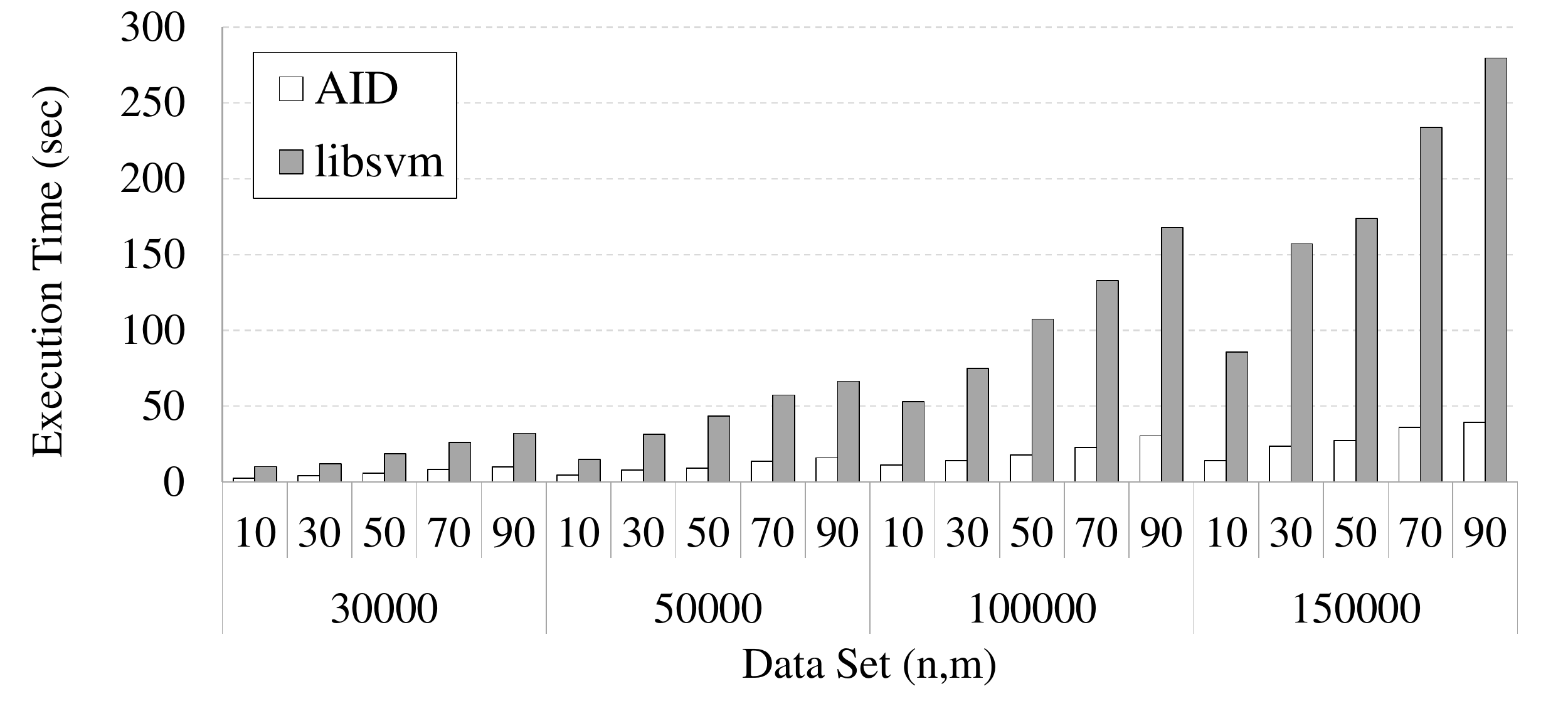} \label{fg_exp_svm_ibm_time}
        }\qquad
        \subfigure[$\rho$ (Set 1 IBM data)]{%
           \includegraphics[scale=0.3]{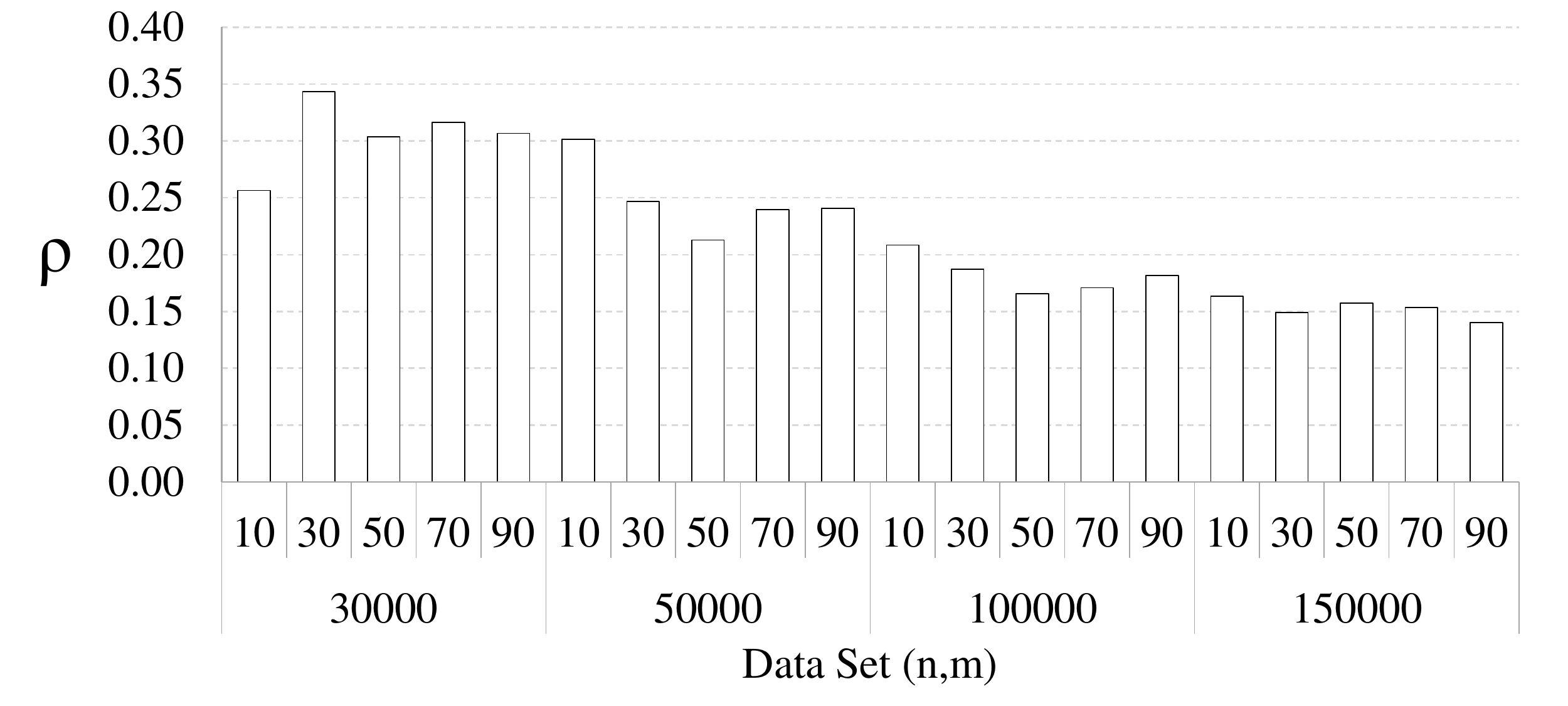} \label{fg_exp_svm_ibm_rho}
        }
        \subfigure[Execution times (Set 1 RCV data)]{%
           \includegraphics[scale=0.3]{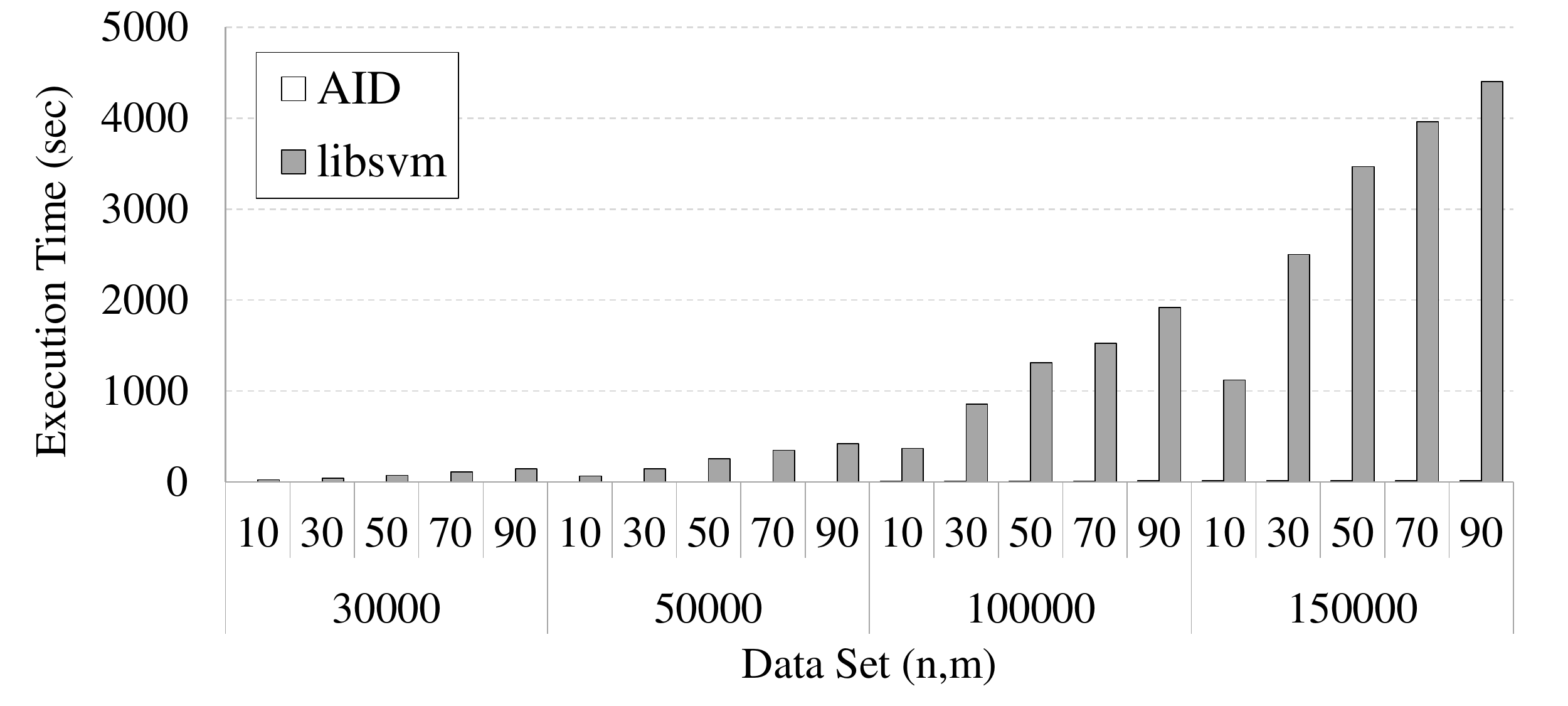} \label{fg_exp_svm_rcv_time}
        }\qquad
        \subfigure[$\rho$ (Set 1 RCV data)]{%
           \includegraphics[scale=0.3]{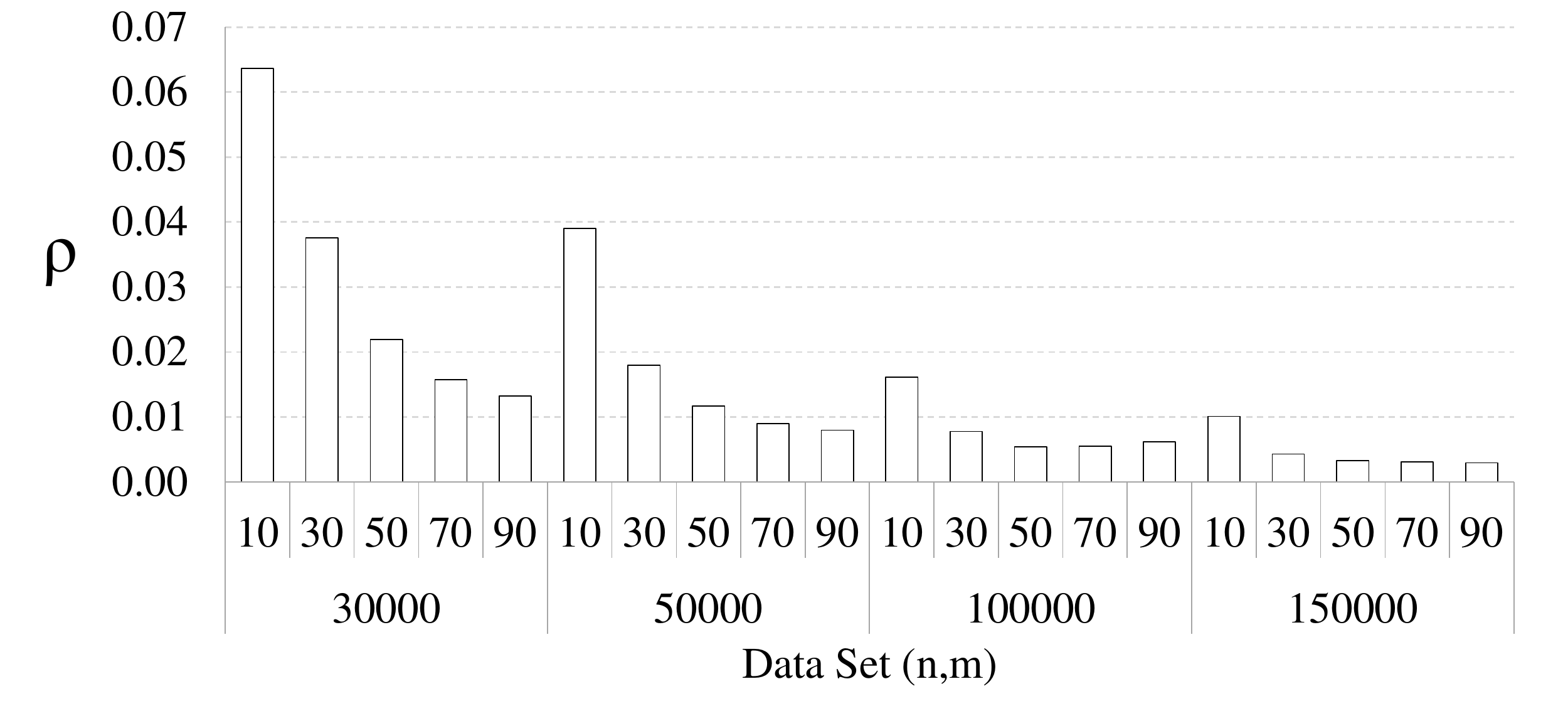} \label{fg_exp_svm_rcv_rho}
        }
    \end{center}
    \vspace{-0.5cm}
    \caption{Plots for performance of AID for SVM against \textsf{libsvm}}
    \label{fg_exp_svm}
\end{figure}

Although we present AID for SVM with kernels in Appendix \ref{section_appendix_B_aid_kernel}, we only show result for linear SVM in this experiment. For SVM with linear kernel, Liblinear \cite{liblinear} is known to be one of the fastest algorithms. Preliminary experiments showed that Set 1 instances are too small to obtain benefits from AID, and \textsf{liblinear} is faster for all cases. Hence, for comparison against \textsf{liblinear}, we consider Set 2 (larger instances sampled from IBM data). The result is shown in Table \ref{tab:svm_result_ibm_large}. Because \textsf{liblinear} is a faster solver, for some cases \textsf{liblinear} is faster than AID, especially when $n$ and $m$ are small. Among 20 cases ($n$-$m$ pairs), \textsf{liblinear} wins 45\% with 10.52 times faster than AID at maximum, and AID wins 55\% with 27.62 times faster than \textsf{liblinear}. However, the solution time of \textsf{liblinear} has very large variation. This is because \textsf{liblinear} struggles to terminate for some instances. On the other hand, AID has relatively small variations in solution time. Therefore, even though AID is not outperforming for all cases, we conclude AID is more stable and competitive. Note that objective function value difference $\Delta$ is large for some cases. With extremely large clusters (giving large weights in aggregated problems), we observe that \textsf{liblinear} does not give an accurate and stable result for aggregated problems of AID for Set 2.

% Table generated by Excel2LaTeX from sheet 'ibm libsvm'
\begin{table}[htbp]
  \centering
  \scriptsize
    \begin{tabular}{|rr|rrrrr|rr|rrr|}
    \hline
    \multicolumn{2}{|c|}{size} & \multicolumn{5}{c|}{AID}  & \multicolumn{2}{c|}{$\textsf{liblinear}$} & \multicolumn{3}{c|}{Comparison}\\ \hline
$n$     & $m$     & $r^0$    & $r^T$ & $T$ & $\mathcal{T}^{\mbox{\tiny{AID}}}$ & $\sigma(\mathcal{T}^{\mbox{\tiny{AID}}})$& $\mathcal{T}^{\mbox{\textsf{liblinear}}}$  & $\sigma(\mathcal{T}^{\mbox{\textsf{liblinear}}})$ & $\Gamma$ & $\Delta$ & $\rho$\\ \hline
    200,000 & 10    & 0.02\% & 0.1\% & 3.1   & 10    & 4     & 4     & 9     & 0.00\% & 0.12\% & 2.40 \\
          & 30    & 0.03\% & 0.8\% & 5.9   & 20    & 5     & 205   & 379   & -0.04\% & 3.56\% & \textbf{0.10} \\
          & 50    & 0.06\% & 3.0\% & 9.3   & 35    & 8     & 6     & 2     & 0.00\% & 0.23\% & 5.99 \\
          & 70    & 0.08\% & 3.8\% & 9.7   & 38    & 3     & 12    & 5     & 0.00\% & 0.00\% & 3.32 \\
          & 90    & 0.10\% & 4.3\% & 9.5   & 43    & 3     & 12    & 3     & 0.00\% & 0.00\% & 3.58 \\
    400,000 & 10    & 0.01\% & 0.0\% & 1.9   & 15    & 4     & 98    & 236   & 0.00\% & 0.79\% & \textbf{0.15} \\
          & 30    & 0.02\% & 0.3\% & 4.6   & 32    & 6     & 844   & 1,624  & 0.00\% & 14.04\% & \textbf{0.04} \\
          & 50    & 0.03\% & 1.1\% & 6.1   & 44    & 11    & 141   & 180   & 0.00\% & 14.85\% & \textbf{0.31} \\
          & 70    & 0.04\% & 2.7\% & 9.9   & 92    & 46    & 38    & 30    & 0.00\% & 0.01\% & 2.43 \\
          & 90    & 0.05\% & 3.0\% & 9.4   & 78    & 13    & 30    & 9     & 0.00\% & 0.28\% & 2.63 \\
    600,000 & 10    & 0.01\% & 0.0\% & 1.9   & 25    & 6     & 2     & 1     & 0.00\% & 0.16\% & 10.52 \\
          & 30    & 0.02\% & 0.3\% & 4.9   & 55    & 10    & 485   & 1,210  & 0.00\% & 1.51\% & \textbf{0.11} \\
          & 50    & 0.02\% & 1.2\% & 6.8   & 102   & 34    & 836   & 1,619  & 0.00\% & 0.11\% & \textbf{0.12} \\
          & 70    & 0.03\% & 1.9\% & 7.7   & 125   & 54    & 1,154  & 1,370  & -0.02\% & 7.86\% & \textbf{0.11} \\
          & 90    & 0.03\% & 2.3\% & 8.6   & 125   & 30    & 715   & 1,193  & -0.03\% & 0.44\% & \textbf{0.17} \\
    800,000 & 10    & 0.01\% & 0.0\% & 1.4   & 30    & 8     & 23    & 61    & 0.00\% & 0.42\% & 1.32 \\
          & 30    & 0.01\% & 0.4\% & 5.4   & 83    & 12    & 29    & 28    & 0.00\% & 3.01\% & 2.91 \\
          & 50    & 0.02\% & 1.1\% & 7     & 125   & 34    & 3,443  & 4,011  & 0.00\% & 3.16\% & \textbf{0.04} \\
          & 70    & 0.02\% & 1.6\% & 7.4   & 135   & 32    & 1,023  & 2,228  & 0.00\% & 0.31\% & \textbf{0.13} \\
          & 90    & 0.03\% & 1.9\% & 7.6   & 173   & 57    & 942   & 888   & 0.00\% & 5.93\% & \textbf{0.18} \\ \hline
    \end{tabular}%
  \caption{Average performance of AID for SVM against \textsf{liblinear} (Set 2 IBM data)} 
  \label{tab:svm_result_ibm_large}
\end{table}%

In Figure \ref{fg_exp_svm_ibm_large}, we plot computation times and $\rho$ values of AID and \textsf{liblinear} for Set 2. Because $\rho$ values do not scale well, we instead present $\log_{10} \rho$ in Figure \ref{fg_exp_svm_ibm_large_rho}. From Table \ref{tab:svm_result_ibm_large} and Figure \ref{fg_exp_svm_ibm_large} we observe that AID outperforms for larger instances. The number of negative $\log_{10} \rho$ values (implying AID is faster) tend to increase as $n$ and $m$ increase.

\begin{figure}[ht]
     \begin{center}
        \subfigure[Execution times]{%
           \includegraphics[scale=0.3]{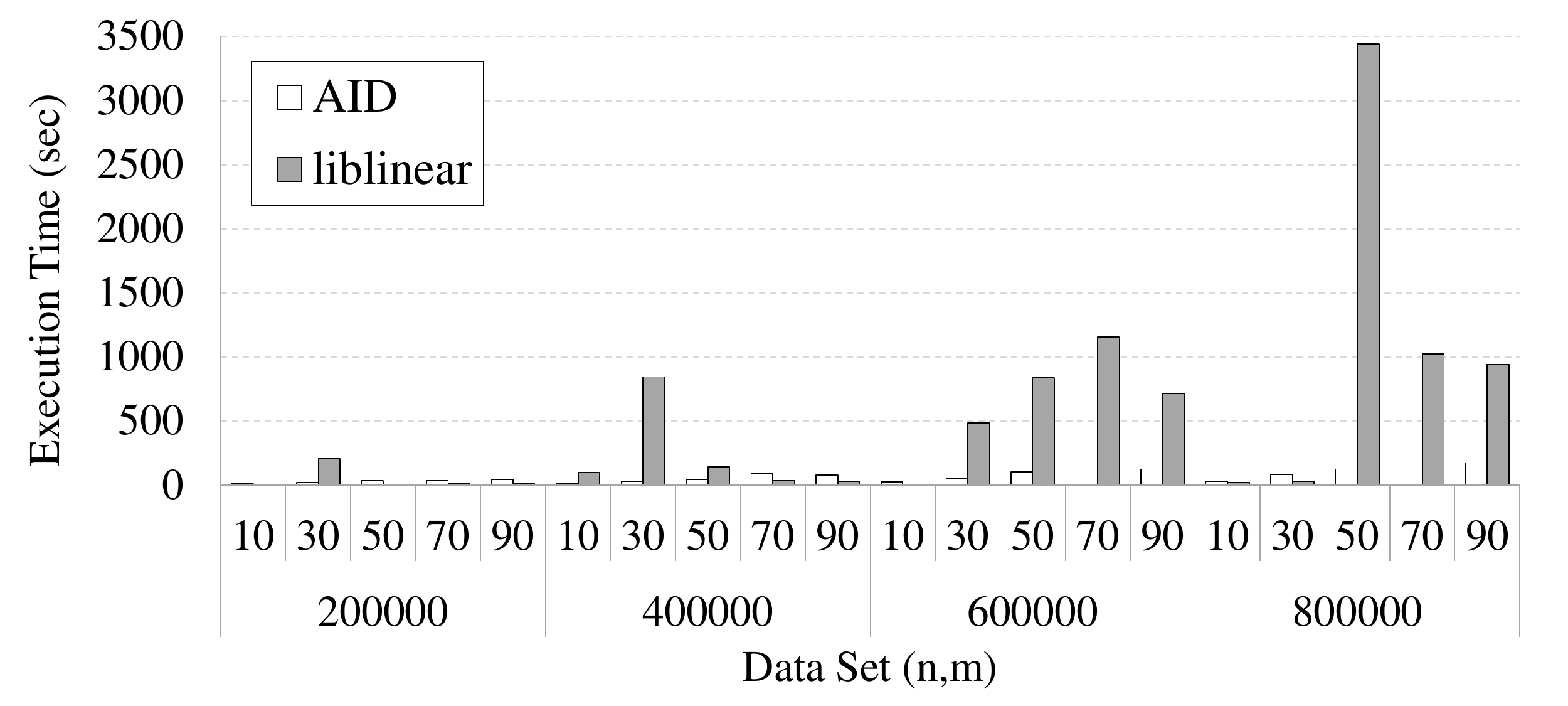} \label{fg_exp_svm_ibm_large_time}
        }\qquad
        \subfigure[$\rho$]{%
           \includegraphics[scale=0.3]{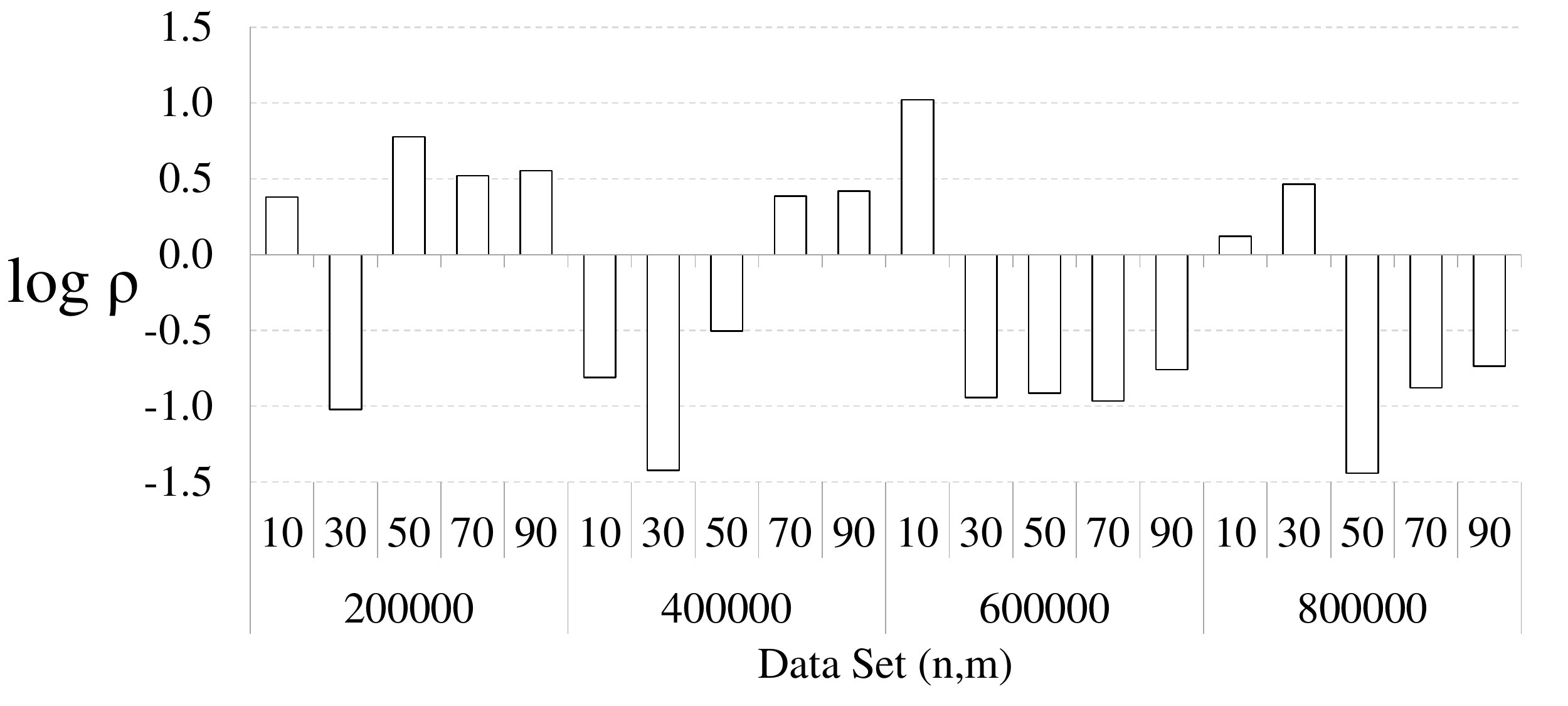} \label{fg_exp_svm_ibm_large_rho}
        }
    \end{center}
    \vspace{-0.5cm}
    \caption{Performance of AID for SVM against \textsf{liblinear} (Set 2 IBM data)}
    \label{fg_exp_svm_ibm_large}
\end{figure}

\subsection{Performance for \texorpdfstring{S{$^3$}VM}{S3VM}}
\label{section_exp_s3vm}

As pointed out in \cite{Chapelle-book}, it is difficult to solve MIQP model \eqref{def_SSSVM_alternative} optimally for large size data. In a pilot study, we observed that AID can optimally solve \eqref{def_SSSVM_alternative} for some data sets with hundreds of observations and several attributes. However, for the data sets in Table \ref{tab:s3vm_book_data}, AID was not able to terminate within a few hours. Also, no previous work in the literature provides computational result for the data sets by solving \eqref{def_SSSVM_alternative} directly. Therefore, in this experiment for S$^3$VM, we do not compare the execution times of AID and benchmark algorithms. Instead, we compare classification rates, which is the fraction of unlabeled observations that are correctly labeled by the algorithm. The comparison is only with algorithms for S$^3$VM from \cite{Chapelle-book, Li-Zhou:14}. See \cite{Chapelle-book} for comprehensive comparisons of other semi-supervised learning models. Because AID is not executed until optimality, we terminate after one and five iterations, which are denoted as AID1 and AID5, respectively.

In the computational experiment, we consider penalty parameters $(M_l,M_u) \in (5,1) \times \{10^0, 10^{-1}, 10^{-2},$ $ 10^{-3},10^{-4}\}$ and initial aggregation rate $r^0 \in \{0.01, 0.05 \}$. Also, we consider the following two techniques for unbalanced data.
\begin{enumerate}
\item \textit{balance constraint}: $\frac{\sum_{i \in I_u} (w x_i + b_i)}{|I_u|} = \frac{\sum_{i \in I_l} y_i}{|I_l|}$
\item \textit{balance cost}: Let $I_{l}^+$ and $I_l^-$ be the set of labeled observations with labels 1 and -1, respectively. In order to give larger weights for the minority class, we multiply $M_l$ by $\max \big\{1, \frac{|I_l^-|}{|I_l^+|} \big\}$ for $i \in I_l^-$ and $M_l$ by $\max \big\{1, \frac{|I_l^+|}{|I_l^-|} \big\}$ for $i \in I_l^+$.
\end{enumerate}

We first enumerate all possible combinations of the parameters and unbalanced data techniques and report the best classification rates of AID1 and AID5 with linear kernel. In Table \ref{tab:s3vm_exp_compare}, the results of AID1 and AID5 are compared against the benchmark algorithms. Recall that $l$ is the number of labeled observations. In Table \ref{tab:s3vm_exp_compare}, we present the average classification rates of AID1, AID5, and the benchmark algorithms in \cite{Chapelle-book} and \cite{Li-Zhou:14}. In the second row, TSVM with linear and RBF kernels are from \cite{Chapelle-book} and S4VM with linear and RBF kernels are from \cite{Li-Zhou:14}. The bold faced numbers represent that the corresponding algorithm gives the best classification rates. For data sets \textit{COIL2} and \textit{SecStr}, we only report the result for AID1 and AID5, as other algorithms do not provide results. From Table \ref{tab:s3vm_exp_compare}, we conclude that the classification rates of AID1 and AID5 are similar, while the execution times of AID1 are significantly smaller. Therefore, we conclude that AID1 is more efficient and we focus on AID1 for the remaining experiments.

% Table generated by Excel2LaTeX from sheet 'summary (2)'
\begin{table*}[htbp]
\scriptsize
  \centering
  \setlength{\tabcolsep}{3pt}
    \begin{tabular}{|r|rrrrrr|rrrrrr|}
    \hline
       & \multicolumn{6}{c|}{l=10}                      & \multicolumn{6}{c|}{l=100} \\ \hline
          & AID5  & AID1  & TSVM  & TSVM  & S4VM  & S4VM  & AID5  & AID1  & TSVM  & TSVM  & S4VM  & S4VM \\
    data  & linear & linear & linear & RBF   & linear & RBF   & linear & linear & linear & RBF   & linear & RBF \\ \hline
    Digit1 & \textbf{86.6\%} & \textbf{86.6\%} & 79.4\% & 82.2\% & 76.0\% & 63.6\% & 92.6\% & 92.0\% & 82.0\% & 93.9\% & 91.5\% & \textbf{94.9\%} \\
    USPS  & 80.1\% & \textbf{80.4\%} & 69.3\% & 74.8\% & 78.7\% & 80.1\% & 86.5\% & 87.1\% & 78.9\% & 90.2\% & 87.7\% & \textbf{91.0\%} \\
    BCI   & \textbf{52.9\%} & 52.2\% & 50.0\% & 50.9\% & 51.8\% & 51.3\% & \textbf{71.7\%} & 69.1\% & 57.3\% & 66.8\% & 70.5\% & 66.1\% \\
    g241c & \textbf{79.7\%} & \textbf{79.7\%} & 79.1\% & 75.3\% & 54.6\% & 52.8\% & \textbf{82.6\%} & \textbf{82.6\%} & 81.8\% & 81.5\% & 75.3\% & 74.8\% \\
    g241d & \textbf{59.5\%} & 49.9\% & 53.7\% & 49.2\% & 56.3\% & 52.7\% & 72.6\% & 59.3\% & 76.2\% & \textbf{77.6\%} & 72.2\% & 60.9\% \\
    Text  & 66.4\% & 66.4\% & \textbf{71.40\%} & 68.79\% & 52.1\% & 52.6\% & 75.77\% & 75.8\% & \textbf{77.69\%} & 75.48\% & 69.9\% & 54.1\% \\
    COIL2 & 90.9\% & 90.2\% & NA    & NA    & NA    & NA    & 87.0\% & 88.5\% & NA    & NA    & NA    & NA \\
    SecStr & 64.3\% & 62.4\% & NA    & NA    & NA    & NA    & 69.70\% & 65.8\% & NA    & NA    & NA    & NA \\ \hline
    \end{tabular}%
      \caption{Average classification rates of AID (best parameters) and the benchmark algorithms}
  \label{tab:s3vm_exp_compare}%
\end{table*}%

Next, we compare AID1 with TSVM with the RBF kernel from Table \ref{tab:s3vm_exp_compare}, as TSVM with the RBF kernel is the best among the benchmark algorithms from \cite{Chapelle-book} and \cite{Li-Zhou:14}. In Table \ref{tab:best_dab1_Vs_TSVM}, we observe that AID1 performs better than TSVM when $l = 10$, whereas the two algorithms tie when $l = 100$. Note that \textit{COIL2} and \textit{SecStr} are excluded from the comparison as results from the benchmark algorithms are not available.

\begin{table}[htbp]
\scriptsize
  \centering
    \begin{tabular}{|r|rr|rr|}
    \hline
      & \multicolumn{2}{c|}{l=10} & \multicolumn{2}{c|}{l=100} \\ \hline
          & AID1  & TSVM  & AID1  & TSVM \\
    data  & Linear & RBF   & Linear & RBF \\ \hline
    Digit1 & \textbf{86.6\%} & 82.2\% & 92.0\% & \textbf{93.9\%} \\
    USPS  & \textbf{80.4\%} & 74.8\% & 87.1\% & \textbf{90.2\%} \\
    BCI   & \textbf{52.2\%} & 50.9\% & \textbf{69.1\%} & 66.8\% \\
    g241c & \textbf{79.7\%} & 75.3\% & \textbf{82.6\%} & 81.5\% \\
    g241d & \textbf{49.9\%} & 49.2\% & 59.3\% & \textbf{77.6\%} \\
    Text  & 66.4\% & \textbf{68.8\%} & \textbf{75.8\%} & 75.5\% \\ \hline
    \# wins & 5     & 1     & 3     & 3 \\ \hline
    \end{tabular}%
  \caption{Average performances of AID (best parameters) and TSVM-RBF}
  \label{tab:best_dab1_Vs_TSVM}
\end{table}%

Recall that we report the best result by enumerating all parameters in Tables \ref{tab:s3vm_exp_compare} and \ref{tab:best_dab1_Vs_TSVM}. In the second experiment, we fix parameters $M_l  =5$, $M_u = 1$, $r^0 = 0.01$ for AID1. Since unbalanced data techniques significantly affect the result, we select balance cost for data sets \textit{USPS} and \textit{BCI} and balance constraint for data sets \textit{Digit1, g241c, g241d} and \textit{Text}. The result is compared against TSVM with the RBF kernel in Table \ref{tab:fixed_dab1_Vs_TSVM}. As we fix parameters, the classification rates of AID1 are worse than the rates in Table \ref{tab:best_dab1_Vs_TSVM}. However, AID1 is still competitive as AID1 and TSVM have the same number of wins.

% Table generated by Excel2LaTeX from sheet 'AID vs best'
\begin{table}[htbp]
\scriptsize
  \centering
    \begin{tabular}{|r|rr|rr|}
    \hline
      & \multicolumn{2}{c|}{l=10} & \multicolumn{2}{c|}{l=100} \\ \hline
          & AID1  & TSVM  & AID1  & TSVM \\
    data  & Linear & RBF   & Linear & RBF \\ \hline
    Digit1 & \textbf{83.4\%} & 82.2\% & 90.4\% & \textbf{93.9\%} \\
    USPS  & \textbf{80.4\%} & 74.8\% & 87.1\% & \textbf{90.2\%} \\
    BCI   & \textbf{51.2\%} & 50.9\% & \textbf{69.1\%} & 66.8\% \\
    g241c & 74.3\% & \textbf{75.3\%} & 76.0\% & \textbf{81.5\%} \\
    g241d & \textbf{49.9\%} & 49.2\% & 53.6\% & \textbf{77.6\%} \\
    Text  & 63.0\% & \textbf{68.8\%} & \textbf{75.8\%} & 75.5\% \\ \hline
    \# wins & 4     & 2     & 2     & 4  \\ \hline
    \end{tabular}%
  \caption{Average performances of AID (fixed parameters) and TSVM-RBF}
  \label{tab:fixed_dab1_Vs_TSVM}
\end{table}%

\section{Guidelines for Applying AID to Other Problems}
\label{section_discussion}

AID is designed to solve problems with a large number of entries that can be clustered well. From the experiments in Section \ref{section_compuatation}, we observe that AID is beneficial when data size is large. AID especially outperforms alternatives when the time complexity of the alternative algorithm is high. Recall that AID is applicable for problems following the form of \eqref{opt_problem_for_dab}. In this section, we discuss how AID can be applied for other optimization problems. We also discuss the behavior of AID as observed from additional computational experiments for LAD and SVM.

\subsection{Designing AID}
\subsubsection*{Aggregated data}
The main principle in generating aggregated data is to create each aggregated entry to represent the original entries in the corresponding cluster. However, the most important factor to consider when defining aggregated data is the interaction with the aggregated problem and optimality condition. The definition of aggregated entries plays a key role in deriving optimality and other important properties. In the proof of Proposition 1, $x$ is converted to $x^t$ in the second line of the equations. In the proof of Proposition 2, $x^{t-1}$ is converted to $x$ in the third line of the equations, which is subsequently converted into $x$ in the sixth line. Although any aggregated data definition representing the original data is acceptable, we found that the centroids work well for all of the three problems studied in this paper.

\subsubsection*{Aggregated problem}
The aggregated problem is usually the weighted version of the original problem, where weights are obtained as a function of cardinalities of the clusters. In this paper, we directly use the cardinalities as weights. We emphasize that weights are used to give priority to larger clusters (or associated aggregated entries), and defining the aggregated problem without weights is not recommended. Recall that defining the aggregated problem is closely related to the aggregated data definition and optimality condition. When the optimality condition is satisfied, the aggregated problem should give an optimal solution to the original problem. This can be proved by showing equivalent objective function values of aggregated and original problems at optimum. Hence, matching objective function values of the two problems should also be considered when designing the aggregated problem.

\subsubsection*{Optimality condition}
The optimality condition is the first step to develop when designing AID, because the optimality condition affects the aggregated data definition and problem. However, developing the optimality condition is not trivial. Properties at optimum for the given problem should be carefully considered. For example, the optimality condition for LAD is based on the fact that the residuals have the same sign if the observations are on the same side of the hyperplane. This allows us to separate the terms in the absolute value function when proving optimality. For SVM, we additionally use label information, because the errors also depend on the label. For S$^3$VM, we use even more information: the classification decision of unlabeled entries. Designing an optimality condition and proving optimality become non-trivial when constraints and variables are more complex. The proofs become more complex in the order of LAD, SVM, and S$^3$VM.

\subsection{Defining Initial Clusters}

\subsubsection*{Initial clustering algorithm}
From pilot computational experiments with various settings, we observed that the initial clustering accuracy is not the most important factor contributing to the performance of AID. This can be explained by the declustering procedure in early iterations. In the early iterations of AID, the number of clusters rapidly increases as most clusters violate the optimality condition. These new clusters are better than the k-means algorithm output using the same number of clusters in the sense that the declustered clusters are more likely to satisfy the optimality condition. Because the first few aggregated problems are usually small and can be solved quickly, the main concern in selecting an initial clustering algorithm is the computational time. Therefore, we recommend to use a very fast clustering algorithm to cluster the original entries approximately. For LAD and SVM, we use one iteration of k-means with two and one dimensional data, respectively. If one iteration of k-means is not precise enough then BIRCH \cite{Birch-Zhang:1996}, which has complexity of O($n$), may be considered.

\subsubsection*{Initial aggregation rate}
Avoiding a trivial aggregated problem is very important when deciding the initial aggregation rate. Depending on the optimization problem, this can restrict the minimum number of clusters or the number of aggregated entries. For example, we must have at least $m$ aggregate observations to have a nonzero-SSE model for LAD. Similar restrictions exist for SVM and S$^3$VM. 

We recommend to pick the smallest aggregation rate among all aggregation rates preventing trivial aggregated problems mentioned above because we can obtain better clusters (more likely to satisfy optimality condition) by solving smaller aggregated problems and by declustering. With the one iteration k-means setting, the number of clusters also affects the initial clustering time, as the time complexity is $O(|K^0|mn)$, where $|K^0|$ is the number of clusters at the beginning. In Figure \ref{fg_impact_of_r0}, we plot the solution time of AID for SVM for the IBM and RCV data sets with $n = 150,000$ and $m=10$. In each plot, the horizontal axis represents the initial aggregation rate $r^0$, and the left and right vertical axes are for the execution time and number of iterations, respectively. The stacked bars show the total time of AID, where each bar is split into the initialization time (clustering) and loop time (declustering and aggregated problem solving). The series of black circles represent the number of iterations of AID. As $r^0$ increases, we have a larger number of initial clusters. Hence, with the current initial clustering setting (one iteration of k-means), the initialization time (white bars) increases as $r^0$ increases. Although the number of iterations decreases in $r^0$, the loop time is larger when $r^0$ is large because the size of the aggregated problems is larger.

\begin{figure}[ht]
     \begin{center}
        \subfigure[IBM]{%
           \includegraphics[scale=0.3]{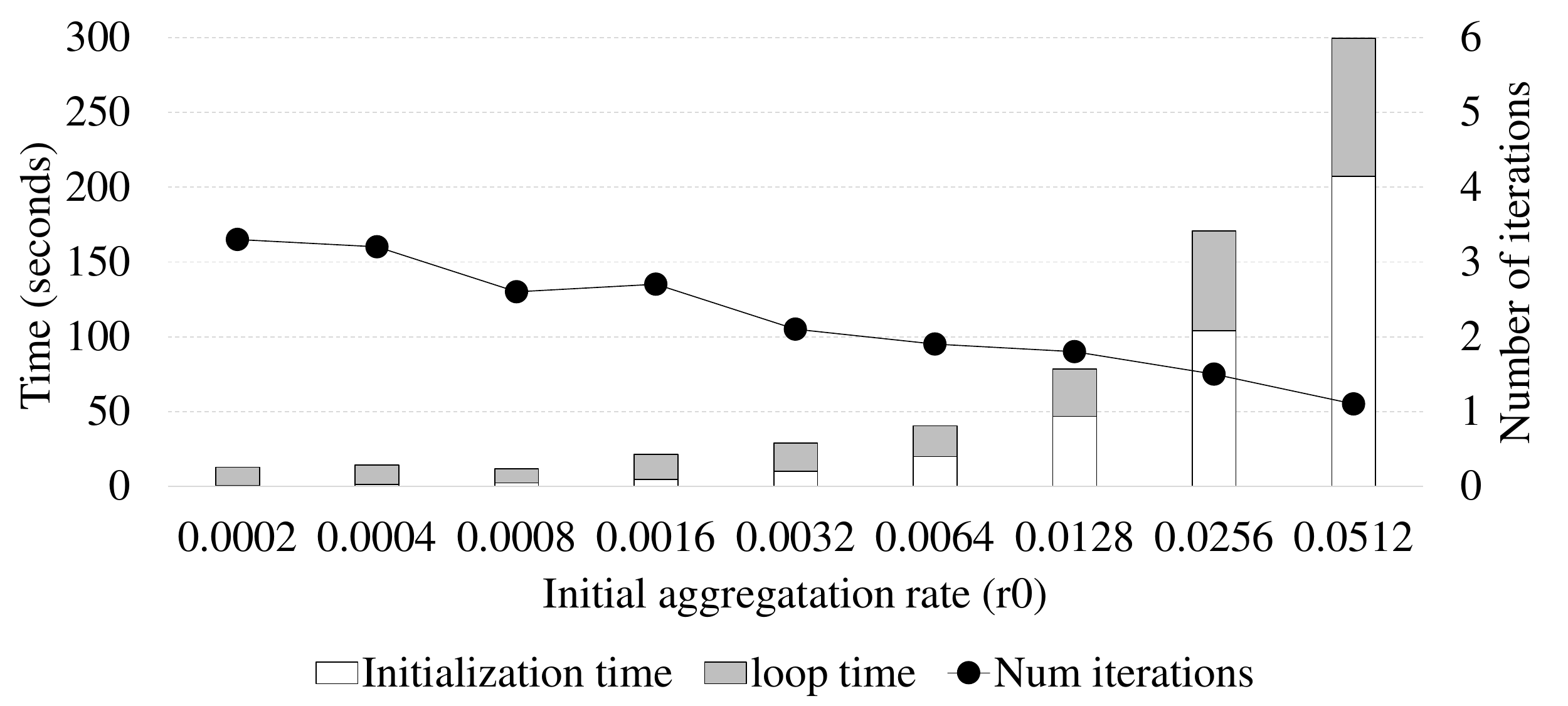} \label{fg_impact_of_r0_ibm}
        }\qquad
        \subfigure[RCV]{%
           \includegraphics[scale=0.3]{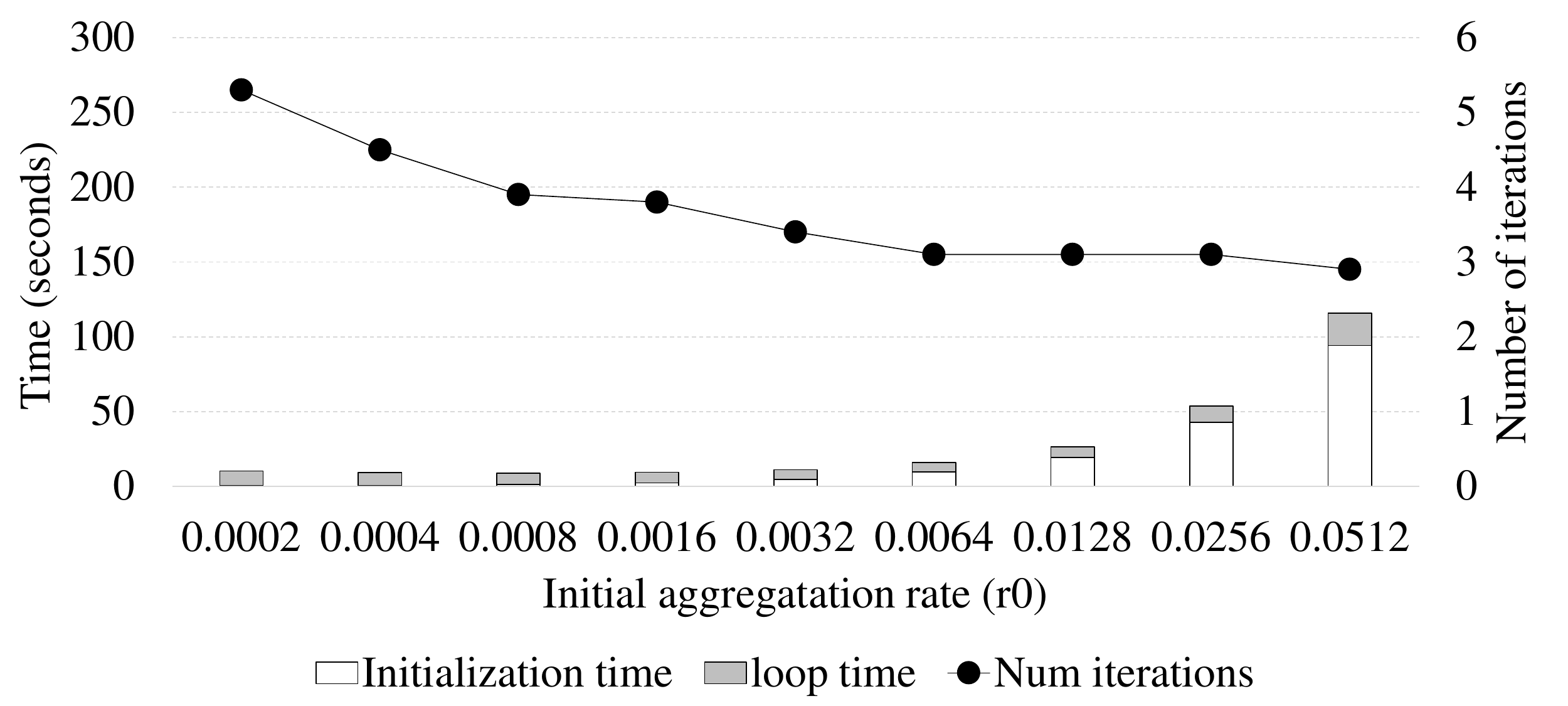} \label{fg_impact_of_r0_rcv}
        }
    \end{center}
    \vspace{-0.5cm}
    \caption{Impact of $r^0$ for Set 1 IBM and RCV data}
    \label{fg_impact_of_r0}
\end{figure}

\subsection{Aggregation Rates and Relative Location of Clusters}
Evgeniou and Pontil \cite{Evgeniou-Pontil:02} mention that clusters that are far from the SVM hyperplane tend to have large size, while the size of clusters that are near the SVM hyperplane is small. This also holds for AID for LAD, SVM, and S$^3$VM. We demonstrate this property with LAD. In order to check the relationship between the aggregation rate and the residual, we check 
\begin{enumerate}[noitemsep]
\item aggregation rates of entire clusters,
\item aggregation rates of clusters that are near the hyperplane (with residual less than median), and
\item aggregation rates of clusters that are far from the hyperplane (with residual greater than median).
\end{enumerate}
In Figure \ref{fg_exp_lad_matrix}, we plot the aggregation rate of entire clusters (series with $+$ markers), with residual less than the median (series with black circles), and with residuals greater than the median (series with empty circles). We plot the result for 9 instances in a 3 by 3 grid (3 values of $n$ and 3 values of $m$), where each sub-plot's horizontal and vertical axes are for iteration ($t$) and aggregation rates ($r^t$). We can observe that the aggregation rate of clusters that are near the hyperplane increases rapidly, while far clusters' aggregation rates stabilize after a few iterations. The aggregation rates of near clusters increase as $m$ increases.

\begin{figure}[ht]
\centering

    \includegraphics[scale=0.35]{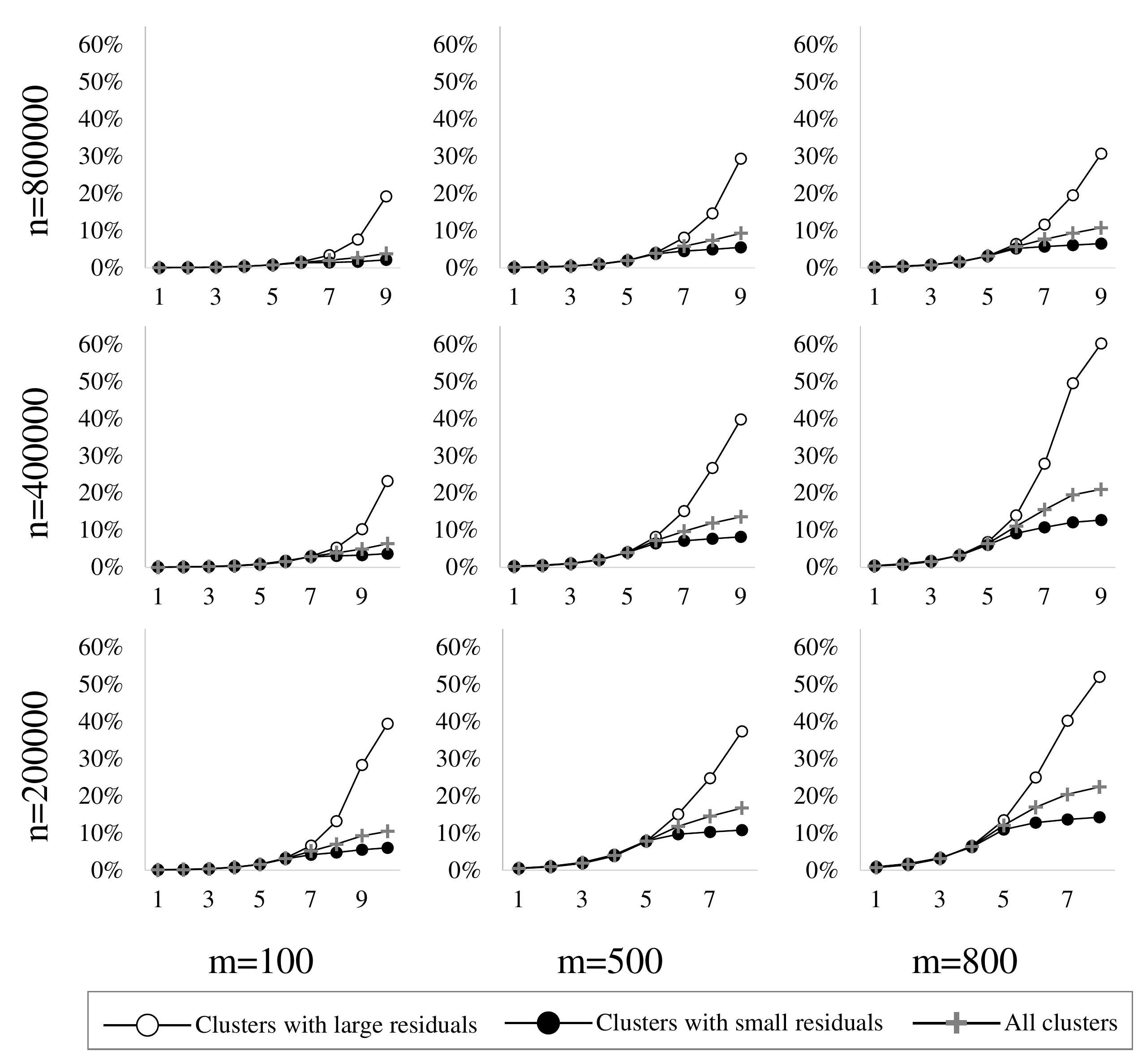} 
    \vspace{-0.2cm}
    \caption{Aggregation rate $r^t$ over iterations for LAD instances with various $n$ and $m$} \label{fg_exp_lad_matrix}
\end{figure}

\section{Conclusion}
We propose a clustering-based iterative algorithm and apply it to common machine learning problems such as LAD, SVM, and S$^3$VM. We show that the proposed algorithm AID monotonically converges to the global optimum (for LAD and SVM) and outperforms the current state-of-the-art algorithms when data size is large. The algorithm is most beneficial when the time complexity of the optimization problem is high, so that solving smaller problems many times is affordable.

\section{Acknowledgment}
We appreciate the referees for their helpful comments that strengthen the paper.

%\bibliography{aid}  % sigproc.bib is the name of the Bibliography in this case
 \bibliographystyle{abbrv}

\appendix

\section{Execution Time of AID for S3VM}

In this section, we present the execution times of AID1 and AID5 in Table \ref{tab:s3vm_time}. Although we used two techniques for unbalanced data, we did not find a big difference between the execution times of the two. However, we observed the execution times change in $M_l$. Hence, we present the average execution time of AID1 and AID5 over $M_l$ values. Since we had a time limit of 1,800 seconds, all values do not exceed the limit. This implies that AID5 may have terminated before 5 iterations due to the time limit. From the table, we observe that the first iteration of AID is very quick (except \textit{Text} data) while the remaining four iterations take longer time. The execution time of \textit{Text} is high due to large $m$, whereas the execution time of \textit{SecStr} is high due to large $n$.
% Table generated by Excel2LaTeX from sheet 'time table 151015'
\begin{table}[htbp]
\scriptsize
  \centering
    \begin{tabular}{|r|r|rrrrr|rrrrr|}
    \hline
          &       & \multicolumn{5}{c|}{l=10}              & \multicolumn{5}{c|}{l=100} \\ \hline
    algo  & name  & 1     & 0.1   & 0.01  & 0.001 & 0.0001 & 1     & 0.1   & 0.01  & 0.001 & 0.0001 \\ \hline
    AID5  & Digit1 & 776   & 633   & 545   & 15    & 12    & 935   & 962   & 770   & 9     & 28 \\
          & USPS  & 555   & 542   & 503   & 471   & 8     & 929   & 875   & 947   & 508   & 8 \\
          & BCI   & 393   & 394   & 456   & 246   & 2     & 605   & 776   & 753   & 20    & 2 \\
          & g241c & 511   & 578   & 492   & 526   & 25    & 916   & 979   & 946   & 864   & 42 \\
          & g241d & 357   & 372   & 351   & 281   & 50    & 877   & 987   & 894   & 813   & 87 \\
          & Text  & 649   & 628   & 342   & 178   & 163   & 892   & 753   & 244   & 181   & 167 \\
          & COIL2 & 882   & 889   & 895   & 870   & 862   & 1,042  & 1,085  & 1,090  & 1,043  & 959 \\
          & SecStr & 1,047  & 1,190  & 848   & 603   & 196   & 1,338  & 996   & 861   & 1,031  & 49 \\ \hline
    AID1  & Digit1 & 3     & 2     & 2     & 6     & 12    & 1     & 2     & 4     & 8     & 28 \\
          & USPS  & 2     & 2     & 2     & 2     & 3     & 3     & 3     & 3     & 4     & 5 \\
          & BCI   & 0     & 1     & 0     & 0     & 1     & 1     & 1     & 2     & 1     & 2 \\
          & g241c & 3     & 3     & 3     & 3     & 6     & 8     & 7     & 7     & 8     & 9 \\
          & g241d & 2     & 2     & 2     & 2     & 3     & 5     & 7     & 7     & 8     & 5 \\
          & Text  & 212   & 186   & 187   & 175   & 163   & 290   & 273   & 186   & 181   & 167 \\
          & COIL2 & 2     & 2     & 2     & 2     & 2     & 4     & 4     & 4     & 4     & 5 \\
          & SecStr & 9     & 10    & 9     & 9     & 10    & 14    & 14    & 14    & 12    & 14 \\ \hline
    \end{tabular}%
    \caption{Average execution times (in seconds) of AID for S$^3$VM} \label{tab:s3vm_time}%
\end{table}%

\section{AID for SVM with Direct Use of Kernels}
\label{section_appendix_B_aid_kernel}
In this section, we consider \eqref{def_SVM_kernel} with an arbitrary kernel function $\mathcal{K}(x_i,x_j) = \langle \phi(x_i), \phi(x_j) \rangle$, and we develop a procedure based on \eqref{def_SVM_dual_kernel}. The majority of the concepts remain the same from Section 3.2 and thus we only describe the differences.

For initial clustering, instead of clustering in Euclidean space, we need to cluster in the transformed kernel space. This can be done by the kernel k-means algorithm \cite{scholkopf1998nonlinear}. In the plain k-means setting, the optimization problem is
\begin{center}
$\min \sum_{k \in K^t} \sum_{i \in C_k^t} \| x_i - \mu_k \|^2$,
\end{center}
where $\mu_k = \frac{\sum_{i \in C_k^t} x_i}{|C_k^t|}$ is the center of cluster $k \in K^t$. In the kernel k-means, the optimization problem is
\begin{center}
$\min \sum_{k \in K^t} \sum_{i \in C_k^t} \| \phi(x_i) - \mu_k \|^2$,
\end{center}
where $\mu_k = \frac{\sum_{i \in C_k^t} \phi(x_i)}{|C_k^t|}$ is the center of cluster $k \in K^t$. Note that, despite we use $\phi$ in the formulation, the kernel k-means algorithm does not explicitly use $\phi$ and only use the corresponding kernel function $\mathcal{K}$. As an alternative, we may cluster in the Euclidean space, if we follow the initial clustering procedure from Section \ref{section_compuatation}. In the experiments, we sampled a small number of original entries to obtain the initial hyperplane. If we use a kernel when obtaining the initial hyperplane, the distance $d \in \mathbb{R}^n$ can approximate the relative locations in the kernel space.

Next, let us define aggregated data as
\begin{center}
$\phi(x_k^t) = \frac{\sum_{i \in C_k^t} \phi(x_i)}{|C_k^t|}$ and $y_k^t = \frac{\sum_{i \in C_k^t} y_i}{|C_k^t|} \in \{-1,1\}$. 
\end{center}
Note that we will not explicitly use $x_k^t$. Further, even though we define $\phi(x_k^t)$, it is not explicitly calculated in our algorithm. However, $\phi(x_k^t)$ will be used for all derivations. Aggregated problem $F^t$ is defined similarly. 
\begin{equation}
\label{def_SVM2_kernel}
\begin{split}
& F^t = \min_{w^t, b^t, \xi^t} \frac{1}{2} \|w^t \|^2 + M \sum_{k \in K^t} |C_k^t|  \xi_k^t \\
& \qquad \quad \mbox{s.t.} \quad y_k^t [w^t \phi(x_k^t) + b^t] \geq 1 - \xi_k^t, \xi_k^t \geq 0, k \in K^t,
\end{split}
\end{equation}
By replacing $x_i$ and $x_k^t$ with $\phi(x_i)$ and $\phi(x_k^t)$, respectively, in all of the derivations and definitions in Section 3.2, it is trivial to see that all of the findings hold.

We next show how to perform the required computations without access to $x_k^t$ and $\phi(x_k^t)$. To achieve this goal, we work with the dual formulation and kernel function.

Let us consider the dual formulation \eqref{def_SVM_dual_kernel}. The corresponding aggregated dual problem is written as 
\begin{equation}
\label{def_SVM_dual_kernel_agg}
\begin{split}
& \max \sum_{k \in K^t} \alpha_k^t - \frac{1}{2} \sum_{k,q \in K^t} \mathcal{K}(x_k^t,x_q^t) \alpha_k^t \alpha_q^t y_k y_q \\
& \mbox{s.t.} \quad \sum_{k \in K^t}  \alpha_k^t y_k = 0,\\
& \qquad 0 \leq \alpha_k^t \leq |C_k^t| M, k \in K^t.
\end{split}
\end{equation}

Recall that we are only given $\mathcal{K}(x_i,x_j)$ for $i,j \in I$. We derive
\begin{center}
$\mathcal{K}(x_k^t,x_q^t) = \langle \phi(x_k^t), \phi(x_q^t) \rangle = \Big\langle \frac{\sum_{i \in C_k^t} \phi(x_i)}{|C_k^t|}, \frac{\sum_{j \in C_q^t} \phi(x_j)}{|C_q^t|} \Big\rangle = \frac{\sum_{i \in C_k^t} \sum_{j \in C_q^t} \langle \phi(x_i), \phi(x_j) \rangle }{|C_k^t||C_q^t|} = \frac{\sum_{i \in C_k^t} \sum_{j \in C_q^t} \mathcal{K}(x_i,x_j)}{|C_k^t||C_q^t|}$,
\end{center}
where the first equality is by the definition of $\phi(x_k^t)$. Hence, $\mathcal{K}(x_k^t,x_q^t)$ can be expressed in terms of $\mathcal{K}(x_i,x_j)$'s and cluster information. Let $\bar{\alpha}^t$ be an optimal solution to \eqref{def_SVM_dual_kernel_agg}. Then, for original observation $i \in I$, we can define a classifier
\begin{center}
$f(x_i) = \sum_{k \in K^t} \bar{\alpha}_k^t y_k \langle \phi(x_k^t), \phi(x_i) \rangle + \bar{b}^t = \sum_{k \in K^t} \frac{\bar{\alpha}_k^t y_k \sum_{j \in C_k^t} \mathcal{K}(x_j,x_i)}{|C_k^t|} + \bar{b}^t$.
\end{center}
Note that $f(x_i)$ is equivalent to $w^t \phi(x_i) + b^t$. Therefore, the declustering procedure and optimality condition in Section \ref{subsection_SVM} can be used. For example, $C_k^t$ is divided into $C_{k+}^t = \{ i \in C_k^t| 1- y_i f(x_i) > 0\}$ and $C_{k-}^t = \{ i \in C_k^t| 1- y_i f(x_i) \leq 0\}$.

\section{Results for the Original IBM Dataset}

In this section, we present the result of solving the original large size IBM classification data set with 1.3 million observations and 342 attributes by AID with \textsf{libsvm} and \textsf{liblinear}. We use penalty $M = 0.1$ and $r^0 = \frac{1.1m}{n} \approx 0.057\%$ for both algorithms. In Table \ref{tab:svm_result_ibm_original}, the iteration information of AID with \textsf{libsvm} and \textsf{liblinear} are presented. The initial clustering times are 900 seconds and 150 seconds for \textsf{libsvm} and \textsf{liblinear}, respectively. The execution times are approximately 20 minutes for both algorithms and AID terminates after 8 or 9 iterations with the optimality gap less than 0.1\% and aggregation rates $r^T$ of 2.93\% and 4.58\%. In early iterations $(t \leq 5)$, the values of $r^t$ are almost doubled in each iteration, which implies that almost all clusters violate the optimality conditions and are declustered. In the later iterations, it takes longer time to solve the aggregated problem in each iteration and the optimality gap is rapidly decreasing in $t$, because the clusters are finer but the number of clusters is larger. For both algorithms, we observe that the training classification rates become stable after several iterations although the optimality gaps are still large.

% Table generated by Excel2LaTeX from sheet 'Sheet1'
\begin{table}[htbp]
\scriptsize
  \centering
    \begin{tabular}{|r|rrrrrrr|rrrrrrr|}
\hline
          & \multicolumn{7}{c|}{AID with \textsf{libsvm} }                            & \multicolumn{7}{c|}{AID with \textsf{liblinear}} \\ \hline
       $t$  & $r^t$ & $F_t$     & $E_{best}$ & Opt  & Iter & Cum & Train & $r^t$ & $F_t$     & $E_{best}$ & Opt  & Iter & Cum & Train \\
          &       &       &       & gap   & time  & time  & rate  &       &       &       & gap   & time  & time  & rate \\ \hline
    0     & 0.06\% & 19  & 139,971 & 1000\%+ & 14    & 14    & 66.3\% & 0.06\% & 19  & 140,349 & 1000\%+ & 13    & 13    & 66.3\% \\
    1     & 0.11\% & 248 & 139,971 & 1000\%+ & 22    & 35    & 64.1\% & 0.11\% & 256 & 140,349 & 1000\%+ & 30    & 43    & 64.0\% \\
    2     & 0.22\% & 687 & 89,104 & 1000\%+ & 23    & 58    & 75.4\% & 0.22\% & 687 & 93,743 & 1000\%+ & 30    & 74    & 74.3\% \\
    3     & 0.40\% & 941 & 27,713 & 1000\%+ & 27    & 85    & 91.8\% & 0.40\% & 934 & 44,699 & 1000\%+ & 38    & 112   & 85.6\% \\
    4     & 0.68\% & 1,079 & 17,366 & 1000\%+ & 38    & 123   & 96.9\% & 0.68\% & 1,043 & 20,870 & 1000\%+ & 52    & 163   & 95.1\% \\
    5     & 1.12\% & 1,135 & 8,038  & 608\% & 60    & 183   & 99.4\% & 1.14\% & 1,125 & 9,389  & 734\% & 53    & 216   & 99.4\% \\
    6     & 1.88\% & 1,182 & 1,616 & 37\%  & 114   & 296   & 99.6\% & 1.94\% & 1,179 & 2,138 & 81\%  & 49    & 265   & 99.6\% \\
    7     & 2.73\% & 1,203 & 1,227 & 2.00\% & 234   & 531   & 99.6\% & 2.99\% & 1,205 & 1,218 & 1.04\% & 71    & 336   & 99.5\% \\
    8     & 2.93\% & 1,208 & 1,208 & 0.04\% & 697   & 1,227  & 99.5\% & 3.29\% & 1,209 & 1,214 & 0.42\% & 362   & 698   & 99.5\% \\
    9     & \multicolumn{7}{c|}{}                                  & 4.58\% & 1,210 & 1,210 & 0.03\% & 518   & 1,216  & 99.5\% \\ \hline
    \end{tabular}%
      \caption{Number of iterations of AID for the original IBM data ($n=1.3$ million and $m=342$)}
\label{tab:svm_result_ibm_original}%
\end{table}%

\end{document}